\newtheorem{lemma}{Lemma}
\newtheorem{proposition}{Proposition}
\newtheorem{theorem}{Theorem}
\newtheorem{definition}{Definition}
\theoremstyle{remark} 
\newtheorem{remark}{Remark}
\newcommand{\E}{\mathbb{E}}
\newcommand{\pr}{\mathbb{P}}
\newcommand{\R}{\mathbb{R}}
\newcommand{\Ss}{\mathcal{S}} 
\newcommand{\Aa}{\mathcal{A}}
\newcommand{\ScA}{\Ss\times\Aa}
\newcommand{\Pp}{\mathcal{P}}
\newcommand{\V}{\mathcal{V}}
\newcommand{\N}{\mathcal{N}}
\newcommand{\cO}{\mathcal{O}}
\renewcommand{\epsilon}{\varepsilon}
\newcommand{\abs}[1]{\left| #1 \right|}%
\newcommand{\norm}[1]{\left\| #1 \right\|}%
\DeclareMathOperator*{\argmax}{arg\,max}
\begin{document}

\runningtitle{Sample Complexity of  Robust Reinforcement Learning with a Generative Model}

\twocolumn[

\aistatstitle{Sample Complexity of  Robust Reinforcement Learning \\with a Generative Model}

\aistatsauthor{ Kishan Panaganti \And Dileep Kalathil }

\aistatsaddress{ Texas A\&M University \And Texas A\&M University } ]

\begin{abstract}
The  Robust Markov Decision Process (RMDP) framework focuses on designing control policies that are robust  against the parameter uncertainties due to the mismatches between the simulator model and real-world settings. An RMDP problem is  typically formulated as a max-min problem, where the objective is to  find the policy that  maximizes the value function for the worst possible model that lies in an uncertainty set around a nominal model.  The standard robust dynamic programming approach requires the knowledge of the nominal model for computing the optimal robust policy. In this work, we propose a model-based reinforcement learning (RL) algorithm for learning an $\epsilon$-optimal robust policy when the nominal model is unknown.  We consider three different forms of uncertainty sets, characterized by the total variation distance, chi-square divergence, and KL divergence. For each of these uncertainty sets, we give a precise characterization of the sample complexity of  our proposed algorithm. In addition to the sample complexity results, we also present a formal analytical argument on the benefit of   using robust policies.  Finally, we demonstrate the  performance of our algorithm on two benchmark problems. 
\end{abstract}



\section{Introduction}

Reinforcement Learning (RL) algorithms typically require a large number of data samples to learn a control policy, which  makes the training of RL algorithms directly on the real-world systems expensive and potentially dangerous. This problem is typically avoided by training the RL algorithm on a simulator and transferring the trained policy to the real-world system.  However, due to multiple reasons such as  the approximation errors incurred while modeling,  changes in the real-world parameters over time and  possible adversarial disturbances in the real-world, there will be inevitable mismatches between the simulator model  and the real-world system. For example, the standard simulator  settings of the sensor noise,  action delays, friction, and  mass of a mobile robot can be different from that of the actual real-world robot. This mismatch between the simulator and real-world  model parameters,  often called `simulation-to-reality gap', can significantly degrade the real-world performance of the RL  algorithms trained on a simulator model.

Robust Markov Decision Process (RMDP)   addresses the \textit{planning} problem of computing the optimal policy  that is robust against the parameter mismatch between the simulator  and real-world  system. The RMDP framework was first introduced in   \citep{iyengar2005robust, nilim2005robust}. The RMDP problem has been analyzed extensively  in the literature \citep{xu2010distributionally, wiesemann2013robust, yu2015distributionally,   mannor2016robust, russel2019beyond}, considering  different types of uncertainty set and computationally efficient algorithms.   However, these works are limited to the {planning} problem, which assumes  the knowledge of the system. Robust RL algorithms for learning the optimal robust policy have also been proposed \citep{roy2017reinforcement,panaganti2020robust}, but they only provide asymptotic convergence guarantees. Robust RL problem has also been addressed using deep RL methods \citep{pinto2017robust, derman2018soft, derman2020bayesian, Mankowitz2020Robust, zhang2020robust}. However, these works are empirical in nature and do not provide any theoretical guarantees for the learned policies. In particular, there are few works that provide \textit{robust RL algorithms with  provable (non-asymptotic)  finite-sample performance guarantees.}

In this work, we address the  problem of developing a model-based robust RL algorithm with provable finite-sample guarantees on its performance, characterized by the metric of sample complexity in a PAC (probably approximately correct) sense. The RMDP framework assumes  that the real-world model  lies within some uncertainty set $\Pp$ around a nominal  (simulator) model $P^{o}$. The goal is to learn a policy that performs the best under the worst possible model in this uncertainty set.  We do not assume that the algorithm knows the exact simulator model (and hence the exact uncertainty set). Instead, similar to the standard (non-robust) RL setting  \citep{singh1994upper, AzarMK13, haskell2016empirical, sidford2018near, agarwal2020model,li2020breaking, kalathil2021empirical}, we assume that the algorithm  has access to a generative sampling model that can generate next-state samples for all state-action pairs according to the nominal simulator model. In this context,  we answer the following important question: \textit{How many samples from the nominal simulator model  do we need to learn an $\epsilon$-optimal robust policy with high probability?}

\begin{table*}[t]
\caption{Comparison of the sample complexities of different uncertainty sets and the best known result in the non-robust setting \citep{li2020breaking}. Here $|\Ss|$ and $|\Aa|$ are the cardinality of the state and action spaces, $c_r$ is the robust RL problem parameter, and $\gamma$ is the discount factor. We consider the optimality gap $\epsilon \in (0, c/(1-\gamma))$, where $c>0$ is a constant.  We refer to Section \ref{subsec:sample-complexity} for further details.}
\label{tab:compare}
   \begin{center}
\begin{sc}
\begin{tabular}{lccccr}
\toprule
Uncertainty set & TV & Chi-square & KL & Non-robust \vspace{0.2cm} \\ 
Sample Complexity & $\cO( \frac{|\Ss|^{2}  |\Aa|}{(1-\gamma)^4 \epsilon^2})$ & $\cO( \frac{c_r |\Ss|^{2}  |\Aa|}{(1-\gamma)^4 \epsilon^2})$ & $\cO( \frac{|\Ss|^{2}  |\Aa| \exp(1/(1-\gamma))}{c_r^2 (1-\gamma)^4 \epsilon^2})$ & $\cO( \frac{|\Ss|  |\Aa|}{(1-\gamma)^3 \epsilon^2})$\\
\bottomrule
\end{tabular}
\end{sc}
\end{center}
    \vspace{-0.1in}
\end{table*}

\textbf{Our contributions:} The main contributions of our work are as follows:

$(1)$  We propose a model-based robust RL algorithm, which we call the robust empirical value iteration algorithm (REVI), for learning an approximately optimal robust  policy.  We consider three different classes of RMDPs with  three different uncertainty sets: $(i)$ Total Variation (TV) uncertainty set, $(ii)$ Chi-square uncertainty set, and $(iii)$ Kullback-Leibler (KL)  uncertainty set, each  characterized by its namesake distance measure.  Robust RL problem is much more challenging than the standard (non-robust) RL problems due to the inherent nonlinearity associated with the robust dynamic programming and the resulting unbiasedness of the empirical estimates. We overcome this challenge analytically by developing a series of upperbounds that are amenable to using concentration inequality results (which are typically useful only in the  unbiased setting), where  we exploit  a uniform concentration bound with a covering number argument. We rigorously characterize the sample complexity of the proposed algorithm for each of these uncertainty sets. We also make a precise comparison with the sample complexity of non-robust RL.  

$(2)$ We  give a formal argument for the need for using a robust policy when the simulator model is different from the  real-world model. More precisely, we analytically address the question \textit{`why do we need robust policies?'}, by showing that the worst case performance of a non-robust policy can be arbitrarily bad (as bad as a random policy) when compared to that of a robust policy. While the need for robust policies have been discussed in the literature qualitatively, to the best of our knowledge, this is the first work that gives an analytical answer to the above question.  

$(3)$ Finally, we demonstrate the performance of our REVI algorithm in two experiment settings and for two different uncertainty sets.  In each setting, we show that the  policy learned by our proposed REVI algorithm is indeed robust against the changes in the model parameters. We also illustrate the convergence of our algorithm with respect to the number of samples and the number of iterations.

\subsection{Related Work}

\textbf{Robust RL:} An RMDP setting  where  some state-action pairs are adversarial and the others  are stationary was considered by \citep{lim2013reinforcement}, who proposed an online algorithm to address this problem. An approximate robust dynamic programming approach with linear function approximation was proposed in  \citep{tamar2014scaling}. State aggregation and kernel-based function approximation for robust RL were studied  in \citep{petrik2014raam, lim2019kernel}.  \citep{roy2017reinforcement} proposed a robust version of  the Q-learning algorithm. \citep{panaganti2020robust} developed a least squares policy iteration approach to learn the optimal robust policy using linear function approximation with provable guarantees. A soft robust RL algorithm was proposed in  \citep{derman2018soft} and a maximum a posteriori policy optimization approach was used in \citep{Mankowitz2020Robust}.  While the above mentioned works make interesting contributions to the area of robust RL, they focus either on giving asymptotic performance guarantees or on the empirical performance without giving provable guarantees. In particular, they do not provide provable guarantees  on the  finite-sample performance  of the robust RL algorithms.


The closest to our work is \citep{zhou2021finite}, which  analyzed the sample complexity with a KL  uncertainty set. Our work is different in two significant aspects: Firstly, we consider the total variation uncertainty set and chi-square uncertainty set, in addition to the KL uncertainty set. The analysis for these uncertainty sets are very challenging and significantly  different from that of the  KL uncertainty set. Secondly, we give a more precise characterization of the sample complexity bound for the  KL uncertainty set by clearly specifying the exponential dependence on $(1-\gamma)^{-1}$, where $\gamma$ is the discount factor, which was left unspecified in  \citep{zhou2021finite}. 

While this paper was under review, we were notified of a concurrent work \citep{yang2021towards}, which also provides similar sample complexity bounds for robust RL. Our proof technique is significantly  different from their work. Moreover,  we also provide open-source experimental results that illustrate the performance of our robust RL algorithm.

\textbf{Other related works:} Robust control is a well-studied area \citep{zhou1996robust, dullerud2013course} in the classical control theory. Recently, there are some interesting works that address the robust RL  problem using this framework, especially focusing on the linear quadratic regulator setting \citep{zhang2020policy}. Our framework of robust MDP is significantly different from this line of work. Risk sensitive RL algorithms \citep{borkar2002q} and adversarial RL algorithms \citep{pinto2017robust}  also address the robustness problem implicitly. Our approach is different from these works also.

\textbf{Notations:} For any set $\mathcal{X}$, $|\mathcal{X}|$ denotes its cardinality. For any vector $x$, $\|x\|$ denotes its infinity norm $\|x\|_{\infty}$.

\section{Preliminaries: Robust Markov Decision Process}
\label{sec:formulation}

A Markov Decision Process (MDP) is a tuple $(\Ss, \Aa, r, P, \gamma)$, where $\Ss$ is the  state space, $\Aa$ is the  action space,  $r: \Ss\times \Aa\rightarrow \R$ is the reward function, and $\gamma \in (0, 1)$ is the discount factor. The transition probability function $P_{s,a}(s')$ represents the probability of transitioning to state $s'$ when  action $a$ is taken at state $s$. $P$ is also called the  the model of the system.   We consider a finite MDP setting where  $|\Ss|$ and $|\Aa|$ are finite. We will also assume that $r(s,a) \in [0,1]$, for all $(s,a) \in \ScA$, without loss of generality.

A  (deterministic)  policy $\pi$ maps each state to an action. The value of a policy $\pi$ for an MDP with model $P$, evaluated at state $s$ is given by
\begin{align}
\label{eq:non-robust-Vpi}
V_{\pi, P}(s) = \E_{\pi, P}[\sum^{\infty}_{t=0} \gamma^{t} r(s_{t}, a_{t}) ~|~ s_{0} = s], \end{align}
where $a_{t} = \pi(s_{t}), s_{t+1} \sim P_{s_t,a_t}(\cdot)$. The optimal  value function $V^{*}_{P}$ and the  optimal policy  $\pi^{*}_{P}$ of an MDP with the model $P$ are defined as 
\begin{align}
\label{eq:non-robust-optimal}
    V^{*}_{P} = \max_{\pi} V_{\pi, P},\quad \pi^{*}_{P} = \argmax_{\pi} V_{\pi, P}. 
\end{align}


\textbf{Uncertainty set:} Unlike the standard  MDP  which considers a single model (transition probability function),  the  RMDP formulation considers a set of models. We call this set as the \textit{uncertainty set} and denote it as $\Pp$.  We  assume that the set $\Pp$ satisfies the standard \textit{rectangularity condition} \citep{iyengar2005robust}.   We note that a similar {uncertainty set} can be considered for the reward function at the expense of additional notations. However, since the analysis will be similar and the samples complexity guarantee will be identical upto a constant, without loss of generality,  we assume that the reward function  is known and deterministic.  We specify a robust MDP as a tuple $M = (\Ss, \Aa, r, \Pp, \gamma)$.

The uncertainty set $\mathcal{P}$ is typically defined as
\begin{align}
    \label{eq:uncertainty-set}
    \mathcal{P} = \otimes\, \mathcal{P}_{s,a},~\text{where,}~
    \mathcal{P}_{s,a} =  \{ P_{s,a} \in [0,1]^{|\Ss|} ~:~ \nonumber \\ D(P_{s,a}, P^o_{s,a}) \leq c_{r}, \sum_{s'\in\Ss} P_{s,a}(s') = 1  \},
\end{align}
where  $P^{o} = (P^o_{s,a}, (s, a) \in \ScA)$ is the nominal transition probability function, $c_r>0$ indicates the level of robustness,  and $D(\cdot, \cdot)$ is a distance metric between two probability distributions. In the following, we call $P^{o}$ as the nominal model.  In other words, $\Pp$ is the set of all valid transition probability  functions in the neighborhood of the nominal model $P^{o}$, where the neighborhood is defined using the distance metric $D(\cdot, \cdot)$.  We note that the radius $c_{r}$ can depend on the state-action pair $(s,a)$. We omit this to reduce the notation complexity. We also note that for $c_r\downarrow 0$, we recover the non-robust regime.

We consider three different uncertainty sets corresponding to three different distance metrics $D(\cdot, \cdot)$.  

1. \textit{Total Variation (TV) uncertainty set ($\mathcal{P}^{\mathrm{tv}} $)}: We define $\mathcal{P}^{\mathrm{tv}}  = \otimes \mathcal{P}^{\mathrm{tv}}_{s,a}$, where  $\mathcal{P}^{\mathrm{tv}}_{s,a}$ is defined as in \eqref{eq:uncertainty-set} using the total variation distance 
\begin{align}
\label{eq:D-L1}
D_{\mathrm{tv}}(P_{s,a}, P^{o}_{s,a}) = (1/2) \|P_{s,a} - P^{o}_{s,a} \|_{1}.
\end{align}
2.  \textit{Chi-square uncertainty set ($\mathcal{P}^{\mathrm{c}}$)}:   We define $\mathcal{P}^{\mathrm{c}}  = \otimes \mathcal{P}^{\mathrm{c}}_{s,a}$, where  $\mathcal{P}^{\mathrm{c}}_{s,a}$ is defined as in \eqref{eq:uncertainty-set} using the Chi-square distance 
\begin{align}
\label{eq:D-Chi}
D_{\mathrm{c}}(P_{s,a}, P^{o}_{s,a}) =  \sum_{s'\in\Ss} \frac{(P_{s,a}(s') - P^{o}_{s,a}(s'))^2}{P^{o}_{s,a}(s')}.
\end{align}
3. \textit{Kullback-Leibler (KL) uncertainty set ($\mathcal{P}^{\mathrm{kl}}$)}:   We define $\mathcal{P}^{\mathrm{kl}}  = \otimes \mathcal{P}^{\mathrm{kl}}_{s,a}$, where  $\mathcal{P}^{\mathrm{kl}}_{s,a}$ is defined as in \eqref{eq:uncertainty-set} using the Kullback-Leibler (KL) distance
\begin{align}
\label{eq:D-KL}
D_{\mathrm{kl}}(P_{s,a}, P^{o}_{s,a})  = \sum_{s'} P_{s,a}(s') \log \frac{P_{s,a}(s')}{P^{o}_{s,a}(s')}.
\end{align}
We note that the sample complexity  and its analysis  will  depend on the specific form of the uncertainty set.

\textbf{Robust value iteration:} The goal of the RMDP problem is to compute the optimal  robust policy  which  maximizes the value  even under the worst model in the uncertainty set.  Formally, the \textit{robust value function} $V_{\pi}$ corresponding to a policy $\pi$ and the \textit{optimal robust value function} $V^{*}$ are defined as \citep{iyengar2005robust,nilim2005robust}
\begin{align}
V^{\pi} = \inf_{P \in \mathcal{P}} ~V_{\pi, P},\quad V^{*} = \sup_{\pi} \inf_{P \in \mathcal{P}} ~V_{\pi, P} . 
\end{align} 
The \textit{optimal robust policy} $\pi^{*}$ is such that the robust value function corresponding to it matches the optimal robust value function, i.e., $V^{\pi^*} =V^{*} $. It is known that there exists a deterministic optimal policy \citep{iyengar2005robust} for the RMDP problem. So, we will restrict our attention to the class of deterministic policies. 

For any set  $\mathcal{B}$ and a vector $v$, let 
\begin{align*}
     \sigma_{\mathcal{B}} (v) = \inf \{ u^{\top}v : u \in \mathcal{B} \}.
\end{align*}
Using this  notation, we can define the \textit{robust Bellman operator} \citep{iyengar2005robust} as 
$
T (V) (s) =  \max_{a}\, ( r(s,a) + \gamma \sigma_{\mathcal{P}_{s,a}}(V)). 
$
It is known that $T$ is a contraction mapping in infinity norm and the  $V^{*}$ is the unique fixed point of $T$  \citep{iyengar2005robust}.  Since $T$ is a contraction, \textit{robust value iteration} can be used to compute $V^{*}$,  similar to the non-robust MDP setting  \citep{iyengar2005robust}. More precisely, the robust  value iteration, defined as  $V_{k+1} = T V_{k}$, converges to $V^{*}$, i.e., $V_{k} \rightarrow V^{*}$. Similar to the optimal robust value function, we can also define the optimal robust action-value function as $
    Q^{*}(s,a) = r(s, a) + \gamma \sigma_{\mathcal{P}_{s,a}}(V^{*})$.
Similar to the non-robust setting, it is straight forward to show that $\pi^{*}(s) = \argmax_{a} Q^{*}(s,a)$ and $V^{*}(s) = \max_{a} Q^{*}(s,a)$.

\section{Algorithm and Sample Complexity}
\label{sec:algorithm-results}

The robust  value iteration requires the knowledge of the nominal model $P^{o}$ and the radius of the uncertainty set  $c_{r}$ to  compute  $V^{*}$ and $\pi^{*}$. While $c_{r}$ may be  available as  design parameter, the  form of the nominal model may not be available in most practical problems. So,  we do not assume the knowledge of  the nominal model $P^{o}$.  Instead, similar to the non-robust RL setting, we assume only to have access to the samples from a generative model,  which can generate samples of the next state $s'$  according to $P^{o}_{s,a}(\cdot)$, given the state-action pair $(s,a)$ as the input. We propose a model-based robust RL algorithm that uses these samples to estimate the nominal model and uncertainty set.   

\subsection{Robust Empirical Value Iteration (REVI) Algorithm}

We first get a maximum likelihood estimate $\widehat{P}^{o}$ of the nominal model $P^{o}$ by following the standard approach  \cite[Algorithm 3]{AzarMK13}. More precisely, we generate $N$  next-state samples corresponding to  each state-action pairs. Then, the   maximum likelihood estimate  $\widehat{P}^{o}$ is given by $\widehat{P}^{o}_{s,a}(s') = {N(s,a,s')}/{N}$, where $N(s,a,s')$  is the number of times the state $s'$ is realized out of the total $N$  transitions from the state-action pair $(s, a)$. Given $\widehat{P}^{o}$, we can get  an empirical estimate $\widehat{\Pp}$ of the uncertainty set $\Pp$ as, 
\begin{align}
    \label{eq:uncertainty-set-estimate}
    \widehat{\Pp} = &\otimes\, \widehat{\mathcal{P}}_{s,a},~\text{where},~~\widehat{\Pp}_{s,a} = \{ P\in [0,1]^{\Ss}~:~ \nonumber  \\
    &D( P_{s,a} , \widehat{P}_{s,a}) \leq c_r, \sum_{s'\in\Ss} P_{s,a}(s') = 1  \},
\end{align} 
where $D$ is one of the metrics  specified in \eqref{eq:D-L1} - \eqref{eq:D-KL}.

For finding an approximately optimal robust policy, we now  consider  the empirical RMDP  $\widehat{M} = (\Ss, \Aa, r, \widehat{\Pp}, \gamma)$ and perform robust value iteration using $\widehat{\Pp}$. This is indeed our approach, which we call the Robust Empirical Value Iteration (REVI) Algorithm.  The optimal robust policy and  value function of $\widehat{M}$ are denoted as $\widehat{\pi}^{\star}, \widehat{V}^{\star}$, respectively. 

\begin{algorithm}[t!]
	\caption{Robust Empirical Value Iteration (REVI) Algorithm}	
	\label{revialgo}
	\begin{algorithmic}[1]
		\STATE \textbf{Input:} Loop termination number $K$
		\STATE \textbf{Initialize:} $Q_0 = 0$
		\STATE Compute the empirical uncertainty set  $\widehat{\Pp}$ according to  \eqref{eq:uncertainty-set-estimate}
		\FOR {$k=0,\cdots,K-1$ }
		\STATE $V_{k}(s) = \max_{a} Q_{k}(s, a),~\forall s$
		\STATE $Q_{k+1}(s,a) = r(s,a) +  \gamma \sigma_{\widehat{\Pp}_{s,a}} (V_{k}), ~\forall (s, a)$ 
		\ENDFOR
		
		\STATE \textbf{Output:} $\pi_K(s) = \argmax_a Q_{K}(s,a), \forall s \in \Ss$
	\end{algorithmic}
\end{algorithm}

\subsection{Sample Complexity}\label{subsec:sample-complexity}

In this section we give the sample complexity guarantee  of the REVI algorithm  for the three uncertainty sets. We first consider the TV uncertainty set. 
\begin{theorem}[TV Uncertainty Set]
\label{thm:revi-TV-guarantee}
Consider an RMDP with  a total variation uncertainty set $\mathcal{P}^{\mathrm{tv}}$.  Fix $\delta\in(0,1)$ and $\epsilon\in(0,24\gamma /(1-\gamma))$. Consider the REVI algorithm with $K \geq K_{0}$ and $N \geq N^{\mathrm{tv}}$, where
	\begin{align}
	\label{eq:K0}
	    K_{0} &=   \frac{1}{ \log(1/\gamma)}  \log(\frac{8\gamma}{\epsilon (1-\gamma)^2}) \text{ and }   \\
	  \label{eq:N-TV}
	    N^{\mathrm{tv}} &= \frac{ 72 \gamma^2   |\Ss|}{(1-\gamma)^4 \epsilon^2}  \log(\frac{144\gamma |\Ss| |\Aa|}{(\delta\epsilon(1-\gamma)^2)}).
	\end{align}
Then,  $ \| V^* - V^{\pi_K} \| \leq \epsilon$ with probability at least $1-2\delta$. 
\end{theorem}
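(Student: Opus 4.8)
The plan is to split the total error $\|V^* - V^{\pi_K}\|$ into an optimization error coming from running only $K$ iterations and a statistical error coming from replacing the true uncertainty set $\Pp$ by its empirical version $\widehat{\Pp}$. Writing $\widehat{V}^{\star}$ for the optimal robust value of the empirical RMDP $\widehat{M}$, $\widehat{V}^{\pi_K}$ for the robust value of the output policy $\pi_K$ in $\widehat{M}$, and $V^{\pi_K}$ for its robust value in the true RMDP $M$, I would use the telescoping decomposition
\[ V^* - V^{\pi_K} = \underbrace{(V^* - \widehat{V}^{\star})}_{\text{(I)}} + \underbrace{(\widehat{V}^{\star} - \widehat{V}^{\pi_K})}_{\text{(II)}} + \underbrace{(\widehat{V}^{\pi_K} - V^{\pi_K})}_{\text{(III)}}. \]
Term (II) is pure optimization error: since the empirical robust Bellman operator is a $\gamma$-contraction, $\|V_K - \widehat{V}^{\star}\| \le \gamma^K/(1-\gamma)$, and because $\pi_K$ is greedy with respect to $Q_K$ a standard greedy-policy-loss bound gives $\|\widehat{V}^{\star} - \widehat{V}^{\pi_K}\| \le \frac{2\gamma}{1-\gamma}\|V_K - \widehat{V}^{\star}\| \le \frac{2\gamma^{K+1}}{(1-\gamma)^2}$. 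The choice $\gamma^{K_0} = \epsilon(1-\gamma)^2/(8\gamma)$ then makes this at most $\epsilon/4$, which is exactly the role of $K_0$ in \eqref{eq:K0}.

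Terms (I) and (III) are the statistical terms, and for both I would use the contraction argument: since $V^*$ and $\widehat{V}^{\star}$ are the fixed points of $T$ and $\widehat{T}$,
\[ \|V^* - \widehat{V}^{\star}\| \le \frac{1}{1-\gamma}\|TV^* - \widehat{T} V^*\| \le \frac{\gamma}{1-\gamma}\max_{s,a}\big|\sigma_{\Pp_{s,a}}(V^*) - \sigma_{\widehat{\Pp}_{s,a}}(V^*)\big|, \]
and the same inequality with $V^{\pi_K}$ in place of $V^*$ controls term (III). Everything thus reduces to the core estimate: bounding $\max_{s,a}|\sigma_{\Pp^{\mathrm{tv}}_{s,a}}(V) - \sigma_{\widehat{\Pp}^{\mathrm{tv}}_{s,a}}(V)|$, the deviation of the worst-case expectation over the TV ball. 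Here I would invoke strong duality for the TV-constrained inner minimization to write $\sigma_{\Pp^{\mathrm{tv}}_{s,a}}(V) = \max_{\mu}\{\E_{P^o_{s,a}}[g_{\mu,V}] - c_r\,\mathrm{pen}(\mu)\}$, a one-dimensional program in which the nominal model enters only through the expectation of a fixed $[0,1/(1-\gamma)]$-bounded function $g_{\mu,V}$. This linearization is the crucial step: it turns the biased, nonlinear quantity $\sigma_{\widehat{\Pp}_{s,a}}(V)$ into (a supremum over the scalar $\mu$ of) an empirical mean, to which Hoeffding's inequality applies, and $|\max_\mu a_\mu - \max_\mu b_\mu| \le \max_\mu|a_\mu - b_\mu|$ transfers the scalar bound through the outer maximization.

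The main obstacle is precisely the one flagged in the introduction: $V^{\pi_K}$ appearing in term (III) is itself a function of the samples, so I cannot simply fix the value function and apply concentration. I would resolve this with a uniform concentration argument: cover the value-function space $[0,1/(1-\gamma)]^{|\Ss|}$ (together with the scalar $\mu$) by an $\alpha$-net, apply Hoeffding at each net point with a failure probability rescaled by the net size, union bound over all $(s,a)$ pairs, and control the off-net error using the Lipschitz continuity of $\sigma$ in $V$; choosing $\alpha \sim \epsilon(1-\gamma)^2$ makes the off-net error negligible against the statistical budget. Since the logarithm of the net size over the $|\Ss|$-dimensional value space is $\cO(|\Ss|\log(1/((1-\gamma)\alpha)))$, this log covering number contributes the factor $|\Ss|$ inside the Hoeffding exponent; combined with the $\frac{1}{1-\gamma}$ range of $g_{\mu,V}$ and the $\frac{\gamma}{1-\gamma}$ prefactor above, requiring the statistical terms to be at most $\cO(\epsilon)$ yields $N \gtrsim \frac{\gamma^2|\Ss|}{(1-\gamma)^4\epsilon^2}\log(\frac{|\Ss||\Aa|}{\delta\epsilon(1-\gamma)^2})$, matching $N^{\mathrm{tv}}$ in \eqref{eq:N-TV}. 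To finish, I would allocate $\epsilon/4$ of the budget to term (II) and the remainder to terms (I) and (III), and take a union bound over the two statistical events to obtain the claimed confidence $1-2\delta$.
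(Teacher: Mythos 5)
Your proposal is correct and takes essentially the same route as the paper's proof: the identical three-term decomposition, the contraction-plus-greedy-policy bound $\frac{2\gamma^{K+1}}{(1-\gamma)^2}$ for term (II) (the paper's Lemma \ref{lem:singh_yee_main_result}, a Singh--Yee-type bound extended to the robust setting via the $1$-Lipschitzness of $\sigma$ in Lemma \ref{lem:sigma_v_diff}), the reduction of terms (I) and (III) to $\frac{\gamma}{1-\gamma}\max_{V\in\V}\max_{s,a}\abs{\sigma_{\widehat{\Pp}^{\mathrm{tv}}_{s,a}}(V)-\sigma_{{\Pp}^{\mathrm{tv}}_{s,a}}(V)}$, and the duality-plus-covering-plus-Hoeffding uniform concentration of Lemmas \ref{lem:tv-sigma-diff}, \ref{lem:covering-hoeffdings} and Proposition \ref{lem:tv-sigma-diff-bound-uniform}, with the same parameter choices and union bound giving $1-2\delta$. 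The only cosmetic deviations are that you bound terms (I) and (III) through the fixed-point contraction inequality rather than the paper's pointwise $Q$-function comparison, and you cite a scalar dual for the TV ball where the paper uses Iyengar's vector-$\mu$ dual (Lemma \ref{lem:tv-sigma-diff}); since you cover the $|\Ss|$-dimensional value class in either case, both variants produce the same $|\Ss|$ factor and the same $N^{\mathrm{tv}}$.
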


\begin{remark}
The total number of samples needed in the REVI algorithm is $N_{\mathrm{total}} = N |\Ss| |\Aa|$. So the sample complexity of the  REVI algorithm with the TV uncertainty set is $\cO( \frac{|\Ss|^{2}  |\Aa|}{(1-\gamma)^4 \epsilon^2})$. 
\end{remark}

\begin{remark}[Comparison with the sample complexity of the non-robust RL]
For the non-robust setting, the lowerbound for the total number of samples from the  generative sampling device is  $\Omega(\frac{|\Ss| |\Aa|}{\epsilon^{2} (1-\gamma)^{3}} \log \frac{|\Ss| |\Aa|}{\delta })$  \citep[Theorem 3]{AzarMK13}. 
The variance reduced value iteration algorithm proposed in \citep{sidford2018near} achieves a  sample complexity of  $\cO(\frac{|\Ss| |\Aa|}{\epsilon^{2} (1-\gamma)^{3}} \log \frac{|\Ss| |\Aa|}{\delta \epsilon })$, matching the lower bound. However, this work is restricted to $\epsilon \in (0, 1)$, whereas $\epsilon$ can be considered upto the value $1/(1-\gamma)$ for the MDP problems.  Recently, this result has been further improved recently by \citep{agarwal2020model}  and \citep{li2020breaking}, which considered  $\epsilon \in (0, 1/\sqrt{(1-\gamma)})$ and  $\epsilon \in (0, 1/{(1-\gamma)})$, respectively. 

Theorem \ref{thm:revi-TV-guarantee} for the robust RL setting also considers $\epsilon$ upto $\cO(1/(1-\gamma))$. However,  the sample complexity  obtained   is worse by a factor of $|\Ss|$ and $1/(1-\gamma)$ when compared to the non-robust setting.  These additional terms are appearing in our result due to a covering number argument we used in the proof, which seems necessary for getting a tractable bound. However, it is not  clear if this is fundamental to the robust RL problem with TV uncertainty set. We leave this investigation for our future work. 
\end{remark}

 We next consider the chi-square  uncertainty set. 
\begin{theorem}[Chi-square Uncertainty Set]
\label{thm:revi-Chi-guarantee}
Consider an RMDP with  a Chi-square uncertainty set $\mathcal{P}^{\mathrm{c}}$.  Fix $\delta \in (0,1)$ and $\epsilon\in(0,16\gamma /(1-\gamma))$, for an absolute constant $c_1>1$. Consider the REVI algorithm with $K \geq K_{0}$ and $N \geq N^{\mathrm{c}}$, where $K_{0}$ is as given in \eqref{eq:K0} and 
	\begin{align}
	  \label{eq:N-Chi}
	    N^{\mathrm{c}} &= \frac{  64 \gamma^2 (2c_r + 1)|\Ss|}{(1-\gamma)^4 \epsilon^2}  \log(\frac{192 |\Ss|  |\Aa|\gamma}{(\delta\epsilon(1-\gamma)^2)}).
	\end{align}
Then,  $ \| V^* - V^{\pi_K} \| \leq \epsilon$ with probability at least $1-2\delta$. 
\end{theorem}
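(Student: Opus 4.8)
The plan is to follow the same two-stage decomposition used for the TV case, isolating an \emph{optimization error} that comes from running only $K$ iterations and a \emph{statistical error} that comes from replacing the true uncertainty set $\mathcal{P}^{\mathrm{c}}$ by its empirical counterpart $\widehat{\mathcal{P}}$. Write $\widehat{T}$ for the robust Bellman operator built from $\widehat{\Pp}$ and $\widehat{V}^{\star}$ for its fixed point. Starting from
\begin{align*}
\norm{V^{*} - V^{\pi_K}} \leq \norm{V^{*} - \widehat{V}^{\star}} + \norm{\widehat{V}^{\star} - \widehat{V}^{\pi_K}} + \norm{\widehat{V}^{\pi_K} - V^{\pi_K}},
\end{align*}
the middle term is a pure optimization error: since $\widehat{T}$ is a $\gamma$-contraction (as $T$ is, by \citep{iyengar2005robust}) and $V_0 = 0$, we have $\norm{V_K - \widehat{V}^{\star}} \leq \gamma^K/(1-\gamma)$, and a standard greedy-policy argument upgrades this to $\norm{\widehat{V}^{\star} - \widehat{V}^{\pi_K}} \leq 2\gamma^{K+1}/(1-\gamma)^2$; the choice $K \geq K_0$ in \eqref{eq:K0} makes this at most $\epsilon/4$.

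Next I would reduce both outer (statistical) terms to a single per-state-action quantity. Because $V^{*}$ and $\widehat{V}^{\star}$ are fixed points of $T$ and $\widehat{T}$, contraction gives
\begin{align*}
\norm{V^{*} - \widehat{V}^{\star}} \leq \frac{\gamma}{1-\gamma}\max_{s,a}\abs{\sigma_{\mathcal{P}^{\mathrm{c}}_{s,a}}(\widehat{V}^{\star}) - \sigma_{\widehat{\Pp}_{s,a}}(\widehat{V}^{\star})},
\end{align*}
and the identical argument applied to the policy-evaluation operators of $\pi_K$ bounds $\norm{\widehat{V}^{\pi_K} - V^{\pi_K}}$ by the same expression evaluated at $V^{\pi_K}$. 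Thus everything reduces to controlling $\abs{\sigma_{\mathcal{P}^{\mathrm{c}}_{s,a}}(V) - \sigma_{\widehat{\Pp}_{s,a}}(V)}$ for $V$ ranging over the value cube $[0,(1-\gamma)^{-1}]^{|\Ss|}$.

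The core step, where the chi-square structure enters, is to bound this support-function difference. Here I would invoke the Lagrangian dual of the chi-square-constrained inner minimization, which rewrites the robust operator in the variational form
\begin{align*}
\sigma_{\mathcal{P}^{\mathrm{c}}_{s,a}}(V) = \max_{\eta}\,\Bigl\{\eta - \sqrt{(1+c_r)\,\E_{P^o_{s,a}}[(\eta - V)_{+}^{2}]}\,\Bigr\},
\end{align*}
and identically for $\widehat{\Pp}$ with $P^o_{s,a}$ replaced by $\widehat{P}^o_{s,a}$. The dual objective is now a \emph{smooth function of expectations under the nominal model} of a fixed bounded function of $V$, so the difference between the true and empirical support functions is governed by how well the linear functionals $(P^o_{s,a}-\widehat{P}^o_{s,a})^{\top}(\eta - V)_{+}^{2}$ concentrate. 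A Bernstein bound on these functionals, with the $\sqrt{1+c_r}$ prefactor propagated through, is what produces the $(2c_r+1)$ term in $N^{\mathrm{c}}$. The subtlety is that the value function at which we evaluate, namely $\widehat{V}^{\star}$ or $V^{\pi_K}$, is itself a function of the samples, so fixed-$V$ concentration is invalid; I would instead prove a bound uniform over a net of the value cube. Since such a net has cardinality exponential in $|\Ss|$, its logarithm contributes the multiplicative factor $|\Ss|$ that separates $N^{\mathrm{c}}$ from the non-robust rate, and a union bound over the $|\Ss||\Aa|$ pairs supplies the remaining logarithmic factors.

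The main obstacle is precisely this coupling: the empirical support function is a \emph{biased} estimate because $\sigma_{\mathcal{P}^{\mathrm{c}}_{s,a}}$ is nonlinear in $\widehat{P}^o_{s,a}$, and the evaluation point $V$ is correlated with the data. The dual linearizes the nonlinearity into an expectation, and the covering argument removes the data dependence, but making the Bernstein step tight enough to recover only a $(2c_r+1)$ factor—rather than a cruder $c_r^2$ dependence—requires carefully bounding the fluctuation of $(\eta - V)_{+}^{2}$ and handling the nonnegativity (boundary) constraints in the dual uniformly over $\eta \in [0,(1-\gamma)^{-1}]$. Once the uniform bound $\max_{s,a}\abs{\sigma_{\mathcal{P}^{\mathrm{c}}_{s,a}}(V) - \sigma_{\widehat{\Pp}_{s,a}}(V)} \leq \epsilon(1-\gamma)/(4\gamma)$ holds with probability at least $1-2\delta$—which, after inverting the Bernstein bound over the net and the $|\Ss||\Aa|$ pairs, is guaranteed by $N \geq N^{\mathrm{c}}$—each statistical term is at most $\epsilon/4$, and combining with the optimization error yields $\norm{V^{*} - V^{\pi_K}} \leq \epsilon$.
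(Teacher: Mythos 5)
Your overall architecture coincides with the paper's: the same three-term split, the same optimization-error bound $\norm{\widehat{V}^{\star}-\widehat{V}^{\pi_K}}\leq 2\gamma^{K+1}/(1-\gamma)^2$ via contraction plus a Singh--Yee-type amplification lemma, and the same reduction of both statistical terms to a bound on $\max_{s,a}\abs{\sigma_{{\Pp}^{\mathrm{c}}_{s,a}}(V)-\sigma_{\widehat{\Pp}_{s,a}}(V)}$ that must hold \emph{uniformly} over the value cube because the evaluation points $\widehat{V}^{\star}$ and $V^{\pi_K}$ are data-dependent. Where you genuinely diverge is the dual representation used for the chi-square support function. The paper invokes Iyengar's variational form
\begin{align*}
\sigma_{{\Pp}^{\mathrm{c}}_{s,a}}(V)=\max_{\mu:\,0\leq\mu\leq V}\Bigl(P^{o}_{s,a}(V-\mu)-\sqrt{c_r\,\mathrm{Var}_{P^{o}_{s,a}}(V-\mu)}\Bigr),
\end{align*}
splits the error into a linear part (Hoeffding plus a covering of the $\mu$-cube) and a standard-deviation part, and handles the latter with the Maurer--Pontil self-bounding variance inequality, uniformized over the net via $|\sqrt{x}-\sqrt{y}|\leq\sqrt{|x-y|}$. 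You instead use the scalar CVaR-type dual $\sigma_{{\Pp}^{\mathrm{c}}_{s,a}}(V)=\max_{\eta}\{\eta-\sqrt{(1+c_r)\,\E_{P^{o}_{s,a}}[(\eta-V)_{+}^{2}]}\,\}$, trading the $|\Ss|$-dimensional dual variable $\mu$ for a scalar $\eta$; this is in fact the route of the concurrent work of Yang et al.\ (2021) cited in the introduction, not the paper's. Both routes produce the same $(c_r+1)$-order constant and the same extra $|\Ss|$ factor (which, as you correctly note, comes from covering the value cube for the data-dependent $V$, not from the dual variable), so neither buys a better rate; yours yields a cleaner one-dimensional dual at the price of re-deriving a standard-deviation-type concentration by hand where the paper simply cites one.

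Two details in your sketch need tightening, though neither is fatal. First, as literally stated, controlling $\abs{\sqrt{\E_{\widehat{P}}[Z^2]}-\sqrt{\E_{P^{o}}[Z^2]}}$ through concentration of the linear functional $(\widehat{P}-P^{o})^{\top}Z^2$ followed by $|\sqrt{x}-\sqrt{y}|\leq\sqrt{|x-y|}$ gives only an $N^{-1/4}$ rate; to recover $N^{-1/2}$ you must exploit self-normalization (Bernstein with $\mathrm{Var}_{P^{o}}(Z^2)\leq M^2\,\E_{P^{o}}[Z^2]$, then case on whether $\E_{P^{o}}[Z^2]\gtrsim M^2\log(1/\delta)/N$) --- exactly the step the paper outsources to its Lemma on self-bounding variance concentration. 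Second, the dual maximizer is not confined to $[0,(1-\gamma)^{-1}]$: for $\eta\geq\max_{s'}V(s')$ the first-order condition gives $\eta^{*}\leq\E_{P^{o}_{s,a}}[V]+\sqrt{\mathrm{Var}_{P^{o}_{s,a}}(V)/c_r}$, so the compact interval over which you argue uniformly in $\eta$ must be enlarged by roughly $(1-\gamma)^{-1}c_r^{-1/2}$, which affects only constants in the logarithm.
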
 

\vspace{0.1cm}
\begin{remark}
The sample complexity of the  algorithm with the chi-square  uncertainty set is $\cO( \frac{|\Ss|^{2}  |\Aa| c_r}{(1-\gamma)^4 \epsilon^2})$. The order of sample complexity remains the same compared to that of the TV uncertainty set given in Theorem \ref{thm:revi-TV-guarantee}. 
\end{remark}

Finally, we consider the KL  uncertainty set. 
\begin{theorem}[KL Uncertainty Set]
\label{thm:revi-KL-guarantee}
Consider an RMDP with  a KL uncertainty set $\mathcal{P}^{\mathrm{kl}}$.  Fix $\delta \in (0,1)$ and $\epsilon\in(0, 1/(1-\gamma))$.  Consider the REVI algorithm with $K \geq K_{0}$ and $N \geq N^{\mathrm{kl}}$, where $K_{0}$ is as in \eqref{eq:K0} and 
\begin{align}
\label{eq:N-KL}
&\hspace{-0.2cm}N^{\mathrm{kl}} \hspace{-0.1cm}= \frac{8\gamma^2|\Ss|}{c_r^2(1-\gamma)^4 \epsilon^{2}}  \exp(\frac{2\lambda_{\mathrm{kl}}+4}{\lambda_{\mathrm{kl}}(1-\gamma)}) \log(\frac{9 |\Ss|  |\Aa|}{\delta\lambda_{\mathrm{kl}}(1-\gamma)}),
\end{align}
and $\lambda_{\mathrm{kl}}$ is a problem dependent parameter but independent of $N^{\mathrm{kl}}$. 
Then,  $ \| V^* - V^{\pi_K} \| \leq \epsilon$ with probability at least $1-2\delta$. 
\end{theorem}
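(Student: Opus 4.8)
The plan is to control $\norm{V^* - V^{\pi_K}}$ by routing through the empirical RMDP $\widehat{M}$ and its optimal robust value $\widehat{V}^\star$. Writing $\widehat{V}^{\pi_K}$ for the robust value of the output policy in $\widehat{M}$, and using that $V^*\ge V^{\pi_K}$ pointwise, I would split the error as
\begin{align*}
0 \le V^* - V^{\pi_K} = \underbrace{(V^* - \widehat{V}^\star)}_{\text{(I)}} + \underbrace{(\widehat{V}^\star - \widehat{V}^{\pi_K})}_{\text{(II)}} + \underbrace{(\widehat{V}^{\pi_K} - V^{\pi_K})}_{\text{(III)}},
\end{align*}
so that (II) is a pure optimization error on the (fixed, data-dependent) empirical model, while (I) and (III) are the statistical errors coming from estimating $P^o$ by $\widehat{P}^o$.

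Term (II) is the easy part and is common to all three theorems. The REVI iterates are exactly $V_K = \widehat{T}^K V_0$ with $V_0=0$, where $\widehat{T}$ is the empirical robust Bellman operator; since $\widehat{T}$ is a $\gamma$-contraction with fixed point $\widehat{V}^\star$ and $\norm{\widehat{V}^\star}\le 1/(1-\gamma)$, we get $\norm{V_K - \widehat{V}^\star}\le \gamma^K/(1-\gamma)$. A standard greedy-policy bound then gives $\norm{\widehat{V}^\star - \widehat{V}^{\pi_K}} \le \tfrac{2\gamma}{1-\gamma}\norm{V_K - \widehat{V}^\star}$, and requiring this to be $\cO(\epsilon)$ yields exactly the iteration count $K_0$ in \eqref{eq:K0}.

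For the statistical terms I would reduce both (I) and (III) to a single per-state-action quantity. Using that the true and empirical policy-evaluation (resp.\ optimality) operators are $\gamma$-contractions with fixed points $V^{\pi_K},\widehat{V}^{\pi_K}$ (resp.\ $V^*,\widehat{V}^\star$), a one-step telescoping argument gives, for the relevant $V\in\{V^{\pi_K},\widehat{V}^\star\}$,
\begin{align*}
\norm{V^{\pi_K} - \widehat{V}^{\pi_K}} \le \tfrac{1}{1-\gamma}\max_{s,a}\abs{\sigma_{\Pp_{s,a}}(V) - \sigma_{\widehat{\Pp}_{s,a}}(V)},
\end{align*}
and likewise for (I). The difficulty is that this $V$ is itself a function of the samples, so I cannot fix $V$ and apply concentration directly. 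To circumvent this I would prove a \emph{uniform} bound on $\abs{\sigma_{\Pp_{s,a}}(V) - \sigma_{\widehat{\Pp}_{s,a}}(V)}$ over the whole ball $\{V:\norm{V}\le 1/(1-\gamma)\}$. Both maps $V\mapsto\sigma_{\Pp_{s,a}}(V)$ are $1$-Lipschitz in $\norm{\cdot}$ (an infimum of $u^\top V$ over probability vectors $u$), so I would cover the ball by an $\eta$-net of size $(3/((1-\gamma)\eta))^{|\Ss|}$, union-bound the fixed-$V$ concentration over the net and over all $(s,a)$, and pay a Lipschitz penalty off the net. The logarithm of this covering number is $\Theta(|\Ss|\log(\cdot))$, which is the source of the extra linear $|\Ss|$ factor in $N^{\mathrm{kl}}$.

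The KL-specific and hardest step is the fixed-$V$ concentration of $\sigma_{\Pp^{\mathrm{kl}}_{s,a}}(V)-\sigma_{\widehat{\Pp}^{\mathrm{kl}}_{s,a}}(V)$. Here I would invoke the Lagrangian dual of the KL-constrained inner problem,
\begin{align*}
\sigma_{\Pp^{\mathrm{kl}}_{s,a}}(V) = \sup_{\lambda\ge 0}\Bigl(-\lambda\log\bigl(\textstyle\sum_{s'} P^o_{s,a}(s')e^{-V(s')/\lambda}\bigr) - \lambda c_r\Bigr),
\end{align*}
and bound the difference by evaluating both sides at the true optimizer $\lambda_{\mathrm{kl}}$, reducing everything to the concentration of the empirical exponential moment $\sum_{s'}\widehat{P}^o_{s,a}(s')e^{-V(s')/\lambda_{\mathrm{kl}}}$ around its mean, a bounded-range Hoeffding estimate. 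The main obstacle is that passing this estimate back through the outer $\log$ divides by the exponential moment, which is as small as $e^{-1/(\lambda_{\mathrm{kl}}(1-\gamma))}$ because $V$ ranges over $[0,1/(1-\gamma)]$; this amplification is precisely what produces the factor $\exp((2\lambda_{\mathrm{kl}}+4)/(\lambda_{\mathrm{kl}}(1-\gamma)))$ in \eqref{eq:N-KL}, while a careful sensitivity analysis of the dual objective supplies the $1/c_r^2$ and, after converting the operator-level accuracy $\cO(\epsilon(1-\gamma))$ into a requirement on $N$, the $(1-\gamma)^{-4}$ dependence. Assembling (I)--(III) with appropriate $\epsilon$-budget allocations and union-bounding the underlying concentration events then yields $\norm{V^*-V^{\pi_K}}\le\epsilon$ with probability at least $1-2\delta$.
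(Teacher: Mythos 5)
Your skeleton matches the paper's proof: the same three-term decomposition, the same contraction/greedy bound for term (II) giving $K_0$, the same reduction of (I) and (III) to a uniform bound on $\max_{s,a}\abs{\sigma_{\Pp^{\mathrm{kl}}_{s,a}}(V)-\sigma_{\widehat{\Pp}^{\mathrm{kl}}_{s,a}}(V)}$ over the ball $\norm{V}\le 1/(1-\gamma)$ via an $\eta$-net (source of the extra $|\Ss|$), and the same use of the Lagrangian dual with the $\exp(1/(\lambda_{\mathrm{kl}}(1-\gamma)))$ amplification from passing Hoeffding through the outer logarithm. But there is a genuine gap in your KL-specific step: ``evaluating both sides at the true optimizer $\lambda_{\mathrm{kl}}$'' controls only one side of the difference of two suprema. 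Indeed $\sup_\lambda f(\lambda)-\sup_\lambda g(\lambda)\le f(\lambda^*)-g(\lambda^*)$ when $\lambda^*$ maximizes $f$, but the reverse direction requires evaluating at the \emph{empirical} maximizer $\widehat{\lambda}^*$, which is random and data-dependent. A priori $\widehat{\lambda}^*$ can sit arbitrarily close to $0$, where your amplification factor $\exp(1/(\lambda(1-\gamma)))$ (equivalently, the division by the exponential moment $P^o_{s,a}e^{-V/\lambda}$) blows up, so no fixed-$\lambda$ Hoeffding estimate applies. The paper resolves this by importing \citep[Lemma 4]{zhou2021finite} (its Lemma \ref{lem:zhou-lambda-*-lemma-4}): with probability $1-\delta/(4|\Ss||\Aa|)$ and for $N$ above problem-dependent thresholds $N',N''$, either $\lambda^*=0$, in which case one shows $\sigma_{\Pp^{\mathrm{kl}}_{s,a}}(V)=V_{\min}$ and $\widehat{\lambda}^*=0$ as well (so the difference vanishes exactly), or $\lambda^*>0$ and both $\lambda^*,\widehat{\lambda}^*$ are localized to $[\lambda_{\mathrm{kl}},1/(c_r(1-\gamma))]$ with $\lambda_{\mathrm{kl}}=\lambda^*/2$. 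Only after this localization can one use $\abs{\max_\lambda f-\max_\lambda g}\le\max_{\lambda\in[\lambda_{\mathrm{kl}},1/(c_r(1-\gamma))]}\abs{f-g}$ and apply the uniform concentration you describe. Your proposal omits both the degenerate $\lambda^*=0$ case and the localization event, which is also why the paper's actual $N^{\mathrm{kl}}$ (in the appendix) is a maximum of three quantities including $\max_{s,a}N'$ and $\max_{s,a}N''$ --- this is where the ``problem dependent parameter'' caveat in the theorem statement comes from.

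A second, smaller omission: because the relevant dual variables ($\lambda^*$ depends on $V$, and $\widehat{\lambda}^*$ on the data), you cannot concentrate at a single fixed $\lambda$; the paper covers the dual interval as well, reparameterizing $\theta=1/\lambda\in[c_r(1-\gamma),1/\lambda_{\mathrm{kl}}]$ with a one-dimensional net (Lemma \ref{por:covering_num_real_line}) alongside the net over $V$, paying $\exp(\eta/(1-\gamma))\exp(\eta/\lambda_{\mathrm{kl}})$ off-net penalties. This only affects logarithmic factors and constants, but your plan as written --- a single-point evaluation in $\lambda$ plus a net over $V$ alone --- is not sufficient to make the union bound go through. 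The rest of your outline (Hoeffding on $(P^o_{s,a}-\widehat{P}_{s,a})\exp(-V\theta)$ with range bounded by $1$, the $\lambda\le 1/(c_r(1-\gamma))$ upper limit producing the $1/c_r$ and hence $1/c_r^2$ after solving for $N$, and the $\epsilon$-budget assembly) is faithful to the paper.
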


\vspace{0.1cm}
\begin{remark}
The sample complexity with the KL uncertainty set is $\cO( \frac{|\Ss|^2  |\Aa|}{(1-\gamma)^{4} \epsilon^{2} c_r^2 } \exp(\frac{1}{(1-\gamma)}))$. We note that \citep{zhou2021finite} also considered the robust RL problem with KL uncertainty set. They provided a sample complexity bound of the form $\cO( \frac{C |\Ss|^{2}  |\Aa|}{(1-\gamma)^{4} \epsilon^{2} c_r^2 })$, However  the exponential dependence on $1/(1-\gamma)$ was hidden inside  the constant $C$. In this work, we clearly specify the depends on the factor $1/(1-\gamma)$. 
\end{remark}



\section{Why Do We Need Robust Policies?}

In the introduction, we have given a qualitative description about the need for finding a robust policy. In this section, we give a formal argument  to show that the worst case performance of a non-robust policy can be arbitrarily bad (as bad as a random policy) when compared to that of a robust policy.  

We consider a simple setting with an uncertainty set that contains only two models, i.e.,  $\mathcal{P} = \{P^{o}, P'\}$. Let $\pi^{*}$ be the optimal robust policy.  Following the notation in \eqref{eq:non-robust-optimal}, let $\pi^{o} =  \pi_{P^{o}}$ and $\pi' = \pi_{P^{'}}$ be the non-robust optimal policies when the model is $P^{o}$ and $P'$, respectively. Assume that nominal model is $P^{o}$ and we decide to employ the non-robust policy $\pi^{o}$. The worst case performance of $\pi^{o}$ is characterized by its robust value function $V^{\pi^{o}}$ which is $ \min \{V_{\pi^{o}, P^{o}}, V_{\pi^{o}, P'}\}$. 

We now state the following result.
\begin{theorem}[Robustness Gap] 
\label{thm:non-robust-policy-suboptimality}
There exists a robust MDP $M$ with uncertainty set $\mathcal{P} = \{P^{o}, P'\}$, discount factor  $\gamma\in(\gamma_o,1]$, and state $s_{1}\in\Ss$ such that 
\begin{align*}
    V^{{\pi}^{o}}(s_1) \leq V^{\pi^{*}}(s_1) - {c}/{(1-\gamma)},
\end{align*}
where $c$ is a positive constant, $\pi^{*}$ is the optimal robust policy, and $\pi^{o} =  \pi_{P^{o}}$ is the non-robust optimal policy when the model is $P^{o}$.
\end{theorem}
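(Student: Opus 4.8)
The plan is to prove this existence statement by exhibiting a single explicit robust MDP in which the nominal-optimal action at $s_1$ is a ``risky'' action that is rewarding under $P^o$ but catastrophic under the alternate model $P'$, whereas a ``safe'' action is moderately rewarding under \emph{both} models. The robust policy hedges by choosing the safe action, and the per-step advantage it thereby secures against the worst case accumulates over the infinite horizon into a gap of order $1/(1-\gamma)$.

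Concretely, I would take $\Ss = \{s_1, g, b, m\}$ and $\Aa = \{a_1, a_2\}$, with $g, b, m$ absorbing under every action and every model (self-loops) and carrying per-step rewards $r(g,\cdot)=1$, $r(b,\cdot)=0$, $r(m,\cdot)=1/2$, so that their values are $1/(1-\gamma)$, $0$, and $1/(2(1-\gamma))$ respectively under both models. Set $r(s_1,\cdot)=0$. At $s_1$, the risky action $a_1$ transitions to $g$ under $P^o$ but to $b$ under $P'$, while the safe action $a_2$ transitions to $m$ under both models. All rewards lie in $[0,1]$. Crucially, $P^o$ and $P'$ differ only at the single pair $(s_1,a_1)$, so $\{P^o,P'\}$ is exactly the rectangular product $\otimes_{s,a}\Pp_{s,a}$ (a two-point set at $(s_1,a_1)$ and singletons elsewhere), consistent with the rectangularity assumption, and the worst-case value of any fixed policy is simply the minimum of its values under $P^o$ and $P'$.

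The remaining steps are elementary geometric sums. First, under $P^o$ the action $a_1$ yields value $\gamma/(1-\gamma)$ and $a_2$ yields $\gamma/(2(1-\gamma))$, so $a_1$ is strictly optimal and $\pi^o(s_1)=a_1$. Second, since $a_1$ leads to $b$ under $P'$, the worst-case value of $\pi^o$ is $V^{\pi^o}(s_1)=\min\{\gamma/(1-\gamma),\,0\}=0$. Third, the safe action has identical value $\gamma/(2(1-\gamma))$ under both models, exceeding the worst-case value $0$ of the risky action, so $\pi^*(s_1)=a_2$ and $V^{\pi^*}(s_1)=\gamma/(2(1-\gamma))$. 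Subtracting gives $V^{\pi^*}(s_1)-V^{\pi^o}(s_1)=\gamma/(2(1-\gamma))\geq c/(1-\gamma)$ for $\gamma\in(\gamma_o,1)$ with $c=\gamma_o/2$.

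I expect no serious technical obstacle: the proof is an existence claim, so one clean witness suffices, and every value is a closed-form geometric series. The one design point that must be handled carefully is satisfying all constraints simultaneously — the nominal model must make the risky action \emph{strictly} optimal (which forces the value at $m$ to sit strictly below that at $g$), the alternate model must make that same action worthless, and the safe action must nonetheless retain an $\Omega(1/(1-\gamma))$ value so the gap does not collapse. My reward choices ($1$, $0$, $1/2$) are calibrated to meet all three, and verifying that $\pi^o$ and $\pi^*$ are indeed as claimed completes the argument.
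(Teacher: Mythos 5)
Your proof is correct, and it takes a genuinely different route to the same conclusion. The paper's witness is a two-state chain $\Ss=\{0,1\}$ in which the gap is created by a \emph{negative} reward $r(0,a_r)=-100\gamma/99$: the non-robust policy $\pi^o$ has worst-case value $V^{\pi^o}(0)=-\gamma/(99(1-\gamma^2))<0$, while the optimal robust value $V^*(0)=0$ is identified by invoking the perfect duality theorem of Nilim and El Ghaoui ($V^*(0)=\min\{\max_\pi V_{\pi,P^o}(0),\max_\pi V_{\pi,P'}(0)\}$), giving $c=\gamma_o/198$. Your four-state construction instead makes the robust policy a hedging ``safe'' action with strictly positive value $\gamma/(2(1-\gamma))$ under \emph{both} models, and computes everything by elementary geometric series with no appeal to duality. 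Your version buys three things the paper's does not: (i) all rewards lie in $[0,1]$, so the witness actually satisfies the paper's own standing normalization, which the paper's proof quietly violates; (ii) you verify that $\{P^o,P'\}$ is \emph{exactly} the $(s,a)$-rectangular product $\otimes\,\Pp_{s,a}$ (a two-point set only at $(s_1,a_1)$, singletons elsewhere), so $\inf_{P\in\Pp}V_{\pi,P}$ is legitimately a minimum of two numbers --- and since $g,b,m$ are absorbing and $s_1$ is never revisited, even a time-varying adversary gains nothing; (iii) $\pi^o(s_1)=a_1$ is strictly optimal under $P^o$ for every $\gamma\in(0,1)$, whereas the paper needs $\gamma>0.01$ just to pin down $\pi^o$. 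What the paper's construction buys is economy (two states rather than four) and the duality shortcut for computing $V^*$. One small point worth making explicit in a final write-up: you should note, as you implicitly do, that the policy choosing $a_2$ attains the optimal robust value, i.e.\ $V^{\pi^*}(s_1)=V^*(s_1)=\gamma/(2(1-\gamma))$, which here follows by comparing the robust values $0$ and $\gamma/(2(1-\gamma))$ of the only two relevant deterministic policies; with that observation, the bound $V^{\pi^*}(s_1)-V^{\pi^o}(s_1)=\gamma/(2(1-\gamma))\geq (\gamma_o/2)/(1-\gamma)$ for $\gamma\in(\gamma_o,1)$ completes the proof with $c=\gamma_o/2$.
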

Theorem \ref{thm:non-robust-policy-suboptimality} states that the worst case performance of the non-robust policy $\pi^{o}$ is lower than that of the optimal robust policy $\pi^{*}$, and this performance gap is $\Omega(1/(1-\gamma))$. Since  $|r(s,a)| \leq 1, \forall (s,a) \in \Ss \times \Aa$ by assumption, $\|V_{\pi, P}\| \leq 1/(1-\gamma)$ for any policy $\pi$ and any model $P$. Therefore, the difference between the optimal  (robust) value function and the (robust) value function of an arbitrary policy  cannot be greater than $\cO(1/(1-\gamma))$. Thus the worst-case performance of the non-robust policy $\pi^{o}$ can be as bad as an arbitrary policy in an order sense.

\section{Sample Complexity Analysis}
\label{sec:analysis}
In this section we explain the key ideas used in the  analysis of the REVI algorithm for obtaining the sample complexity bound for each of the uncertainty sets.  Recall that we consider an RMDP ${M}$ and its empirical estimate version as $\widehat{M}$. 


To bound  $\| V^* - V^{\pi_{K}} \|$, we split it into three terms as $\| V^* - V^{\pi_{K}} \| \leq \| V^* - \widehat{V}^{*} \| + \| \widehat{V}^{*} - \widehat{V}^{\pi_{K}} \| + \| \widehat{V}^{\pi_{K}} - V^{\pi_{K}} \|,$
and analyze each term separately. 

Analyzing the second term, $\| \widehat{V}^{*} - \widehat{V}^{\pi_{K}} \|$,  is similar to that  of non-robust algorithms. Due to the contraction property of the robust Bellman operator, it is straight forward to show that $ \| \widehat{V}^{*} - \widehat{V}^{\pi_{k+1}} \| \leq \gamma  \| \widehat{V}^{*} - \widehat{V}^{\pi_{k}} \|$ for any $k$. This exponential convergence, with some additional results from the MDP theory,   enables us to get a bound  $
\| \widehat{V}^{*} - \widehat{V}^{\pi_{K}} \| \leq {2\gamma^{K+1}}/{(1-\gamma)^2}$.

The analysis of terms $\| V^* - \widehat{V}^{*}\|$ and $\| \widehat{V}^{\pi_{K}} - V^{\pi_{K}} \|$ are however  non-trivial and significantly more challenging  compared to the non-robust setting. We will focus on the latter, and the analysis of the former is similar. 

For any policy $\pi$ and for any state $s$, and denoting $a = \pi(s)$,   we have
\small
\begin{align}
&V^{\pi}(s) -  \widehat{V}^{\pi}(s)=\gamma \sigma_{{\Pp}_{s,a}} (V^{\pi}) - \gamma \sigma_{\widehat{\Pp}_{s,a}} (\widehat{V}^{\pi}) \nonumber \\
\label{eq:sigma-split-1}
&= \gamma  (\sigma_{{\Pp}_{s,a}} (V^{\pi}) -  \sigma_{{\Pp}_{s,a}} (\widehat{V}^{\pi})) +  \gamma (\sigma_{{\Pp}_{s,a}} (\widehat{V}^{\pi}) - \sigma_{\widehat{\Pp}_{s,a}} (\widehat{V}^{\pi}) )
\end{align} 
\normalsize
To bound the first term in \eqref{eq:sigma-split-1}, we present a result that shows that $\sigma_{{\Pp_{s,a}}}$   is $1$-Lipschitz  in the sup-norm.
\begin{lemma}
\label{lem:sigma_v_diff}
For any $(s,a)\in\ScA$ and  for any $ V_{1}, V_{2} \in \R^{|\Ss|}$, we have $ |{\sigma}_{{\Pp}_{s,a}} (V_{1}) - {\sigma}_{{\Pp}_{s,a}} (V_{2}) | \leq \|V_{1} - V_{2}\|$ and $
	     |{\sigma}_{\widehat{\Pp}_{s,a}} (V_{1}) - {\sigma}_{\widehat{\Pp}_{s,a}} (V_{2}) | \leq \|V_{1} - V_{2}\|.$
\end{lemma}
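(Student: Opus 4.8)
The plan is to reduce the statement to two elementary facts: that the difference of two infima taken over a common set is controlled by the supremum of the pointwise difference of the objectives, and that every feasible point $u\in\Pp_{s,a}$ (and likewise in $\widehat{\Pp}_{s,a}$) is an honest probability distribution. The distance metric $D(\cdot,\cdot)$ will play no role whatsoever; only the simplex constraints matter, which is why a single argument will cover the TV, chi-square, and KL sets and both the true and empirical versions simultaneously.

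First I would record the generic infimum inequality. Fixing $f(u)=u^\top V_1$ and $g(u)=u^\top V_2$, for an arbitrary $u$ in a set $\mathcal{B}$ we have
\[
\inf_{w\in\mathcal{B}} w^\top V_1 \le u^\top V_1 = u^\top V_2 + u^\top(V_1-V_2) \le u^\top V_2 + \sup_{w\in\mathcal{B}} \abs{w^\top(V_1-V_2)}.
\]
Taking the infimum over $u\in\mathcal{B}$ on the right-hand side, and then exchanging the roles of $V_1$ and $V_2$, yields
\[
\abs{\sigma_{\mathcal{B}}(V_1) - \sigma_{\mathcal{B}}(V_2)} \le \sup_{u\in\mathcal{B}} \abs{u^\top(V_1-V_2)}.
\]

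Next I would bound the right-hand side using the structure of the uncertainty set. By the definition in \eqref{eq:uncertainty-set}, every $u\in\Pp_{s,a}$ satisfies $u\ge 0$ and $\sum_{s'} u(s')=1$, i.e.\ $u$ lies in the probability simplex. Hence for any such $u$,
\[
\abs{u^\top(V_1-V_2)} = \Big| \sum_{s'} u(s')\,(V_1(s')-V_2(s')) \Big| \le \sum_{s'} u(s')\,\abs{V_1(s')-V_2(s')} \le \norm{V_1-V_2},
\]
where the last step uses $\sum_{s'} u(s')=1$ together with $\abs{V_1(s')-V_2(s')}\le\norm{V_1-V_2}$ for every $s'$. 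Combining this with the previous display for $\mathcal{B}=\Pp_{s,a}$ gives the first claim.

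Finally, I would observe that nothing in this computation referred to the metric $D(\cdot,\cdot)$, to the radius $c_r$, or to whether the set is centered at $P^o_{s,a}$ or $\widehat{P}^o_{s,a}$; only membership of $u$ in the probability simplex was used. Consequently the identical argument with $\mathcal{B}=\widehat{\Pp}_{s,a}$ delivers the second claim. There is no genuine obstacle here: the one point to get right is precisely this uniformity, namely that the Lipschitz constant is $1$ irrespective of the choice of $D$ and of $c_r$, since the simplex constraints are the only features entering the bound.
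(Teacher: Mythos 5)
Your proof is correct and follows essentially the same route as the paper's: both reduce the claim to comparing infima of linear functions over a common set and then apply H\"older's inequality, using only that every element of $\Pp_{s,a}$ (resp.\ $\widehat{\Pp}_{s,a}$) lies in the probability simplex, so that the metric $D$ and the radius $c_r$ play no role. The only cosmetic difference is that you obtain the comparison of infima by a direct pointwise bound followed by taking the infimum, whereas the paper routes through an inf--sup interchange and an $\epsilon$-near-minimizer before invoking H\"older; the two steps are equivalent in substance.
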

Using the above lemma, the first term in  \eqref{eq:sigma-split-1} will be bounded by $\gamma  \|V^{\pi} -  \widehat{V}^{\pi}\|$ and the discount factor makes this term amenable to getting a closed form bound. 

Obtaining a bound for  $\sigma_{{\Pp}_{s,a}} (\widehat{V}^{\pi}) - \sigma_{\widehat{\Pp}_{s,a}} (\widehat{V}^{\pi})$ is the most challenging part of our analysis. In the non-robust setting, this  will be equivalent to the error term $P^{o}_{s,a} V - \widehat{P}_{s,a} V$, which is unbiased and can be easily bounded using  concentration inequalities. In the robust setting, however, because of the  nonlinear nature of the function $\sigma(\cdot)$, $\mathbb{E}[\sigma_{\widehat{\Pp}_{s,a}} (\widehat{V}^{\pi})] \neq \sigma_{{\Pp}_{s,a}} (\widehat{V}^{\pi})$. So, using concentration inequalities to get a bound is not immediate. Our strategy is to find appropriate upperbound for this term that is amenable to using concentration inequalities. To that end, we will analyze this term separately for each of the three uncertainty set. 

\subsection{Total variation uncertainty set}
We will first get following upperbound: 
\begin{lemma}[TV uncertainty set]
\label{lem:tv-sigma-diff}
Let  $\mathcal{V} = \{V \in \mathbb{R}^{|\Ss|}: \norm{V} \leq 1/(1 - \gamma)\}$. For any $(s, a) \in \Ss \times \Aa$ and for any $V \in \V$, 
\begin{align}
\begin{split}
\label{eq:tv-sigma-diff}
 | \sigma_{\widehat{\Pp}^{\mathrm{tv}}_{s,a}} (V) - \sigma_{{\Pp}^{\mathrm{tv}}_{s,a}} (V) | &\leq  2  \max_{\mu \in \V} | \widehat{P}_{s,a} \mu - P^o_{s,a} \mu|.
\end{split}
\end{align}
\end{lemma}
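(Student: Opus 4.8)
The plan is to bound the one-sided gap $\sigma_{\widehat{\Pp}^{\mathrm{tv}}_{s,a}}(V) - \sigma_{\Pp^{\mathrm{tv}}_{s,a}}(V)$ (and then its mirror image) by transporting the worst-case distribution of one $\mathrm{TV}$-ball into the other. Both $\Pp^{\mathrm{tv}}_{s,a}$ and $\widehat{\Pp}^{\mathrm{tv}}_{s,a}$ are nonempty, convex and compact (each is the intersection of the probability simplex with an $\ell_1$-ball), so the infima defining $\sigma$ are attained; let $Q^\star \in \widehat{\Pp}^{\mathrm{tv}}_{s,a}$ be a minimizer, so that $\sigma_{\widehat{\Pp}^{\mathrm{tv}}_{s,a}}(V) = \langle Q^\star, V\rangle$ with $\tfrac12\norm{Q^\star - \widehat{P}_{s,a}}_1 \le c_r$. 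Writing $d = \tfrac12\norm{\widehat{P}_{s,a} - P^o_{s,a}}_1$, the triangle inequality gives $\tfrac12\norm{Q^\star - P^o_{s,a}}_1 \le c_r + d$, i.e. $Q^\star$ sits in a slightly enlarged $\mathrm{TV}$-ball around $P^o_{s,a}$. The main obstacle is that $Q^\star$ itself need not belong to $\Pp^{\mathrm{tv}}_{s,a}$, and the naive correction $Q^\star + P^o_{s,a} - \widehat{P}_{s,a}$ — which does have the correct radius and sums to one — can fail to be a probability vector (it may acquire negative entries). So a pure translation achieves feasibility of the radius constraint but violates nonnegativity.

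I would resolve this by pulling $Q^\star$ toward the correct center instead of translating it: set $\tilde Q = (1-t)Q^\star + t\,P^o_{s,a}$ with $t = d/(c_r + d)$. As a convex combination of two simplex points, $\tilde Q$ is automatically a valid distribution, and
\begin{align*}
\tfrac12\norm{\tilde Q - P^o_{s,a}}_1 = (1-t)\,\tfrac12\norm{Q^\star - P^o_{s,a}}_1 \le (1-t)(c_r + d) = c_r,
\end{align*}
so $\tilde Q \in \Pp^{\mathrm{tv}}_{s,a}$ is feasible for the infimum defining $\sigma_{\Pp^{\mathrm{tv}}_{s,a}}$.

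The remaining estimate is routine. Using feasibility of $\tilde Q$, Hölder's inequality, $t\,\norm{P^o_{s,a} - Q^\star}_1 \le \tfrac{d}{c_r+d}\cdot 2(c_r+d) = 2d$, and $\norm{V} \le 1/(1-\gamma)$ for $V \in \V$,
\begin{align*}
\sigma_{\Pp^{\mathrm{tv}}_{s,a}}(V) - \sigma_{\widehat{\Pp}^{\mathrm{tv}}_{s,a}}(V) \le \langle \tilde Q - Q^\star, V\rangle = t\,\langle P^o_{s,a} - Q^\star, V\rangle \le 2d\cdot\tfrac{1}{1-\gamma}.
\end{align*}
Finally I would recast the right-hand side in the linear-functional form of the statement via the identity $\tfrac{1}{1-\gamma}\norm{\widehat{P}_{s,a} - P^o_{s,a}}_1 = \max_{\mu\in\V}\abs{\widehat{P}_{s,a}\mu - P^o_{s,a}\mu}$, since the supremum of $\langle \widehat{P}_{s,a} - P^o_{s,a}, \mu\rangle$ over the symmetric sup-norm ball $\V$ is attained at a signed vertex. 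This yields $\sigma_{\Pp^{\mathrm{tv}}_{s,a}}(V) - \sigma_{\widehat{\Pp}^{\mathrm{tv}}_{s,a}}(V) \le \max_{\mu\in\V}\abs{\widehat{P}_{s,a}\mu - P^o_{s,a}\mu}$; exchanging the roles of $P^o_{s,a}$ and $\widehat{P}_{s,a}$ (the right-hand side is symmetric in them) bounds the opposite difference, and together they give \eqref{eq:tv-sigma-diff} — in fact with the sharper constant $1$ rather than $2$. The stated factor $2$ is thus comfortable slack; what matters is that the bound is kept as a maximum over $\mu\in\V$ of a \emph{linear} functional of $\widehat{P}_{s,a} - P^o_{s,a}$, which is exactly the form that later admits a Hoeffding-plus-covering concentration argument (the same conclusion can alternatively be reached through the Lagrangian dual of the $\mathrm{TV}$-constrained infimum, where the common center-independent truncation of $V$ stays in $\V$).
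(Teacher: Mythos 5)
Your proof is correct, and it takes a genuinely different route from the paper's. The paper argues on the dual side: it invokes the closed-form representation $\sigma_{{\Pp}^{\mathrm{tv}}_{s,a}} (V) = P^o_{s,a} V + \max_{0 \leq \mu \leq V} ( -P^o_{s,a} \mu - c_r\max_{s} (V-\mu)(s) + c_r\min_{s} (V-\mu)(s) )$ from \cite[Lemma 4.3]{iyengar2005robust}, applies $|\max_{x} f(x)-\max_{x} g(x)| \leq \max_{x} |f(x) - g(x)|$, and bounds the two resulting terms $|(\widehat{P}_{s,a}-P^o_{s,a})V|$ and $\max_{0 \leq \mu \leq V}|(\widehat{P}_{s,a}-P^o_{s,a})\mu|$ each by $\max_{\mu\in\V}|(\widehat{P}_{s,a}-P^o_{s,a})\mu|$ --- which is precisely where the factor $2$ originates. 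You instead work entirely on the primal side with a transport argument: contracting the minimizer $Q^\star$ of the empirical ball toward the true center via $\tilde{Q}=(1-t)Q^\star + t P^o_{s,a}$ with $t=d/(c_r+d)$, $d=\tfrac12\norm{\widehat{P}_{s,a}-P^o_{s,a}}_1$, keeps $\tilde{Q}$ in the simplex by convexity and inside ${\Pp}^{\mathrm{tv}}_{s,a}$ by an exact radius computation (this correctly sidesteps the failure of the naive translation, which you rightly flag); H\"older then gives the one-sided bound $2d/(1-\gamma)$, and the $\ell_1$/$\ell_\infty$ duality identity $\max_{\mu\in\V}|(\widehat{P}_{s,a}-P^o_{s,a})\mu| = \tfrac{1}{1-\gamma}\norm{\widehat{P}_{s,a}-P^o_{s,a}}_1 = \tfrac{2d}{1-\gamma}$ (valid because $\V$ is the symmetric sup-norm ball) converts it to the stated form with the sharper constant $1$; the mirror direction indeed follows by rerunning the transport with the centers exchanged. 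Your approach buys elementarity (no dual formula needed), a better constant, and the clean geometric insight that $\sigma$ over TV balls is $1$-Lipschitz in the ball's center with respect to $\tfrac12\norm{\cdot}_1\cdot\norm{V}$; it also exposes the error as exactly the scaled $\ell_1$ estimation error, which could even be closed by a direct $\ell_1$ concentration bound in place of the covering argument. What the paper's dual route buys is structural uniformity: the $\max_{0\leq\mu\leq V}$ form it produces parallels the chi-square decomposition in Lemma \ref{lem:chi-sigma-diff} and is the form for which Lemma \ref{lem:covering-hoeffdings} is stated, so the same machinery recycles across uncertainty sets; and since both arguments feed the identical quantity $\max_{\mu \in \V} \max_{s,a} | \widehat{P}_{s,a} \mu - P^o_{s,a} \mu|$ into Proposition \ref{lem:tv-sigma-diff-bound-uniform}, nothing is lost downstream either way.
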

While the term $| \widehat{P}_{s,a} \mu - P^o_{s,a} \mu|$ in \eqref{eq:tv-sigma-diff} can be upperbounded using the standard Hoeffding's inequality, bounding $\max_{\mu\in\V} | \widehat{P}_{s,a} \mu - P^o_{s,a} \mu|$ is more challenging as it requires a uniform bound. Since $\mu$ can take a continuum of values, a simple union bound argument will also not work. We overcome this issue by using a covering number argument and obtain the following bound. 
\begin{lemma}
\label{lem:covering-hoeffdings}
Let $V \in \mathbb{R}^{|\Ss|}$  with $\norm{V} \leq 1/(1-\gamma)$. For any $\eta, \delta \in (0, 1)$, 
\begin{align*}
&\max_{\mu: 0 \leq \mu \leq V}~ \max_{s,a} ~ | \widehat{P}_{s,a} \mu - P^o_{s,a} \mu|  \leq \\
&\hspace{1.5cm} \frac{1}{1-\gamma} \sqrt{\frac{|\Ss|}{2N} \log( \frac{12|\Ss||\Aa|}{(\delta\eta(1-\gamma))} } + 2\eta,
\end{align*}
with probability at least $1 - \delta/2$. 
\end{lemma}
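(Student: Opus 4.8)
The plan is to reduce the uniform bound (over the continuum of vectors $\mu$) to a finite union bound via a covering-number argument, and then apply Hoeffding's inequality to each fixed net point. First I would observe that for any \emph{fixed} vector $\mu$ with $0 \le \mu(s') \le 1/(1-\gamma)$ for all $s'$, the quantity $\widehat{P}_{s,a}\mu = \frac{1}{N}\sum_{i=1}^N \mu(s'_i)$ is an empirical average of $N$ i.i.d.\ random variables drawn from $P^o_{s,a}$, each taking values in $[0, 1/(1-\gamma)]$, with mean $P^o_{s,a}\mu$. Hoeffding's inequality then gives, for each $(s,a)$ and each such fixed $\mu$,
\[
\pr\big( \abs{\widehat{P}_{s,a}\mu - P^o_{s,a}\mu} \ge t \big) \le 2\exp\big(-2N(1-\gamma)^2 t^2\big).
\]

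Next, since $0 \le \mu \le V$ and $\norm{V}\le 1/(1-\gamma)$ force $\mu$ into the box $B := [0,1/(1-\gamma)]^{|\Ss|}$, I would construct an $\eta$-net $\N_\eta \subseteq B$ of $B$ in the $\ell_\infty$ metric. Covering each of the $|\Ss|$ coordinate intervals of length $1/(1-\gamma)$ at resolution $\eta$ requires at most $\lceil 1/(2\eta(1-\gamma)) \rceil$ points, so $\abs{\N_\eta} \le (1/(\eta(1-\gamma)))^{|\Ss|}$. Applying the Hoeffding bound together with a union bound over the $\abs{\N_\eta}\,\abs{\Ss}\abs{\Aa}$ pairs $(\mu',(s,a))$ with $\mu'\in\N_\eta$, setting the failure probability to $\delta/2$, and solving for $t$ yields, with probability at least $1-\delta/2$,
\[
\max_{\mu'\in\N_\eta}\max_{s,a} \abs{\widehat{P}_{s,a}\mu' - P^o_{s,a}\mu'} \le \frac{1}{1-\gamma}\sqrt{\frac{1}{2N}\log\Big(\tfrac{4\abs{\N_\eta}\abs{\Ss}\abs{\Aa}}{\delta}\Big)}.
\]
Plugging in $\log\abs{\N_\eta} \le \abs{\Ss}\log(1/(\eta(1-\gamma)))$ and absorbing the residual $\log(4\abs{\Ss}\abs{\Aa}/\delta)$ into the factor $\abs{\Ss}$ (using $\abs{\Ss}\ge 1$ and enlarging the numerical constant from $4$ to $12$) produces exactly the $\frac{1}{1-\gamma}\sqrt{\tfrac{\abs{\Ss}}{2N}\log(12\abs{\Ss}\abs{\Aa}/(\delta\eta(1-\gamma)))}$ term in the statement.

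Finally I would transfer the bound from the net to the whole box. Given an arbitrary $\mu$ with $0\le\mu\le V$, pick $\mu'\in\N_\eta$ with $\norm{\mu-\mu'}\le\eta$ and decompose
\[
\abs{\widehat{P}_{s,a}\mu - P^o_{s,a}\mu} \le \abs{\widehat{P}_{s,a}\mu' - P^o_{s,a}\mu'} + \abs{\widehat{P}_{s,a}(\mu-\mu')} + \abs{P^o_{s,a}(\mu-\mu')}.
\]
Because $\widehat{P}_{s,a}$ and $P^o_{s,a}$ are probability vectors, each of the last two terms is at most $\norm{\mu-\mu'}\le\eta$, contributing the additive $2\eta$; combining this with the net bound above completes the argument.

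The main obstacle is the uniform control over the continuum of test vectors $\mu$: a naive union bound is impossible, and the nonlinearity that motivated the whole approach has already been stripped away in Lemma~\ref{lem:tv-sigma-diff}, so the remaining difficulty is purely the net construction. The delicate points are choosing the resolution $\eta$ so that the covering number enters only through the logarithm (keeping the sample complexity polynomial) while the discretization error $2\eta$ stays controllable, and noting that the net point $\mu'$ need not satisfy $\mu'\le V$ yet still lies in $B$, so Hoeffding applies with the coordinate range $1/(1-\gamma)$.
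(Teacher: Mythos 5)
Your proposal is correct and follows essentially the same route as the paper's proof: a fixed-point Hoeffding bound, a union bound over a finite $\eta$-cover (the paper covers the sup-norm ball $\V$ via its covering-number lemma, $\log|\N_{\V}(\eta)| \leq |\Ss|\log(3/(\eta(1-\gamma)))$, rather than your explicit product net of the nonnegative box, but with the same $|\Ss|\log(\cdot)$ growth), and a triangle-inequality transfer from the net to arbitrary $\mu$ contributing the additive $2\eta$. Your constant bookkeeping, absorbing $\log(4|\Ss||\Aa|/\delta)$ into the $|\Ss|\log(12|\Ss||\Aa|/(\delta\eta(1-\gamma)))$ factor, matches the paper's as well.
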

We note that this uniform bound adds an additional $\sqrt{|\Ss|}$ factor compared to the  non-robust setting, which results in an additional ${|\Ss|}$ in the sample complexity.  Combining these, we finally get the following result. 
\begin{proposition}
\label{lem:tv-sigma-diff-bound-uniform}
Let  $\mathcal{V} = \{V \in \mathbb{R}^{|\Ss|}: \norm{V} \leq 1/(1 - \gamma)\}$. For any $\eta, \delta \in (0, 1)$, with probability at least $1 - \delta$,
\begin{align}
\max_{V \in \V} &\max_{s,a}  | \sigma_{\widehat{\Pp}^{\mathrm{tv}}_{s,a}} (V) - \sigma_{{\Pp}^{\mathrm{tv}}_{s,a}} (V) |  \leq   {C}^{\mathrm{tv}}_{u}(N,\eta, \delta),\text{ where,} \nonumber \\
&{C}^{\mathrm{tv}}_{u}(N,\eta, \delta) = 4 \eta ~+ \nonumber\\& \hspace{0.5cm} \frac{2}{1-\gamma} \sqrt{\frac{|\Ss|\log(6|\Ss||\Aa|/(\delta\eta(1-\gamma)))}{2N}}.  \label{eq:c-tv-uniform}
\end{align}
\end{proposition}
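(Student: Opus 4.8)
The plan is to read the proposition as the synthesis of Lemma~\ref{lem:tv-sigma-diff} and Lemma~\ref{lem:covering-hoeffdings}: the two genuinely hard steps are already done there. Lemma~\ref{lem:tv-sigma-diff} removes the nonlinearity of $\sigma$ by dominating the $\sigma$-difference with a \emph{linear} functional of the model error, and Lemma~\ref{lem:covering-hoeffdings} supplies the uniform-in-$\mu$ concentration bound on that functional through the covering-number argument (the step that handles the bias of the empirical $\sigma$). What is left is assembly, together with careful tracking of the numerical constants and the confidence level.

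First I would fix $(s,a)$ and an arbitrary $V\in\V$ and invoke Lemma~\ref{lem:tv-sigma-diff} to get
\[
  |\sigma_{\widehat{\Pp}^{\mathrm{tv}}_{s,a}}(V) - \sigma_{\Pp^{\mathrm{tv}}_{s,a}}(V)|
  \;\le\; 2\,\max_{\mu}\,|\widehat{P}_{s,a}\mu - P^o_{s,a}\mu|,
\]
the supremum running over the $\mu$ associated with $V$. Since $\widehat{P}_{s,a}$ and $P^o_{s,a}$ are both probability vectors, $(\widehat{P}_{s,a}-P^o_{s,a})\mathbf{1}=0$, so the inner product is invariant under shifting $\mu$ by constant vectors; combined with the nonnegativity of the value functions this lets me take the $\mu$-domain to be the order interval $\{\mu:0\le\mu\le V\}$ that appears in Lemma~\ref{lem:covering-hoeffdings}. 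As $V$ ranges over $\V$ these intervals are all contained in the single interval $\{\mu:0\le\mu\le\tfrac{1}{1-\gamma}\mathbf{1}\}$, obtained at the extreme point $V=\tfrac{1}{1-\gamma}\mathbf{1}\in\V$, so maximizing the left-hand side over both $V\in\V$ and $(s,a)$ reduces the target to the single uniform quantity $2\max_{s,a}\max_{\mu:0\le\mu\le\frac{1}{1-\gamma}\mathbf{1}}|\widehat{P}_{s,a}\mu - P^o_{s,a}\mu|$.

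Next I would apply Lemma~\ref{lem:covering-hoeffdings} to this $\mu$-supremum, but with its confidence parameter set to $2\delta$ rather than $\delta$. This single substitution does double duty: it upgrades the guarantee from probability $1-\delta/2$ to $1-\delta$, and it replaces the constant $12$ inside the logarithm by $6$, so that, with probability at least $1-\delta$,
\[
  \max_{s,a}\max_{\mu}|\widehat{P}_{s,a}\mu - P^o_{s,a}\mu|
  \;\le\; \frac{1}{1-\gamma}\sqrt{\frac{|\Ss|\log(6|\Ss||\Aa|/(\delta\eta(1-\gamma)))}{2N}} + 2\eta .
\]
Multiplying by the factor $2$ carried from Lemma~\ref{lem:tv-sigma-diff} turns $2\eta$ into $4\eta$ and the prefactor $\tfrac{1}{1-\gamma}$ into $\tfrac{2}{1-\gamma}$, which is exactly $C^{\mathrm{tv}}_{u}(N,\eta,\delta)$.

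The only place I expect to spend care is this final bookkeeping. There is no analytic obstacle remaining, so the risk is purely in the constants: I must confirm that the $\delta\mapsto 2\delta$ rescaling simultaneously yields the $1-\delta$ confidence and the $6$ in the logarithm, that the factor $2$ distributes correctly onto both the $\eta$ term and the square-root term, and that the reduction of the $\mu$-domain to the order interval of Lemma~\ref{lem:covering-hoeffdings} introduces no stray multiplicative factor. The substantive work---linearizing the biased estimate and the covering argument for the uniform bound---is entirely inherited from the two preceding results.
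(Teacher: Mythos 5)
Your architecture is the same as the paper's---Lemma \ref{lem:tv-sigma-diff} plus the covering-based uniform concentration---and your $\delta\mapsto 2\delta$ bookkeeping is exactly how the $12$ in Lemma \ref{lem:covering-hoeffdings} becomes the $6$ in \eqref{eq:c-tv-uniform} while upgrading the confidence to $1-\delta$ (modulo the small caveat that this substitution needs $2\delta\in(0,1)$, i.e.\ $\delta<1/2$). However, the one step you flagged as risky is in fact broken: the reduction of the $\mu$-domain from the ball $\V$ to the order interval $\{\mu : 0\le\mu\le \tfrac{1}{1-\gamma}\mathbf{1}\}$ is \emph{not} lossless---it costs a factor of $2$, and that factor is tight. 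Shift-invariance of $\mu\mapsto(\widehat{P}_{s,a}-P^o_{s,a})\mu$ maps $\V$ onto the interval $\{\nu : 0\le\nu\le \tfrac{2}{1-\gamma}\mathbf{1}\}$, not $\{\nu : 0\le\nu\le \tfrac{1}{1-\gamma}\mathbf{1}\}$, since the ball has sup-norm diameter $2/(1-\gamma)$. Concretely, with $|\Ss|=2$, $\widehat{P}_{s,a}=(1,0)$ and $P^o_{s,a}=(0,1)$, one has
\begin{align*}
\max_{\mu\in\V}\,|(\widehat{P}_{s,a}-P^o_{s,a})\mu| = \frac{2}{1-\gamma},
\qquad
\max_{0\le\mu\le\frac{1}{1-\gamma}\mathbf{1}}\,|(\widehat{P}_{s,a}-P^o_{s,a})\mu| = \frac{1}{1-\gamma}.
\end{align*}
So as written your argument only establishes the proposition with $C^{\mathrm{tv}}_u$ replaced by $8\eta + \tfrac{4}{1-\gamma}\sqrt{|\Ss|\log(6|\Ss||\Aa|/(\delta\eta(1-\gamma)))/(2N)}$---the same order, but not the stated constants, to which the choices $\eta=\epsilon(1-\gamma)/(24\gamma)$ and $N^{\mathrm{tv}}$ in the proof of Theorem \ref{thm:revi-TV-guarantee} are tuned.

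The repair is to skip the domain reduction entirely, which is what the paper does: the \emph{proof} of Lemma \ref{lem:covering-hoeffdings} (as opposed to its statement) already establishes the uniform bound over all of $\mu\in\V$, because its $\eta$-cover $\N_\V(\eta)$ covers the full ball and Lemma \ref{lem:hoeffding_concentration} applies at every cover point. One simply re-runs that argument with failure probability $\delta$ in place of $\delta/2$ (which is where the $6$ and the $1-\delta$ come from) and multiplies by the $2$ supplied by Lemma \ref{lem:tv-sigma-diff}. If you instead insist on going through the order interval, you would also have to rework Lemma \ref{lem:tv-sigma-diff} itself: for $V\ge 0$ one has $V\in\{\mu : 0\le\mu\le V\}$, so both terms in its proof are dominated by the order-interval maximum without any shift; but that route only yields the proposition restricted to nonnegative $V$, not the full signed ball $\V$ appearing in the statement.
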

Tracing back the steps to \eqref{eq:sigma-split-1}, we can get an arbitrary small bound for  $\|V^{\pi} -  \widehat{V}^{\pi}\|$ by selecting $N$ appropriately, as specified in Theorem \ref{thm:revi-TV-guarantee}. 

\subsection{Chi-square uncertainty set}
We will first get the following upperbound: 
\begin{lemma}[Chi-square uncertainty set]
\label{lem:chi-sigma-diff}
For any $(s, a) \in \Ss \times \Aa$ and for any $V \in \mathbb{R}^{|\Ss|}, \norm{V} \leq 1/(1-\gamma)$, 
\begin{align}
\label{eq:chi-sigma-diff}
 &| \sigma_{\widehat{\Pp}^{\mathrm{c}}_{s,a}} (V) - \sigma_{{\Pp}^{\mathrm{c}}_{s,a}} (V)  |  \leq  \nonumber \\
 & \max_{\mu: 0 \leq \mu \leq V} | \sqrt{c_r \mathrm{Var}_{\widehat{P}_{s,a}}(V-\mu)} - \sqrt{c_r \mathrm{Var}_{P^o_{s,a}}(V-\mu)}| \nonumber  \\
&+ \max_{\mu: 0 \leq \mu \leq V} | \widehat{P}_{s,a} (V-\mu) - P^o_{s,a} (V-\mu)|. 
\end{align}
\end{lemma}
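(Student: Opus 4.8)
The plan is to reduce both $\sigma_{\widehat{\Pp}^{\mathrm{c}}_{s,a}}(V)$ and $\sigma_{{\Pp}^{\mathrm{c}}_{s,a}}(V)$ to a common variational (dual) form in which the \emph{only} difference between the two quantities sits inside an explicit objective depending on $\widehat{P}_{s,a}$ versus $P^o_{s,a}$, and then to bound the difference of the two maximizations using the elementary inequality $|\max_{\mu} f(\mu) - \max_{\mu} g(\mu)| \le \max_{\mu}|f(\mu)-g(\mu)|$ followed by the triangle inequality.

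First I would establish the strong-duality representation
\begin{align*}
\sigma_{{\Pp}^{\mathrm{c}}_{s,a}}(V) = \max_{\mu:\, 0 \le \mu \le V} \Big\{ P^o_{s,a}(V-\mu) - \sqrt{c_r\,\mathrm{Var}_{P^o_{s,a}}(V-\mu)} \Big\},
\end{align*}
together with the identical formula having $\widehat{P}_{s,a}$ in place of $P^o_{s,a}$. By definition $\sigma_{{\Pp}^{\mathrm{c}}_{s,a}}(V)$ is the value of the convex program $\min\{P^{\top}V : D_{\mathrm{c}}(P, P^o_{s,a}) \le c_r,\ \sum_{s'\in\Ss}P(s')=1,\ P \ge 0\}$. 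Since $P^o_{s,a}$ is itself strictly feasible whenever $c_r>0$, Slater's condition holds and strong duality applies. Dualizing the normalization and nonnegativity constraints, the multiplier vector attached to $P \ge 0$ plays the role of $\mu$ (which is exactly why it enters as $V-\mu$ with $0 \le \mu \le V$), and after eliminating the remaining multipliers one is left with minimizing the linear objective over the intersection of the chi-square ellipsoid $\{\Delta : \sum_{s'}\Delta_{s'}^2/P^o_{s,a}(s') \le c_r\}$ with the hyperplane $\sum_{s'}\Delta_{s'}=0$; this inner problem is solved in closed form by a Cauchy--Schwarz / Lagrange-multiplier computation and produces precisely the term $-\sqrt{c_r\,\mathrm{Var}_{P^o_{s,a}}(V-\mu)}$. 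I expect this derivation of the dual form to be the main obstacle, as it is where the positivity constraint and the variance term must be reconciled.

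Given the two dual forms, set $f(\mu) = \widehat{P}_{s,a}(V-\mu) - \sqrt{c_r\,\mathrm{Var}_{\widehat{P}_{s,a}}(V-\mu)}$ and $g(\mu) = P^o_{s,a}(V-\mu) - \sqrt{c_r\,\mathrm{Var}_{P^o_{s,a}}(V-\mu)}$, so that the left-hand side of \eqref{eq:chi-sigma-diff} equals $|\max_{\mu} f(\mu) - \max_{\mu} g(\mu)|$ over $\{0 \le \mu \le V\}$. The inequality $|\max f - \max g| \le \max|f-g|$ follows since $f(\mu) \le g(\mu) + \max_{\nu}|f(\nu)-g(\nu)| \le \max_{\nu} g(\nu) + \max_{\nu}|f(\nu)-g(\nu)|$ for every $\mu$, and symmetrically. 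I would then split $f(\mu)-g(\mu)$ into its mean part and its standard-deviation part and apply the triangle inequality:
\begin{align*}
|f(\mu)-g(\mu)| \le |\widehat{P}_{s,a}(V-\mu) - P^o_{s,a}(V-\mu)| + \big|\sqrt{c_r\,\mathrm{Var}_{\widehat{P}_{s,a}}(V-\mu)} - \sqrt{c_r\,\mathrm{Var}_{P^o_{s,a}}(V-\mu)}\big|.
\end{align*}
Finally, bounding the maximum of a sum by the sum of the two maxima over $\{0 \le \mu \le V\}$ yields exactly the two-term bound in \eqref{eq:chi-sigma-diff} (the order of the two terms being immaterial). Everything after the dual representation is routine, so the entire weight of the argument rests on the first paragraph.
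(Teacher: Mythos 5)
Your proposal is correct and follows essentially the same route as the paper: the paper simply imports your first paragraph's dual representation as a known result (\citealp[Lemma 4.2]{iyengar2005robust}) rather than re-deriving it via Slater's condition and the weighted Cauchy--Schwarz computation, and then applies exactly the same elementary steps, $|\max_{\mu} f(\mu)-\max_{\mu} g(\mu)|\leq \max_{\mu}|f(\mu)-g(\mu)|$ followed by the triangle inequality and splitting the maximum of the sum into the sum of the maxima. One small point worth noting: the restriction $\mu\leq V$ does not fall directly out of the nonnegativity multiplier as you suggest, but is without loss of generality since replacing $\mu$ by $\min(\mu,V)$ increases the mean term and, because truncation is a $1$-Lipschitz map, cannot increase the variance term.
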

The second term  of \eqref{eq:chi-sigma-diff} can be bounded using Lemma \ref{lem:covering-hoeffdings}.  However, the first term, which involves the square-root of the variance is more challenging. We use a concentration inequality that is applicable for variance to overcome this challenge.
Finally, we get the following result. 
\begin{proposition}
\label{lem:chi-sigma-diff-bound-uniform}
Let  $\mathcal{V} = \{V \in \mathbb{R}^{|\Ss|}: \norm{V} \leq 1/(1 - \gamma)\}$. For any $\eta, \delta \in (0, 1)$, with probability at least $(1 - \delta)$,
\begin{align*}
\max_{V \in \V} \max_{s,a} | \sigma_{\widehat{\Pp}^{\mathrm{c}}_{s,a}} (V) - \sigma_{{\Pp}^{\mathrm{c}}_{s,a}} (V) |  \leq   {C}^{\mathrm{c}}_{u}(N,\eta, \delta),~\text{where,} 
\end{align*}
\begin{align}
&{C}^{\mathrm{c}}_{u}(N,\eta, \delta) \leq  \sqrt{\frac{32\eta c_r}{1-\gamma}}  + 2\eta
+\nonumber\\  &\hspace{0.1cm}\frac{1}{1-\gamma} \sqrt{\frac{(2 c_r+1) |\Ss|\log(12|\Ss||\Aa|/(\delta\eta(1-\gamma)))}{N}} ,
\label{eq:c-chi-uniform}
\end{align}
\end{proposition}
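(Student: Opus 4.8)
The plan is to begin from the pointwise decomposition already supplied by Lemma~\ref{lem:chi-sigma-diff}, which writes $\abs{\sigma_{\widehat{\Pp}^{\mathrm{c}}_{s,a}}(V) - \sigma_{{\Pp}^{\mathrm{c}}_{s,a}}(V)}$ as a standard-deviation-difference term plus a linear-difference term, and to bound each one uniformly over $V\in\V$ and over $(s,a)$. Taking the maximum over $V$ and $(s,a)$ on both sides of \eqref{eq:chi-sigma-diff}, it suffices to control, on a single high-probability event, the two quantities
\[
T_1 = \sqrt{c_r}\,\max_{V,s,a}\,\max_{0\le\mu\le V}\Bigl|\sqrt{\mathrm{Var}_{\widehat{P}_{s,a}}(V-\mu)} - \sqrt{\mathrm{Var}_{P^o_{s,a}}(V-\mu)}\Bigr|
\]
and
\[
T_2 = \max_{V,s,a}\,\max_{0\le\mu\le V}\bigl|\widehat{P}_{s,a}(V-\mu) - P^o_{s,a}(V-\mu)\bigr|,
\]
and then intersect the two events. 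The term $T_2$ is immediate: substituting $\tilde\mu = V-\mu$, which ranges over $\{0\le\tilde\mu\le V\}$ exactly as $\mu$ does, $T_2$ is precisely the quantity bounded in Lemma~\ref{lem:covering-hoeffdings}, so with probability at least $1-\delta/2$ it is at most $\frac{1}{1-\gamma}\sqrt{\frac{|\Ss|}{2N}\log(12|\Ss||\Aa|/(\delta\eta(1-\gamma)))} + 2\eta$. This accounts for the additive $2\eta$ and for the $``+1"$ inside $\sqrt{2c_r+1}$ in the stated bound.

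The heart of the argument, and the main obstacle, is $T_1$. Here I would fix $W = V-\mu \in [0,1/(1-\gamma)]^{|\Ss|}$ and write $s_P(W) := \sqrt{\mathrm{Var}_P(W)}$ as a functional of the sampling distribution. The key difficulty is the rate: if one passed through the variance and used $\abs{\sqrt a - \sqrt b}\le\sqrt{\abs{a-b}}$ together with an $N^{-1/2}$ concentration of the empirical variance, one would only obtain an $N^{-1/4}$ bound, which does not match the statement. To recover the correct $N^{-1/2}$ rate I must concentrate the standard deviation \emph{directly}. I would therefore invoke a Maurer--Pontil-type inequality (the concentration ``applicable for variance'' alluded to before the statement): for $N$ i.i.d.\ samples valued in $[0,1/(1-\gamma)]$ and a fixed $W$, $\abs{s_{\widehat{P}_{s,a}}(W) - s_{P^o_{s,a}}(W)}\le \frac{c}{1-\gamma}\sqrt{\log(1/\delta')/N}$ holds with probability at least $1-\delta'$.

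Finally, to upgrade this pointwise estimate to the uniform bound over the continuum of $(V,\mu)$, I would reuse the covering-number device from the TV analysis. Cover $\{V-\mu : 0\le\mu\le V\}\subset[0,1/(1-\gamma)]^{|\Ss|}$ by an $\eta$-net in $\norm{\cdot}$ of cardinality at most $(1/((1-\gamma)\eta))^{|\Ss|}$, apply the direct standard-deviation concentration at every net point with a union bound over the net and over $(s,a)$ (so that $\log(1/\delta')\propto |\Ss|\log(\cdots)$, which is where the extra $\sqrt{|\Ss|}$ enters), and control the off-net error through the square-root-type Lipschitz estimate
\[
\abs{s_P(W)-s_P(W')}\le\sqrt{\abs{\mathrm{Var}_P(W)-\mathrm{Var}_P(W')}}\le\sqrt{\tfrac{4}{1-\gamma}\norm{W-W'}}\le\tfrac{2}{\sqrt{1-\gamma}}\sqrt{\eta},
\]
applied to both $\widehat{P}_{s,a}$ and $P^o_{s,a}$. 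Multiplying through by $\sqrt{c_r}$, the off-net error produces the $\sqrt{32\eta c_r/(1-\gamma)}$ term (up to the explicit constant), while the concentration part contributes the $\sqrt{2c_r}$ share of $\sqrt{2c_r+1}$. Intersecting the $T_1$ and $T_2$ events, each of probability at least $1-\delta/2$, and collecting terms yields the claimed bound with probability at least $1-\delta$. The genuine obstacle throughout is $T_1$: obtaining the $N^{-1/2}$ rate forces the direct standard-deviation concentration rather than the naive variance route, and the $\sqrt{\eta}$ (rather than $\eta$) discretization loss is an unavoidable artifact of the square-root nonlinearity.
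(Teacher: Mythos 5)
Your proposal matches the paper's proof essentially step for step: the same decomposition via Lemma~\ref{lem:chi-sigma-diff}, the same reduction of the linear term to Lemma~\ref{lem:covering-hoeffdings} by the substitution $\tilde\mu = V-\mu$, and the same treatment of the standard-deviation term by direct Maurer--Pontil concentration (the paper's Lemma~\ref{lem:variance_ineq}, instantiated as Lemma~\ref{lem:variance_concentration}) combined with an $\eta$-net over $\V_{+}$, a union bound over the net and $(s,a)$, and square-root-Lipschitz off-net control. The only deviations are immaterial constants --- your off-net loss $\sqrt{16\eta/(1-\gamma)}$ is in fact slightly tighter than the paper's $\sqrt{32\eta/(1-\gamma)}$, and the net cardinality carries a factor $3^{|\Ss|}$ you dropped --- so the argument is correct and essentially identical to the paper's.
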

Now, by selecting appropriate $N$ as specified in Theorem \ref{thm:revi-Chi-guarantee}, we can show the $\epsilon$-optimality of $\pi_{K}$. 

The details on the KL uncertainty set analysis is included in the appendix.

\section{Experiments}

\begin{figure*}[ht]
    \centering
	\includegraphics[width=\linewidth]{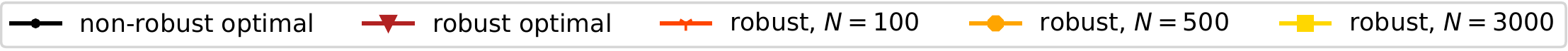}
	\begin{minipage}{.24\textwidth}
		\centering
		\includegraphics[width=\linewidth]{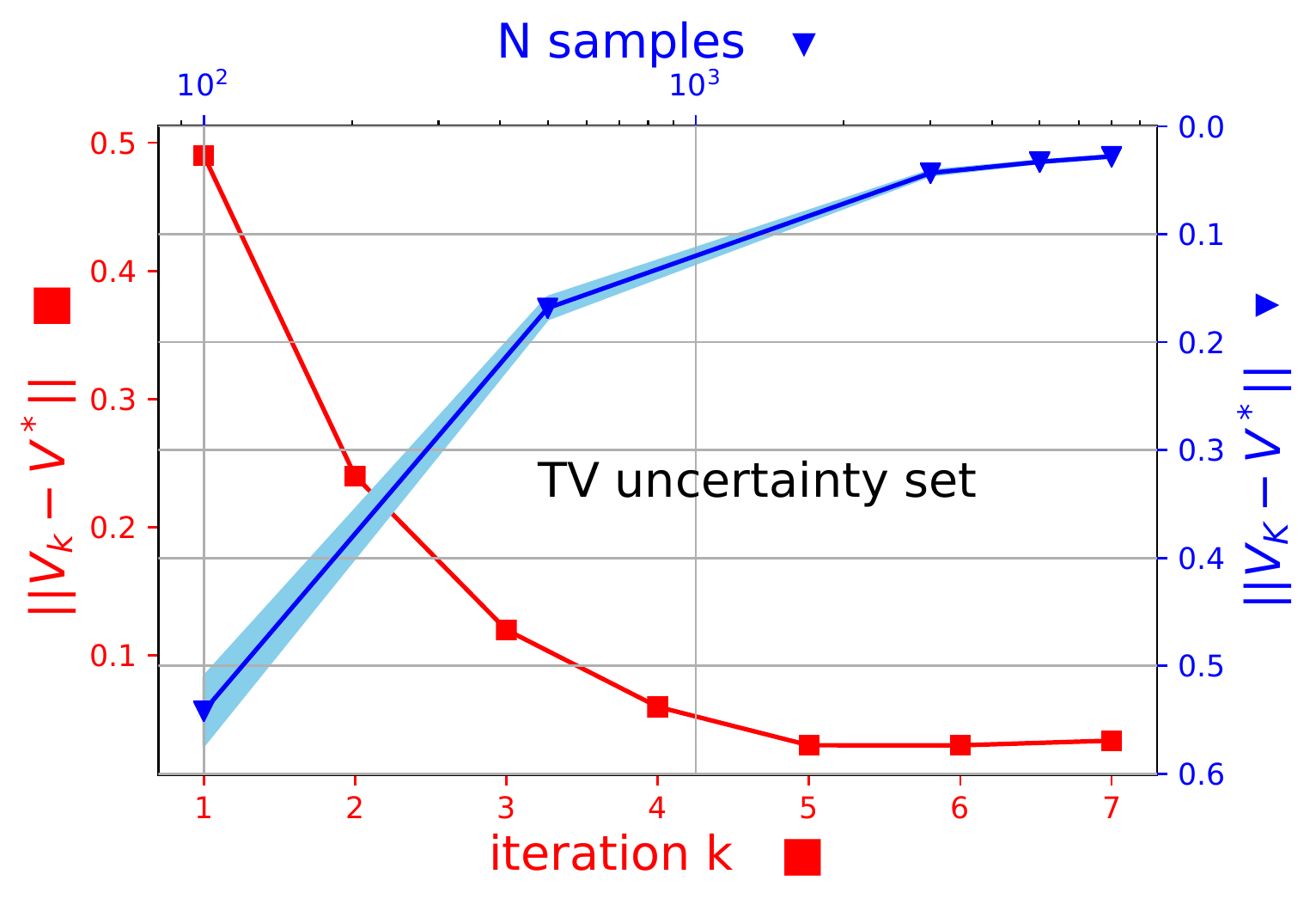}
	\end{minipage}
	\begin{minipage}{.24\textwidth}
		\centering
		\includegraphics[width=\linewidth]{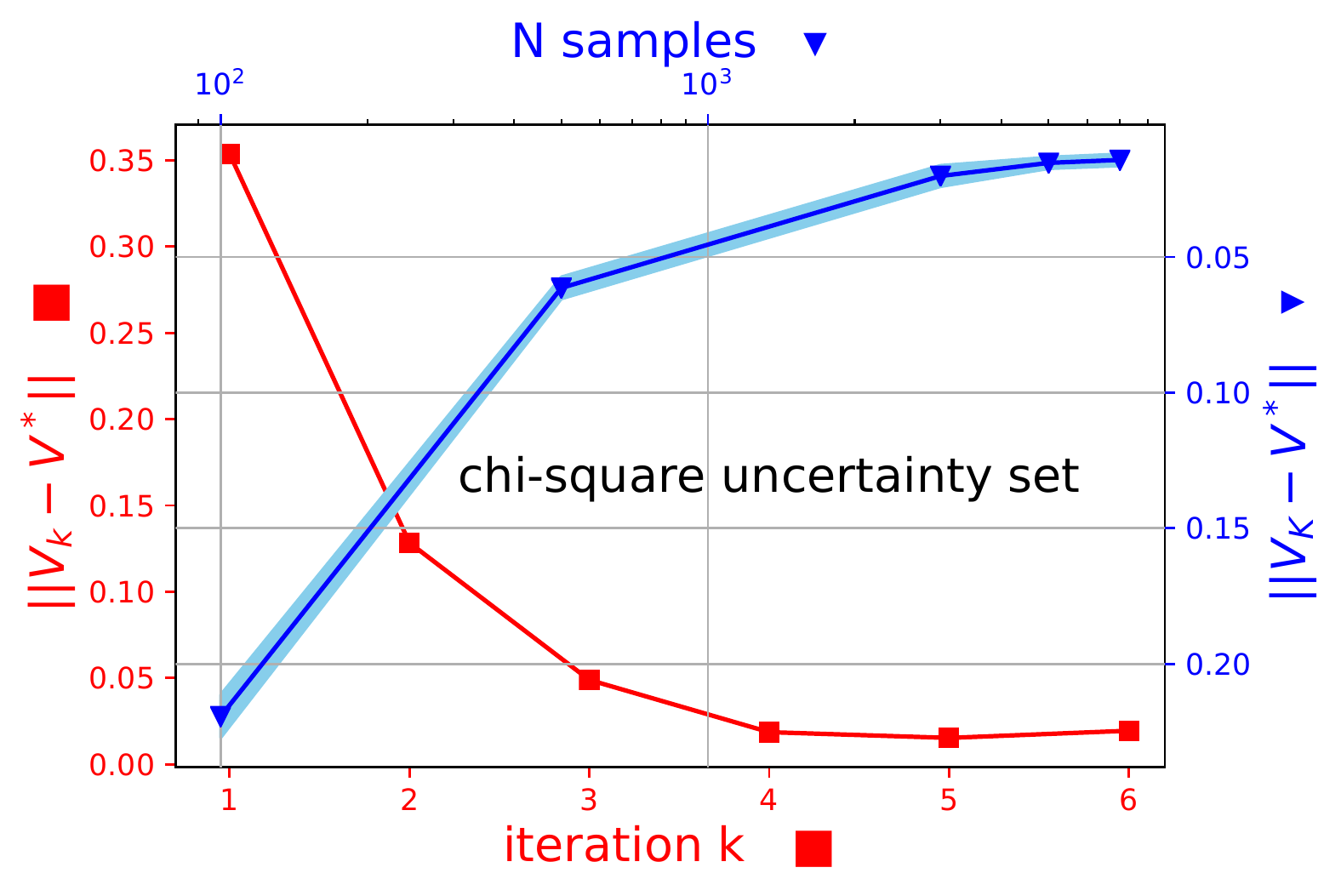}
	\end{minipage}
	\begin{minipage}{.24\textwidth}
		\centering
		\includegraphics[width=\linewidth]{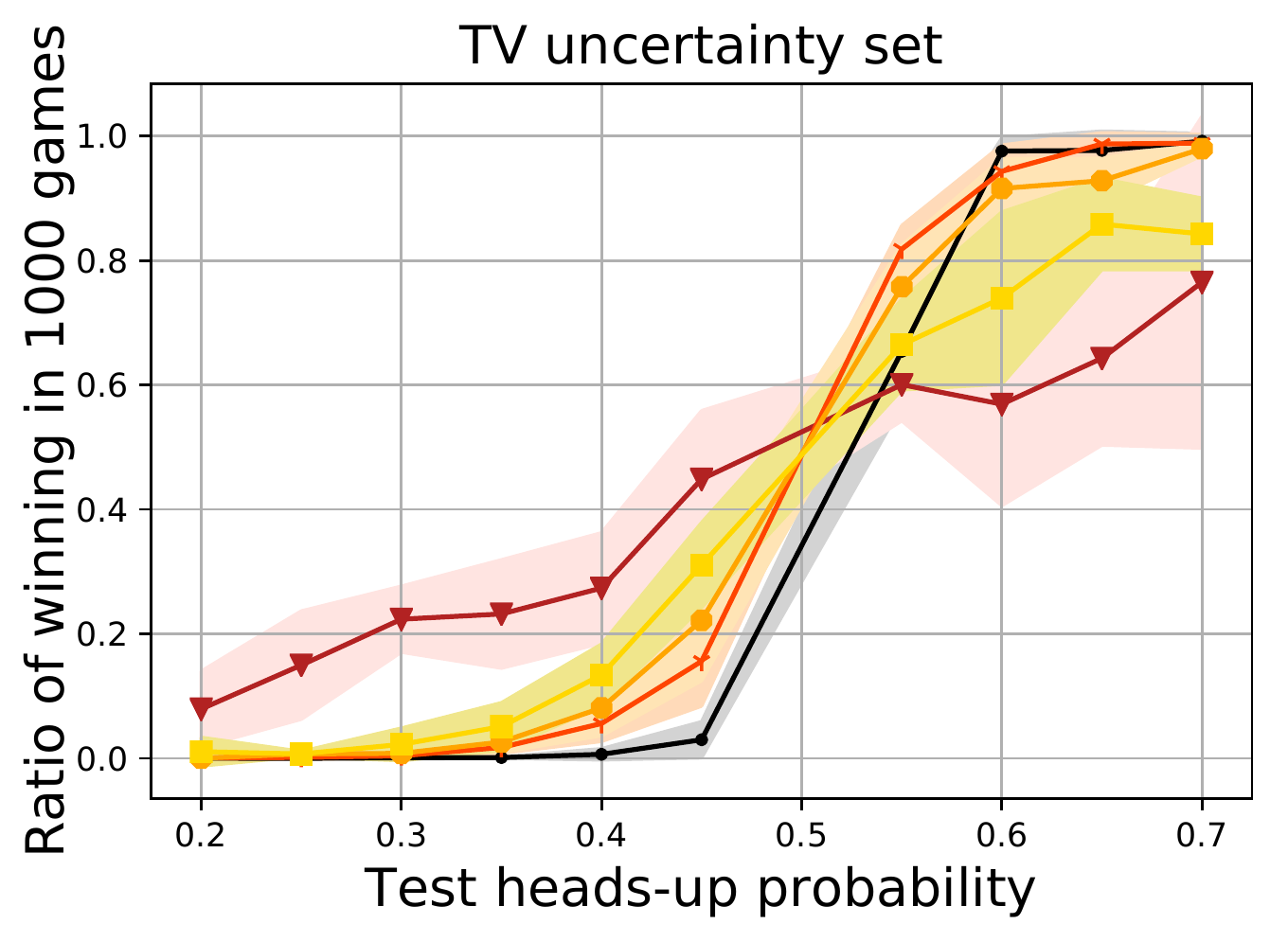}
	\end{minipage}
	\begin{minipage}{.24\textwidth}
		\centering
		\includegraphics[width=\linewidth]{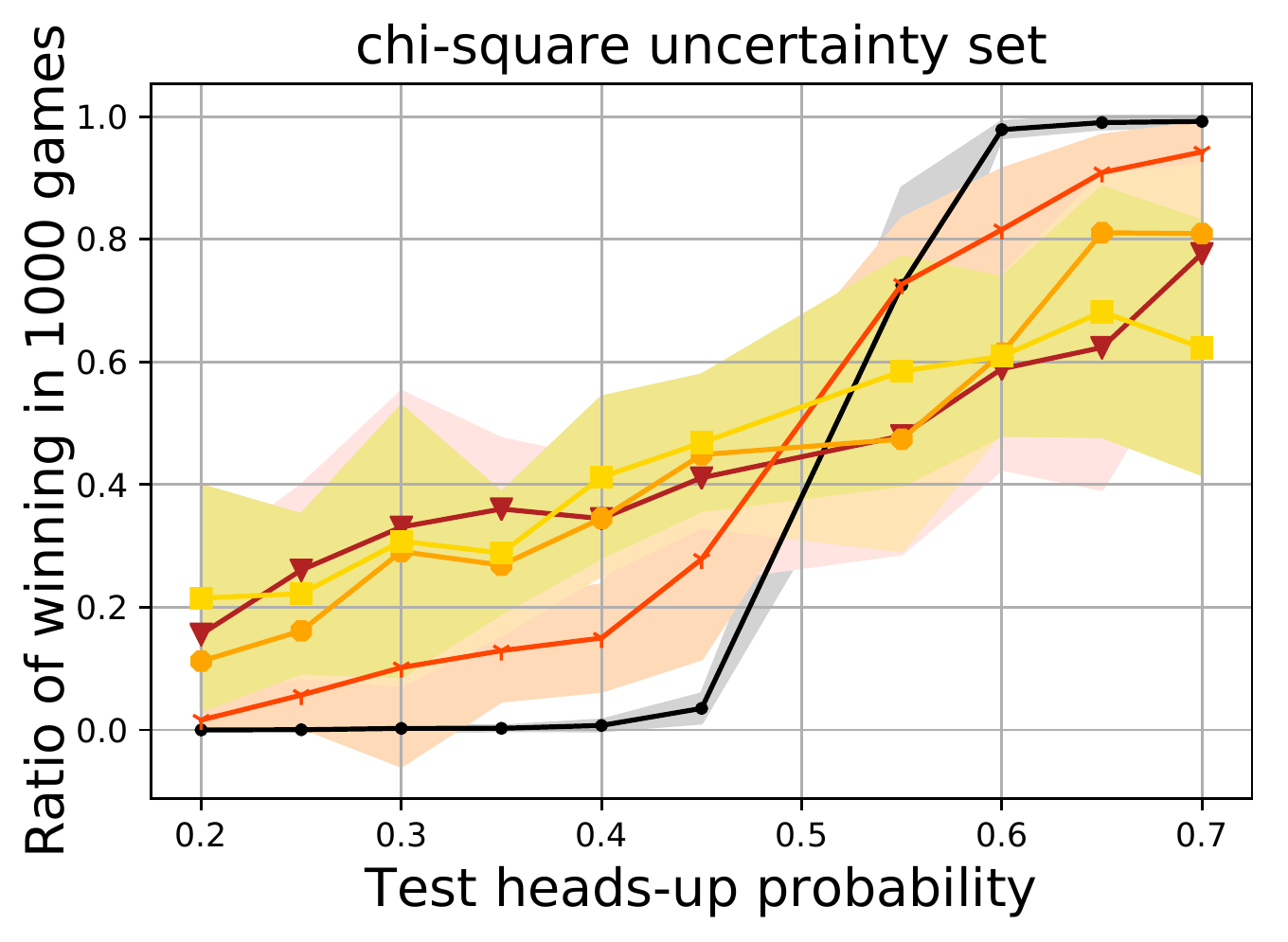}
	\end{minipage}
	\centering
	\captionof{figure}{\textit{Experiment results for the Gambler's problem.} The first two plots shows the rate of convergence with respect to the number of iterations ($k$) and the rate of convergence with respect to the number of samples $(N)$ for the TV and chi-square uncertainty set, respectively. The third and fourth plots shows the robustness of the learned policy against changes in the model parameter (heads-up probability).}
	\label{fig:gamblers}
	
    \centering
	\includegraphics[width=\linewidth]{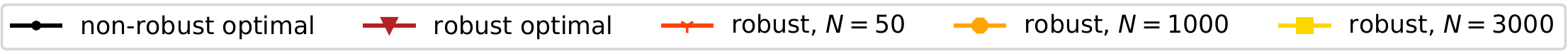}
	\begin{minipage}{.24\textwidth}
		\centering
		\includegraphics[width=\linewidth]{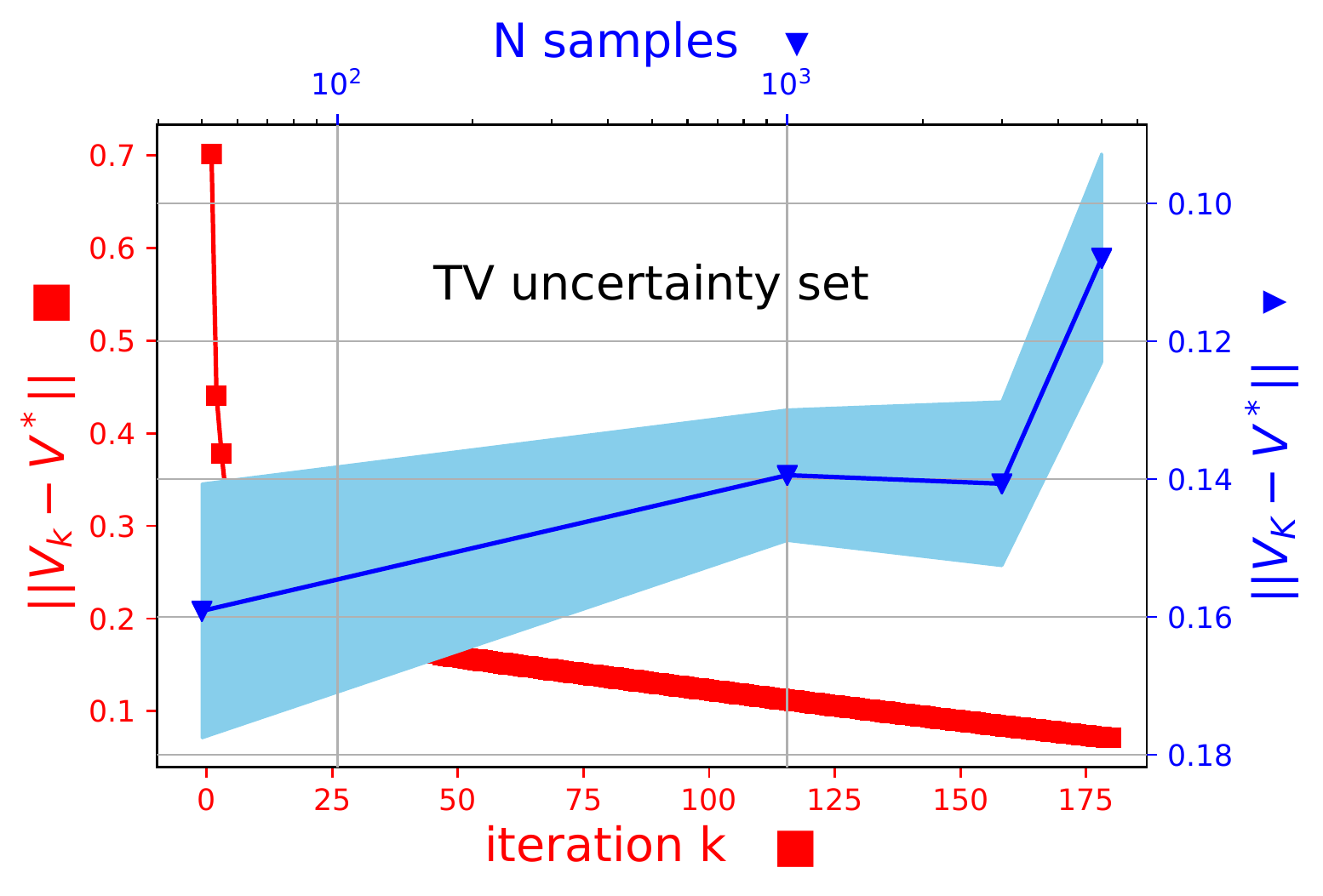}
	\end{minipage}
	\begin{minipage}{.24\textwidth}
		\centering
		\includegraphics[width=\linewidth]{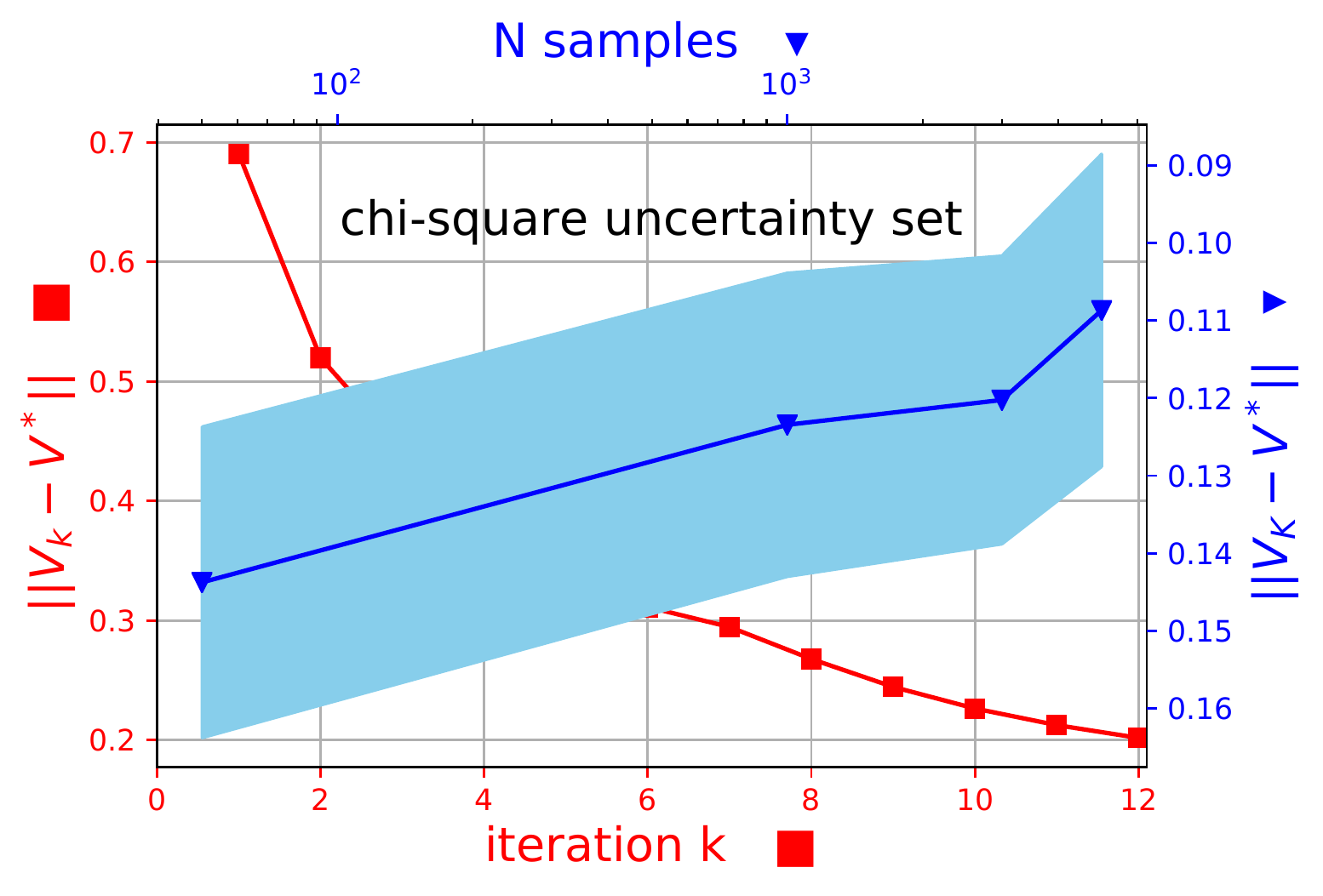}
	\end{minipage}
	\begin{minipage}{.24\textwidth}
		\centering
		\includegraphics[width=\linewidth]{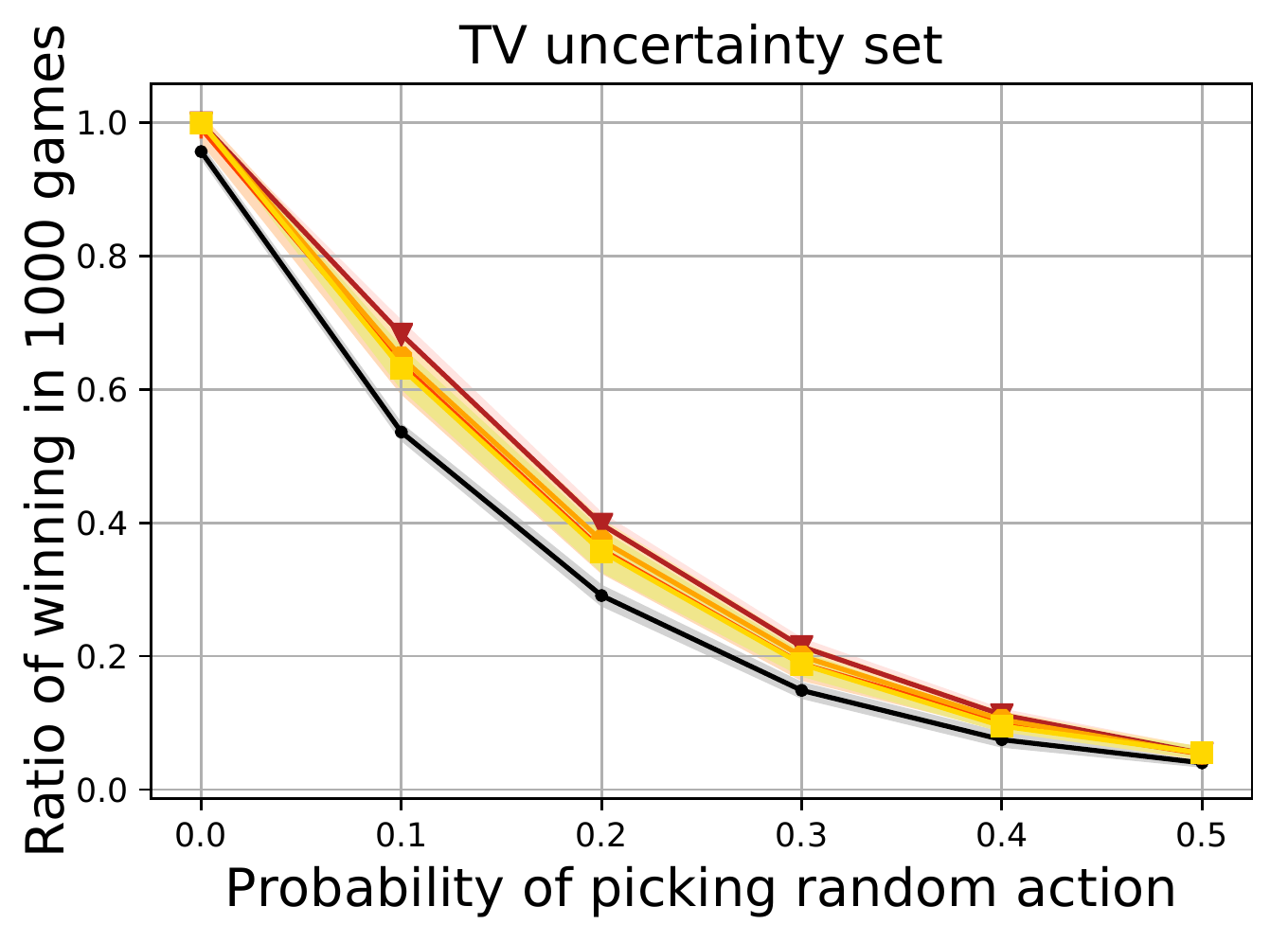}
	\end{minipage}
	\begin{minipage}{.24\textwidth}
		\centering
		\includegraphics[width=\linewidth]{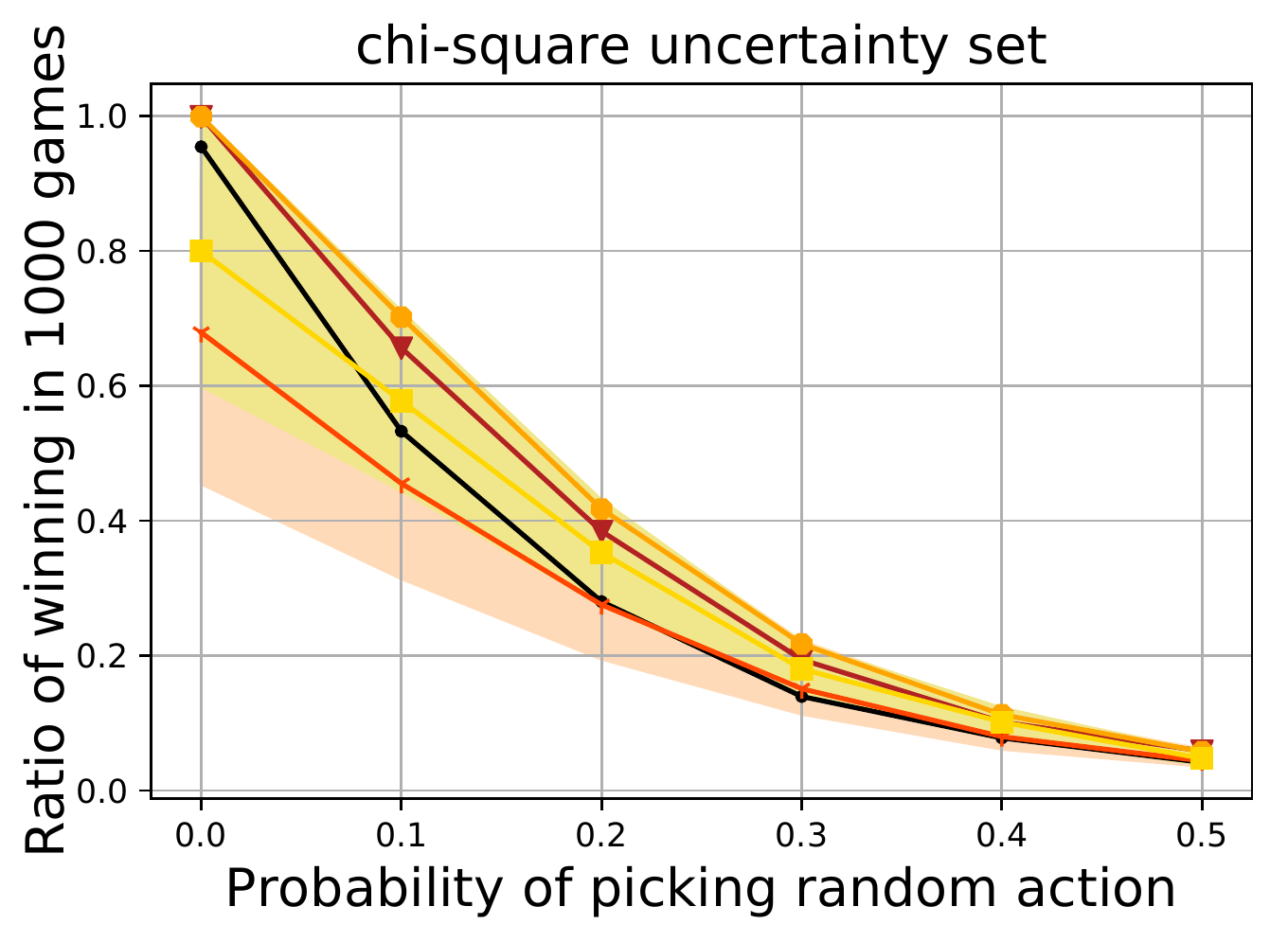}
	\end{minipage}
	\centering
	\captionof{figure}{\textit{Experiment results for the {FrozenLake8x8} environment.} The first two plots shows the rate of convergence with respect to the number of iterations ($k$) and the rate of convergence with respect to the number of samples $(N)$ for the TV and chi-square uncertainty set, respectively. The third and fourth plots shows the robustness of the learned policy against changes in the model parameter (probability of picking a random action).}
	\label{fig:frozenlake}
\end{figure*}

In this section we demonstrate the convergence behavior and robust performance of our REVI algorithm using numerical experiments. We consider two different settings, namely, the \emph{Gambler's Problem}  environment \citep[Example 4.3]{sutton2018reinforcement} and {\emph{FrozenLake8x8}} environment in OpenAI Gym \citep{brockman2016openai}. We also consider the TV uncertainty set and chi-square uncertainty set. We solve the optimization problem $\sigma_{\widehat{\Pp}}$ and $\sigma_{{\Pp}}$ 
using the Scipy \citep{scipy20} optimization library.

We illustrate  the following important characteristics of the REVI algorithm: 

(1) Rate of convergence with respect to the number of iterations: To demonstrate this, we plot $\norm{V_{k} - V^{*}}$ against the iteration number $k$, where $V_{k}$ is the value at the $k$th step of the REVI algorithm with $N=5000$. We compute $V^{*}$ using the full knowledge of the uncertainty set for benchmarking the performance of the REVI algorithm. \\
(2) Rate of convergence with respect to the number of samples: To show this, we plot $\norm{V_{K}(N) - V^{*}}$ against the number of samples $N$, where $V_{K}(N)$ is final value obtained from the REVI algorithm using $N$ samples. \\
(3) Robustness of the learned policy: To demonstrate this, we plot the number of times the robust policy $\pi_{K}$ (obtained from the REVI algorithm) successfully completed the task as a function of the change in an environment parameter. We perform $1000$ trials for each environment and each uncertainty set, and plot the fraction of the success. 



\textbf{Gambler's Problem:} In {gambler's problem}, a gambler starts with a random balance in her account and makes bets on a sequence of coin flips, winning her stake with heads and losing with tails, until she wins $\$100$ or loses all money. This problem can be formulated as a chain MDP with states in $\{1,\cdots,99\}$ and when in state $s$ the available actions are in $\{0,1,\cdots,\min(s,100-s)\}$. The agent is rewarded $1$ after reaching a goal and rewarded $0$ in every other timestep. The biased coin probability is fixed throughout the game. We denote its heads-up probability as $p_h$ and use $0.6$ as a nominal model for training our  algorithm. We also fix $c_{r} = 0.2$ for the chi-square uncertainty set experiments and $c_{r} = 0.4$ for the TV uncertainty set experiments. 

The red curves with square markers in the first two plots in  Fig. \ref{fig:gamblers} show the rate of convergence with respect to the number of iterations for TV and chi-square uncertainty sets respectively. As expected, convergence is fast due to the contraction property of the robust Bellman operator.


The blue curves with triangle markers in the first two plots in  Fig. \ref{fig:gamblers} show  the rate of convergence with respect to the number of samples for TV and chi-square uncertainty sets. We generated these curves for 10 different seed runs. The bold line depicts the mean of these runs and the error bar is the standard deviation. As expected, the plots show that $V_{K}(N)$ converges to $V^{*}$ as $N$ increases.




We then demonstrate the robustness of the approximate robust policy  $\pi_{K}$ (obtained with $N= 100, 500, 3000$) by evaluating its performance on environments with different values of $p_{h}$. We plot the fraction of the wins out of $1000$ trails. We also plot the performance the optimal robust policy $\pi^{*}$ as a benchmark. The third and fourth plot in Fig. \ref{fig:gamblers} show the results with TV and chi-square uncertainty sets respectively. We note that the performance of the non-robust policy decays drastically as we decrease the parameter $p_{h}$ from its nominal value $0.6$. On the other hand, the optimal robust policy performs consistently better under this change in the environment. We also note that $\pi_{K}(N)$ closely follows the performance of $\pi^{*}$ for large enough $N$.



\textbf{Frozen Lake environment:} {\emph{FrozenLake8x8}} is a gridworld environment of size $8\times 8$. It consists of some flimsy tiles which makes the agent fall into the water. The agent is rewarded $1$ after reaching a goal tile without falling and rewarded $0$ in every other timestep. We use the {\emph{FrozenLake8x8}} environment with default design as our nominal model except that we make the probability of transitioning to a state in the intended direction to be $0.4$ (the default value is $1/3$). We also set $c_{r} = 0.35$ for the chi-square uncertainty set experiments and $c_{r} = 0.7$ for the TV uncertainty set experiments. 

The red curves in the first two plots in  Fig. \ref{fig:frozenlake} show the rate of convergence with respect to the number of iterations for TV and chi-square uncertainty sets respectively. The blue curves in the first two plots in  Fig. \ref{fig:frozenlake} show  the rate of convergence with respect to the number of samples for TV and chi-square uncertainty sets respectively. The behavior is similar to the one  observed in the case of gambler's problem.



We demonstrate the robustness of the learned policy by evaluating it on FrozenLake test environments with  action perturbations. In the real-world settings, due to model mismatch or noise in the environments, the resulting action can be different from the intended action. We model this by picking a random action with some probability at each time step. In addition, we change the probability of transitioning to a state in the intended direction to be $0.2$ for these test environments. We observe that the performance of the robust RL policy is consistently better than the non-robust policy as we introduce model mismatch in terms of the probability of picking random actions. We also note that $\pi_{K}(N)$ closely follows the performance of $\pi^{*}$ for large enough $N$.



We note that we have  included our code for experiments in this \href{https://github.com/kishanpb/RobustRL}{GitHub page}. We note that we can employ a hyperparameter learning strategy to find the best value of $c_{r}$. We demonstrate this on the FrozenLake environment for the TV uncertainty set. We computed the optimal robust  policy for each $c_r \in \{0.1,0.2, \ldots, 1.6 \}$. We tested these policies across $1000$ games with  random action probabilities $\{0.1,0.2, \ldots, 0.5\}$ on the test environment. We found that the policy for $c_r=1.2$ has the best winning ratio across all the random action probabilities. We do not exhibit exhaustive experiments on this hyperparameter learning strategy as it is out of scope of the intent of this manuscript.
\section{Conclusion and Future Work}


We presented a model-based robust reinforcement learning algorithm called Robust Empirical Value Iteration algorithm, where we used an approximate robust Bellman updates in the vanilla robust value iteration algorithm.  We provided a finite sample performance characterization of the learned policy with respect to the optimal robust policy for three different uncertainty sets, namely, the total variation uncertainty set, the chi-square uncertainty set, and the Kullback-Leibler uncertainty set. We also demonstrated the performance of REVI algorithm on two different environments showcasing its theoretical properties of convergence. We also showcased the REVI algorithm based policy being robust to the changes in the environment as opposed to the non-robust policies.

The goal of this work was to develop the fundamental theoretical results for the finite state space and action space regime. As mentioned earlier, the sub-optimality of the sample complexity of our REVI algorithm in factors $|\Ss|$ and $1/(1-\gamma)$ needs more investigation and refinements in the analyses. In the future, we will extend this idea to robust RL with linear and nonlinear function approximation architectures and for more general models in deep RL. 
\section*{Acknowledgments}

Dileep Kalathil gratefully acknowledges funding from the U.S. National Science Foundation (NSF) grants NSF-EAGER-1839616, NSF-CRII-CPS-1850206 and NSF-CAREER-EPCN-2045783.

\bibliography{References-RobustRL}

\appendix
\onecolumn
\section*{Appendix}

\section{Useful Technical Results}
In this section we state some  existing results  that are useful in our analysis.

\begin{lemma}[Hoeffding's inequality \hspace{-0.1cm} \text{\cite[Theorem 2.2.6]{vershynin2018high}}] \label{lem:hoeffding_ineq}
Let $X_1,\cdots, X_T$ be independent random variables. Assume that $X_t \in [m_t, M_t]$ for every $t$ with $M_t > m_t$. Then, for any $\epsilon > 0$, we have \[ \pr \left( \sum_{t=1}^T (X_t - \E[X_t]) \geq \epsilon \right) \leq \exp\left( -\frac{2\epsilon^2}{\sum_{t=1}^T (M_t - m_t)^2} \right). \]
\end{lemma}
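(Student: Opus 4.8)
The plan is to use the exponential Chernoff method together with a moment-generating-function (MGF) bound for centered bounded random variables (Hoeffding's lemma). First I would center the variables: set $Y_t = X_t - \E[X_t]$, so that $\E[Y_t] = 0$ and $Y_t$ takes values in an interval of length $M_t - m_t$. Writing $S = \sum_{t=1}^T Y_t$, the target is to bound $\pr(S \geq \epsilon)$. For any $\lambda > 0$, monotonicity of $x \mapsto e^{\lambda x}$ and Markov's inequality give
\[
\pr(S \geq \epsilon) = \pr\!\left( e^{\lambda S} \geq e^{\lambda \epsilon} \right) \leq e^{-\lambda \epsilon}\, \E\!\left[ e^{\lambda S} \right].
\]
Since the $X_t$, and hence the $Y_t$, are independent, the MGF factorizes: $\E[e^{\lambda S}] = \prod_{t=1}^T \E[e^{\lambda Y_t}]$.

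The key ingredient — and the main obstacle — is to control each factor $\E[e^{\lambda Y_t}]$. I would establish Hoeffding's lemma: for a mean-zero random variable $Y \in [a, b]$ one has $\E[e^{\lambda Y}] \leq \exp(\lambda^2 (b-a)^2 / 8)$. The proof of this lemma uses convexity of the exponential to write, for $y \in [a,b]$,
\[
e^{\lambda y} \leq \frac{b - y}{b - a}\, e^{\lambda a} + \frac{y - a}{b - a}\, e^{\lambda b}.
\]
Taking expectations and using $\E[Y] = 0$ gives $\E[e^{\lambda Y}] \leq \frac{b}{b-a} e^{\lambda a} - \frac{a}{b-a} e^{\lambda b} =: e^{\phi(\lambda)}$. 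One then checks $\phi(0) = 0$, $\phi'(0) = 0$, and that $\phi''(\lambda) = \frac{\theta(1-\theta)(b-a)^2 e^{\lambda(b-a)}}{(1 - \theta + \theta e^{\lambda(b-a)})^2}$ for an appropriate $\theta \in (0,1)$; bounding the product of a Bernoulli-type quantity by $1/4$ yields $\phi''(\lambda) \leq (b-a)^2/4$. A second-order Taylor expansion with remainder then gives $\phi(\lambda) \leq \lambda^2 (b-a)^2 / 8$, which is exactly the claimed MGF bound with $b - a = M_t - m_t$.

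Combining the two ingredients, for every $\lambda > 0$,
\[
\pr(S \geq \epsilon) \leq \exp\!\left( -\lambda \epsilon + \frac{\lambda^2}{8} \sum_{t=1}^T (M_t - m_t)^2 \right).
\]
The right-hand side is a quadratic in $\lambda$, minimized at $\lambda^{*} = 4\epsilon / \sum_{t=1}^T (M_t - m_t)^2 > 0$. Substituting $\lambda^{*}$ collapses the exponent to $-2\epsilon^2 / \sum_{t=1}^T (M_t - m_t)^2$, which is precisely the stated bound. Since this is a classical inequality cited from \citep{vershynin2018high}, in the paper it would simply be invoked rather than reproven; the work above records the standard argument. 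The one subtle point to get right is the derivative computation in Hoeffding's lemma and the uniform bound $\phi''(\lambda) \leq (b-a)^2/4$, since everything else is routine Chernoff optimization.
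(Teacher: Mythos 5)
Your proposal is correct, and it matches the argument behind the cited result: the paper itself does not prove this lemma but invokes it directly from \citep[Theorem 2.2.6]{vershynin2018high}, whose proof is exactly your route---Chernoff's method, the factorization of the MGF by independence, Hoeffding's lemma $\E[e^{\lambda Y}] \leq \exp(\lambda^2(b-a)^2/8)$ via the convexity bound and the estimate $\phi''(\lambda) \leq (b-a)^2/4$, and optimization at $\lambda^{*} = 4\epsilon/\sum_{t=1}^T (M_t - m_t)^2$. All your computations check out, including the form of $\phi''$ and the collapsed exponent $-2\epsilon^2/\sum_{t=1}^T (M_t-m_t)^2$, so there is nothing to correct.
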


\begin{lemma}[Self-bounding variance inequality  \text{\cite[Theorem 10]{MaurerP09}}] \label{lem:variance_ineq}
Let $X_1,\cdots, X_T$ be independent and identically distributed random variables with finite variance, that is, $\mathrm{Var}(X_1) < \infty$. Assume that $X_t \in [0, M]$ for every $t$ with $M > 0$, and let $S_T^2 = \frac{1}{T} \sum_{t=1}^T X_t^2 - (\frac{1}{T} \sum_{t=1}^T X_t)^2.$ Then, for any $\epsilon > 0$, we have \[ \pr \left( |S_T - \sqrt{\mathrm{Var}(X_1)} | \geq \epsilon \right) \leq 2 \exp\left( -\frac{T \epsilon^2}{2M^2} \right). \]
\end{lemma}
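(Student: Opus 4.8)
The plan is to treat this as a concentration statement for the sample standard deviation $S_T$, viewed as a function of the i.i.d. data $(X_1,\dots,X_T)$, centered at the population standard deviation $\sqrt{\mathrm{Var}(X_1)}$. The first thing I would do is record the naive estimate, if only to see why it is insufficient: writing $S_T = T^{-1/2}\,\|X - \bar X\mathbf 1\|_2$ with $\bar X$ the empirical mean, the map $X \mapsto \|X-\bar X\mathbf 1\|_2$ is the Euclidean norm of a projection of $X$, hence $S_T$ is $T^{-1/2}$-Lipschitz in Euclidean norm. Since each coordinate lives in an interval of length $M$, changing one coordinate perturbs $S_T$ by at most $M/\sqrt T$, so McDiarmid's bounded-differences inequality applies with $\sum_t c_t^2 = M^2$. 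This yields only a $T$-independent tail $\exp(-c\epsilon^2/M^2)$, which is far too weak: the target decays like $\exp(-cT\epsilon^2/M^2)$, matching the $O(1/\sqrt T)$ fluctuations predicted by the central limit theorem. The real work is to recover the missing factor of $T$.

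The route I would take exploits the \emph{self-bounding} structure suggested by the name of the lemma. The normalized sample variance $M^{-2}\sum_t (X_t-\bar X)^2$ is a nonnegative self-bounding function of the data, so the Boucheron--Lugosi--Massart concentration inequality for self-bounding functions applies and produces a Bernstein-type bound for $V_T := S_T^2$ whose variance proxy scales like $\mathrm{Var}(X_1)/T$ rather than like the crude range $M^2$. Concretely this gives tails of the form $\pr(|V_T - \mathrm{Var}(X_1)|\geq u) \lesssim \exp(-cTu^2/(M^2\,\mathrm{Var}(X_1)))$ for small $u$, which is exactly the variance-aware concentration the problem needs. (An equally valid alternative is Talagrand's convex-Lipschitz concentration, since $X\mapsto \|X-\bar X\mathbf 1\|_2$ is convex and $1$-Lipschitz on $[0,M]^T$, giving the $T$-dependent Gaussian tail for $S_T$ directly.)

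The final step transfers the variance bound to $S_T=\sqrt{V_T}$. The key elementary observations are that $S_T \leq \sqrt{\mathrm{Var}(X_1)}-\epsilon$ forces $V_T \leq \mathrm{Var}(X_1)-\epsilon\sqrt{\mathrm{Var}(X_1)}$, while $S_T \geq \sqrt{\mathrm{Var}(X_1)}+\epsilon$ forces $V_T \geq \mathrm{Var}(X_1)+2\epsilon\sqrt{\mathrm{Var}(X_1)}$. Thus an $\epsilon$-deviation of the standard deviation corresponds to a $u \asymp \epsilon\sqrt{\mathrm{Var}(X_1)}$ deviation of the variance. Substituting $u=\epsilon\sqrt{\mathrm{Var}(X_1)}$ into the Bernstein bound, the factor $\mathrm{Var}(X_1)$ in the numerator cancels the $\mathrm{Var}(X_1)$ in the denominator of the variance proxy, leaving $\exp(-cT\epsilon^2/M^2)$; taking a union over the two tails yields the stated $2\exp(-T\epsilon^2/(2M^2))$. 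This cancellation is precisely why a range-only (Hoeffding/McDiarmid) bound on $V_T$ is inadequate: it would convert the $\epsilon$ deviation into an $\epsilon^2$ deviation and destroy the rate.

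I expect the difficulty to concentrate in two places. The main obstacle is the interaction of the nonlinear square root with the need for a \emph{variance-aware} base inequality: verifying the self-bounding property of the sample variance (equivalently, invoking the convexity and Lipschitz constant of $S_T$) is the technical heart, and it is exactly what upgrades the weak bounded-differences tail to the correct $T$-dependent one. The secondary nuisance is centering, since $\E[V_T] = \tfrac{T-1}{T}\mathrm{Var}(X_1)$ differs from $\mathrm{Var}(X_1)$ by an $O(\mathrm{Var}(X_1)/T)$ term (and likewise $\E[S_T]\neq\sqrt{\mathrm{Var}(X_1)}$ by Jensen); this lower-order gap must be absorbed into the constants so that the tail lands cleanly at $\sqrt{\mathrm{Var}(X_1)}$ rather than at the mean of $S_T$.
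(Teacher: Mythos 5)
Your proposal is correct in substance, but it takes a genuinely different route from the paper: the paper's proof is a two-line reduction that rewrites $S_T^2$ via the U-statistic identity $\frac{T}{T-1}S_T^2 \propto \frac{1}{T(T-1)}\sum_{i,j}(X_i-X_j)^2$, rescales to $X_t/M\in[0,1]$, and invokes Maurer--Pontil's Theorem~10 as a black box, whereas you sketch a self-contained derivation --- self-bounding concentration for the sample variance followed by the square-root transfer in which the deviation $u\asymp \epsilon\sqrt{\mathrm{Var}(X_1)}$ cancels the variance proxy. That mechanism is precisely the internal engine of the cited theorem, so you are in effect re-proving the citation rather than using it; your diagnosis of why McDiarmid fails (a $T$-independent tail) and the two-sided transfer algebra are both sound. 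What the paper's route buys is brevity and Maurer--Pontil's sharp constants; what yours buys is transparency about where the factor of $T$ comes from, at the cost of two details you leave unverified: (i) for $Z = M^{-2}\sum_t(X_t-\bar X)^2 = M^{-2}\min_c\sum_t(X_t-c)^2$ one gets $\sum_i (Z - Z_i) = \frac{T}{T-1}Z$, not $\leq Z$, so you need the weak/$(a,b)$-self-bounding variant (or the U-statistic normalization) rather than the plain self-bounding theorem; and (ii) the Talagrand alternative concentrates $S_T$ around its mean or median, and recentering at $\sqrt{\mathrm{Var}(X_1)}$ costs constants, so only the self-bounding route recovers the clean $2\exp(-T\epsilon^2/(2M^2))$ --- modulo $T$ versus $T-1$ bookkeeping that the paper itself also glosses over when passing from the unbiased to the biased variance.
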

\begin{proof}
    The proof of this lemma directly follows from \cite[Theorem 10]{MaurerP09} by noting that we can rewrite $S_T^2$ as follows \[ \frac{T}{T-1}S_T^2 = \frac{1}{T(T-1)} \sum_{i,j=1}^T (X_i-X_j)^2.\] Also, note that we apply \cite[Theorem 10]{MaurerP09} for the scaled random variables $X_t/M\in[0,1]$.
\end{proof}

We now provide a covering number result that is useful to get high probability concentration bounds for value function classes. We first define  minimal $\eta$-cover of a set.

\begin{definition}[Minimal $\eta$-cover; \text{\citep[Definition 5.5]{van2014probability}}]
A set $\N_{\V}(\eta)$ is called an $\eta$-cover for a metric space $(\V, d)$ if for every $V\in\V$, there exists a $V'\in\N$ such that $d(V,V')\leq \eta$. Furthermore, $\N_{\V}(\eta)$ with the minimal cardinality ($|\N_{\V}(\eta)|$) is called a minimal $\eta$-cover.
\end{definition}

From \citep[Exercise 5.5 and Lemma 5.13]{van2014probability} we have the following result.
\begin{lemma}[Covering Number]
\label{por:covering_num}
Let $ \V = \{V\in \R^{|\Ss|}~:~ \|V\| \leq V_{\max}\}$. Let $\N_\V(\eta)$ be a minimal $\eta$-cover of $\V$ with respect to the distance metric $d(V,V') = \| V-V'\|$ for some fixed $\eta\in (0,1).$ Then we have 
	$$
	\log |\N_\V(\eta)|  \leq |\Ss| \cdot \log \left( \frac{3\,V_{\max}}{\eta}  \right) .	$$
\end{lemma}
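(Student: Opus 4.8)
The plan is to prove this via a standard volume-comparison (packing) argument, specialized to the sup-norm ball $\V$. First I would replace the covering problem by a packing problem: let $\{V_1, \dots, V_M\} \subseteq \V$ be a maximal collection of points that are pairwise separated by more than $\eta$ in the metric $d(V,V') = \norm{V - V'}$. By maximality, no further point of $\V$ can be added while preserving the separation, which forces every $V \in \V$ to lie within distance $\eta$ of some $V_i$; hence $\{V_1,\dots,V_M\}$ is itself an $\eta$-cover, and since $\N_\V(\eta)$ is a \emph{minimal} cover we get $|\N_\V(\eta)| \leq M$. It therefore suffices to bound the packing number $M$.

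Next I would bound $M$ by comparing Lebesgue volumes in $\R^{|\Ss|}$. Because the centers $V_i$ are pairwise more than $\eta$ apart, the open sup-norm balls $B(V_i, \eta/2)$ are pairwise disjoint; and because each $V_i \in \V$, every such ball is contained in the enlarged ball $\{V : \norm{V} \leq V_{\max} + \eta/2\}$. In $\R^{|\Ss|}$ the volume of a sup-norm ball of radius $r$ equals $(2r)^{|\Ss|}$, so disjointness together with containment yields
\begin{align*}
M \, \eta^{|\Ss|} = M \, \big(2\cdot \tfrac{\eta}{2}\big)^{|\Ss|} \leq \big(2(V_{\max} + \tfrac{\eta}{2})\big)^{|\Ss|} = (2V_{\max} + \eta)^{|\Ss|},
\end{align*}
from which $M \leq (2V_{\max}/\eta + 1)^{|\Ss|}$.

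Finally, I would absorb the additive constant. Since $\eta \in (0,1)$ and this lemma is always applied with $V_{\max} = 1/(1-\gamma) \geq 1$, we have $\eta \leq V_{\max}$, hence $1 \leq V_{\max}/\eta$ and therefore $2V_{\max}/\eta + 1 \leq 3V_{\max}/\eta$. Combining this with $|\N_\V(\eta)| \leq M$ and taking logarithms gives $\log |\N_\V(\eta)| \leq |\Ss| \log(3V_{\max}/\eta)$, as claimed.

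I do not expect a genuine obstacle here: the result is classical and the argument is fully self-contained. The only step requiring mild care is the last one, namely folding the $+1$ into the constant $3$, which relies on the condition $\eta \leq V_{\max}$. If one wanted a constant valid for arbitrary $\eta$ and $V_{\max}$, one could instead retain the cleaner bound $(2V_{\max}/\eta + 1)^{|\Ss|}$; since every application in this paper has $V_{\max} \geq 1 > \eta$, the stated form is adequate. (Alternatively, as the excerpt notes, the bound can simply be invoked from the covering-number literature, but the packing argument above makes it elementary.)
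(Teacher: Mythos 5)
Your proof is correct, but it takes a genuinely different route from the paper's. The paper does not argue from scratch: it rescales $\V$ to the unit sup-norm ball $\V_n = \V/V_{\max}$ with metric $d_n = d/V_{\max}$, uses the scale-invariance of covering numbers, and then simply invokes the standard bound $\log|\N_{\V_n}(\eta_n)| \leq |\Ss|\log(3/\eta_n)$ with $\eta_n = \eta/V_{\max}$ from the covering-number literature (van Handel, Exercise 5.5 and Lemma 5.13). Your packing-plus-volume-comparison argument is precisely the classical proof underlying that cited result (minimal cover size bounded by maximal packing size; disjoint sup-norm balls of radius $\eta/2$ contained in the ball of radius $V_{\max}+\eta/2$; volume of a sup-norm ball of radius $r$ equal to $(2r)^{|\Ss|}$), so in effect you have re-derived the citation and made the lemma self-contained — a real, if modest, gain in transparency at the cost of half a page. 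One point you handle more explicitly than the paper: absorbing the $+1$ in $(2V_{\max}/\eta + 1)^{|\Ss|}$ into the constant $3$ requires $\eta \leq V_{\max}$, and indeed the lemma as literally stated fails for $V_{\max} < \eta/3$ (a single point then covers $\V$, yet the right-hand side is negative); this hypothesis is equally implicit in the paper's route, since the cited unit-ball bound $(3/\eta_n)^{|\Ss|}$ presumes $\eta_n \leq 1$, i.e., $\eta \leq V_{\max}$. As you note, every use of the lemma in the paper has $V_{\max} = 1/(1-\gamma) \geq 1 > \eta$, so both arguments are sound in context.
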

\begin{proof}
    We will consider the normalized metric space $(\V_n, d_n)$,
    where \[ \V_n := \V/V_{\max} = \{V\in \R^{|\Ss|}~:~ \|V\| \leq 1\} \] and $d_n := d/V_{\max}$ to make use of the fact that  the covering number is invariant to the scaling of a metric space. Let $\eta_n := \eta/V_{\max}$. Then, it follows from \citep[Exercise 5.5 and Lemma 5.13]{van2014probability} that \[ \log |\N_\V(\eta)| = \log |\N_{\V_n}(\eta_n)| \leq |\Ss| \cdot \log \left( \frac{3}{\eta_n}  \right)  = |\Ss| \cdot \log \left( \frac{3\,V_{\max}}{\eta}  \right) .	 \] This completes the proof.
\end{proof}

Here we present another covering number result, with a similar proof as Lemma \ref{por:covering_num}, that is useful to get our upperbound for the KL uncertainty set.
\begin{lemma}[Covering Number of a bounded real line]
\label{por:covering_num_real_line}
Let $\Theta \subset \mathbb{R}$ with  $\Theta = [l, u]$ for some real numbers $u>l$. Let $\N_{\Theta}(\eta)$ be a minimal $\eta$-cover of $\Theta$ with respect to the distance metric $d(\theta,\theta') = |\theta-\theta'|$ for some fixed $\eta\in (0,1).$ Then we have 
	$
	 |\N_\Theta(\eta)|  \leq  {3(u-l)}/{\eta}   .	$
\end{lemma}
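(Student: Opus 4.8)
The plan is to mirror the proof of Lemma \ref{por:covering_num}, specializing the volumetric covering argument to dimension one, since $\Theta = [l,u]$ is essentially a one-dimensional ball. First I would observe that $\Theta$ is contained in the ball $B = \{\theta \in \mathbb{R} : |\theta - l| \leq R\}$ of radius $R = u - l$ centered at $l$, so that any cover of $B$ is automatically a cover of $\Theta$ and hence $|\N_\Theta(\eta)| \leq |\N_B(\eta)|$ (the definition of an $\eta$-cover does not require the cover points to lie in the set being covered). Then, exactly as in Lemma \ref{por:covering_num}, I would pass to the normalized metric space $B/R$, which is the unit ball in $\mathbb{R}$, with normalized tolerance $\eta_n = \eta/R = \eta/(u-l)$; covering numbers are invariant under this rescaling. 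Applying the $d=1$ case of \citep[Exercise 5.5 and Lemma 5.13]{van2014probability} to the unit ball gives $|\N_{B/R}(\eta_n)| \leq 3/\eta_n = 3(u-l)/\eta$, which is exactly the claim. The factor $3$ together with radius $R = u-l$ is precisely what lands the stated constant.

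As a more self-contained alternative I would exhibit an explicit $\eta$-cover by a uniform grid: place points $\theta_k = l + k\eta$ for $k = 0, 1, \ldots, \lceil (u-l)/\eta \rceil$. Since consecutive grid points are at distance $\eta$ and the largest grid point exceeds $u$, every $\theta \in [l,u]$ lies within $\eta$ of some $\theta_k$, so this is a valid $\eta$-cover. Counting the points yields $|\N_\Theta(\eta)| \leq \lceil (u-l)/\eta \rceil + 1 \leq (u-l)/\eta + 2$, and the generous constant $3$ in the claimed bound absorbs the additive boundary and rounding terms whenever $u - l \geq \eta$, which is the regime in which the bound is nontrivial (analogous to how $V_{\max} \geq \eta/3$ is the relevant regime for Lemma \ref{por:covering_num}).

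The step requiring the most care is bookkeeping rather than anything analytic: handling the right endpoint $u$, which may fall strictly between the last interior grid point and the next, and the degenerate regime where $u - l$ is comparable to or smaller than $\eta$, in which a single center already covers all of $\Theta$. I expect the cleanest write-up to be the scaling reduction of the first paragraph, since it offloads all constant-chasing onto the already-cited one-dimensional volumetric estimate and keeps the argument a verbatim one-dimensional analogue of Lemma \ref{por:covering_num}.
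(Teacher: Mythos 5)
Your first argument is exactly the paper's intended proof: the paper gives no explicit derivation for this lemma, stating only that it follows ``with a similar proof as Lemma \ref{por:covering_num},'' i.e., precisely your rescaling of $\Theta$ to a unit-radius interval and invocation of the $d=1$ case of \citep[Exercise 5.5 and Lemma 5.13]{van2014probability}, with the loose containment in a radius-$(u-l)$ ball reproducing the stated constant. Your explicit-grid alternative is a correct, more self-contained bonus, and your observation that the bound is only meaningful in the regime $u-l \gtrsim \eta$ (since $|\N_\Theta(\eta)| \geq 1$ always, while $3(u-l)/\eta$ can drop below $1$) is a legitimate caveat the paper leaves implicit.
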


\section{Proof of the Theorems}

\subsection{Concentration Results}
Here, we prove Lemma \ref{lem:covering-hoeffdings}. We state the following result first. 
\begin{lemma}
\label{lem:hoeffding_concentration}
For any  $V\in \R^{|\Ss|}$ with $\|V\| \leq V_{\max}$,  with probability at least $1-\delta$,
\begin{align*}
\max_{(s,a)} |{P^o_{s,a}} V -  \widehat{P}_{s,a} V | \leq V_{\max} \sqrt{\frac{\log(2|\Ss||\Aa|/\delta)}{2N}}
\end{align*}
\end{lemma}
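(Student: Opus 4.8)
The plan is to reduce the statement to a standard application of Hoeffding's inequality for a \emph{fixed} vector $V$, followed by a union bound over the finitely many state--action pairs. First I would fix a pair $(s,a)$. By construction of the maximum likelihood estimate, $\widehat{P}_{s,a}(s') = N(s,a,s')/N$, so if $s'_1,\dots,s'_N$ denote the $N$ i.i.d.\ next-state samples drawn from $P^o_{s,a}(\cdot)$, then $\widehat{P}_{s,a} V = \frac{1}{N}\sum_{i=1}^N V(s'_i)$ is precisely the empirical mean of the i.i.d.\ random variables $X_i := V(s'_i)$, whose common expectation is $\E[X_i] = P^o_{s,a} V$. Thus $\widehat{P}_{s,a} V - P^o_{s,a} V = \frac{1}{N}\sum_{i=1}^N (X_i - \E[X_i])$, which is exactly the centered sum to which Hoeffding applies.

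Because the value function is nonnegative and bounded, each $X_i$ takes values in $[0, V_{\max}]$, so I would invoke Lemma~\ref{lem:hoeffding_ineq} with $M_t - m_t = V_{\max}$ and $\sum_{t=1}^N (M_t-m_t)^2 = N V_{\max}^2$. Writing the deviation threshold as $\epsilon' = N t$ and dividing through by $N$ yields the one-sided bound $\pr(\widehat{P}_{s,a}V - P^o_{s,a}V \geq t) \leq \exp(-2Nt^2/V_{\max}^2)$. Applying the same argument to $-V$ (equivalently, to the lower tail) and adding the two estimates gives the two-sided bound $\pr(|\widehat{P}_{s,a}V - P^o_{s,a}V| \geq t) \leq 2\exp(-2Nt^2/V_{\max}^2)$.

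Next I would take a union bound over all $|\Ss||\Aa|$ state--action pairs, so that $\pr(\max_{(s,a)} |\widehat{P}_{s,a}V - P^o_{s,a}V| \geq t) \leq 2|\Ss||\Aa| \exp(-2Nt^2/V_{\max}^2)$. Setting the right-hand side equal to $\delta$ and solving for $t$ gives $t = V_{\max}\sqrt{\log(2|\Ss||\Aa|/\delta)/(2N)}$, which is exactly the claimed threshold; equivalently, with this choice of $t$ the stated inequality holds with probability at least $1-\delta$.

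I expect no genuine obstacle here: for each fixed $V$ the estimator is an unbiased empirical mean, so unlike the nonlinear $\sigma(\cdot)$ terms that dominate the rest of the analysis, this lemma is a direct concentration bound. The only points requiring care are (i) using the correct range $[0,V_{\max}]$ for the summands, exploiting the nonnegativity of value functions, since an incautious use of $[-V_{\max},V_{\max}]$ would inflate the constant by a factor of four and spoil the stated bound; and (ii) remembering that this is a \emph{pointwise} bound in $V$, with uniformity over the whole class $\V$ deferred to the covering-number argument of Lemma~\ref{lem:covering-hoeffdings}.
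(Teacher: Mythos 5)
Your proposal is correct and follows essentially the same route as the paper's own proof: a pointwise Hoeffding bound for the fixed vector $V$ (two one-sided tails), followed by a union bound over all $|\Ss||\Aa|$ state--action pairs and solving for the threshold $t$. If anything, you are slightly more careful than the paper, which applies Hoeffding with range $V_{\max}$ without explicitly noting the nonnegativity of $V$ that this requires (the lemma as stated allows $V \in [-V_{\max}, V_{\max}]^{|\Ss|}$, for which the range would be $2V_{\max}$), a point you correctly flag in item (i).
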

\begin{proof}
Fix any $(s,a)$ pair. 	Consider a discrete random variable $X$ taking value $V(j)$ with probability $P^o_{s,a}(j)$ for all $j\in\{1,2,\cdots,|\Ss|\}$. From Hoeffding's inequality  (Lemma \ref{lem:hoeffding_ineq}), we have \begin{align*} 
&\pr (P^o_{s,a} V -  \widehat{P}_{s,a} V  \geq \epsilon ) \leq \exp(-2  N \epsilon^2/V^2_{\max}),\quad \pr (\widehat{P}_{s,a} V -  P^o_{s,a} V  \geq \epsilon ) \leq \exp(-2  N \epsilon^2/V^2_{\max}).
\end{align*}
Choosing $\epsilon = V_{\max} \sqrt{\frac{\log(2|\Ss||\Aa|/\delta)}{2N}},$ we get $ \pr (|P^o_{s,a} V -  \widehat{P}_{s,a} V | \geq V_{\max} \sqrt{\frac{\log(2|\Ss||\Aa|/\delta)}{2N}} ) \leq \frac{\delta}{|\Ss||\Aa|}$.
 Now, using union bound, we get 
 \begin{align*}
	&\pr ( \max_{(s,a)} |P^o_{s,a} V -  \widehat{P}_{s,a} V | \geq V_{\max} \sqrt{\frac{\log(2|\Ss||\Aa|/\delta)}{2N}} ) \leq \sum_{s,a} \pr (|P^o_{s,a} V -  \widehat{P}_{s,a} V | \geq V_{\max} \sqrt{\frac{\log(2|\Ss||\Aa|/\delta)}{2N}} ) \hspace{-0.05cm}\leq\hspace{-0.05cm} \delta. 
	\end{align*} 
This completes the proof.
\end{proof}

\begin{proof}[\textbf{Proof of Lemma \ref{lem:covering-hoeffdings}:}]
   Let  $\mathcal{V} = \{V \in \mathbb{R}^{|\Ss|}: \norm{V}_{\infty} \leq 1/(1 - \gamma)\}$.  Let $\N_\V(\eta)$ be a minimal $\eta$-cover of $\mathcal{V}$. Fix a $\mu \leq V$.  By the definition of $\N_\V(\eta)$, there exists a $\mu' \in \N_\V(\eta)$ such that $\norm{\mu - \mu'} \leq \eta$. Now, for these particular $\mu$ and $\mu'$, we get 
\begin{align*}
| \widehat{P}_{s,a} \mu - P^o_{s,a} \mu| &\leq    | \widehat{P}_{s,a} \mu- \widehat{P}_{s,a} \mu'|  +  | \widehat{P}_{s,a} \mu' - P^o_{s,a} \mu'|  + | P^o_{s,a} \mu'- P^o_{s,a} \mu| \\
&\stackrel{(a)}{\leq}  \| \widehat{P}_{s,a}\|_1 \|\mu-\mu'\|_{\infty} +     | \widehat{P}_{s,a} \mu' - P^o_{s,a} \mu'| +  \| P^o_{s,a}\|_1 \|\mu' - \mu\|_{\infty} \\
&\leq \sup_{\mu'\in \N_\V(\eta)} \max_{s,a}| \widehat{P}_{s,a} \mu' - P^o_{s,a} \mu'| + 2\eta
\end{align*} where $(a)$ follows from H$\ddot{\text{o}}$lder's inequality. Now, taking max on both sides with respect to $\mu$ and $(s,a)$ we get
\begin{align*}
\sup_{\mu\in \V} \max_{s,a} | \widehat{P}_{s,a} \mu - P^o_{s,a} \mu| &\leq \sup_{\mu'\in \N_\V(\eta)} \max_{s,a}| \widehat{P}_{s,a} \mu' - P^o_{s,a} \mu'| + 2\eta \\
&\stackrel{(b)}{\leq} \frac{1}{1-\gamma} \sqrt{\frac{\log(4|\Ss||\Aa| |\N_\V(\eta) |/\delta)}{2N}} + 2\eta \\
	    &\stackrel{(c)}{\leq} \frac{1}{1-\gamma} \sqrt{\frac{|\Ss|\log(12|\Ss||\Aa|/(\delta\eta(1-\gamma)))}{2N}} + 2\eta,
	\end{align*} 
with probability at least $1-\delta/2$. Here,  $(b)$ follows from Lemma \ref{lem:hoeffding_concentration} and the union bound and $(c)$ from Lemma \ref{por:covering_num}. 
\end{proof}

\subsection{Proof of Theorem \ref{thm:revi-TV-guarantee} }


\begin{proof}[\textbf{Proof of Lemma \ref{lem:sigma_v_diff}}]
	We only prove the first inequality since the proof is analogous for the other inequality. For any $(s,a)\in\ScA$ we have  
	\begin{align}
	 \sigma_{{{\Pp}_{s,a}}} (V_2) - \sigma_{{{\Pp}_{s,a}}} (V_1) &=  \inf_{q \in {{\Pp}_{s,a}}} q^\top V_2 - \inf_{\tilde{q} \in {{\Pp}_{s,a}}} \tilde{q}^\top V_1 = \inf_{q \in {{\Pp}_{s,a}} } \sup_{\tilde{q} \in {{\Pp}_{s,a}}} q^\top V_2 - \tilde{q}^\top V_1 \nonumber  \\
	&\geq \inf_{q \in {{\Pp}_{s,a}} } q^\top (V_2 - V_1) = \sigma_{{\Pp}_{s,a}} (V_2 - V_1).
	\label{eq:lemma_sigma_d-1}
	\end{align} By definition, for any arbitrary $\epsilon > 0$, there exists a $P_{s,a}  \in \Pp_{s,a} $ such that
	\begin{align}
	\label{eq:lemma_sigma_d-2}
	P^{\top}_{s,a} (V_{2} - V_{1}) - \epsilon \leq \sigma_{{\Pp_{s,a}}} (V_2 - V_1).
	\end{align}
	Using \eqref{eq:lemma_sigma_d-2} and \eqref{eq:lemma_sigma_d-1},
	\begin{align*}
	 \sigma_{{{\Pp}_{s,a}}} (V_1) - \sigma_{{{\Pp}_{s,a}}} (V_2)  &\leq  P^{\top}_{s,a}  (V_{1} - V_{2}) +  \epsilon   \stackrel{(a)}{\leq}    \|P_{s,a}\|_1  \|V_{1} - V_{2}\|  +  \epsilon
	=    \|V_{1} - V_{2}\|  +  \epsilon
	\end{align*}
	where $(a)$ follows from Holder's inequality. Since $\epsilon$ is arbitrary, we get, $\sigma_{{{\Pp}_{s,a}}} (V_1) - \sigma_{{{\Pp}_{s,a}}} (V_2)  \leq   \|V_{1} - V_{2}\| $. Exchanging the roles of $V_1$ and $V_2$ completes the proof.
\end{proof}

\begin{proof}[\textbf{Proof of Lemma \ref{lem:tv-sigma-diff}}]
Fix any $(s,a)$ pair. From \cite[Lemma 4.3]{iyengar2005robust} we have that \begin{align}
\label{eq:sigma-TV}
&\sigma_{{\Pp}^{\mathrm{tv}}_{s,a}} (V) = P^o_{s,a} V +  \max_{\mu: 0 \leq \mu \leq V} \bigg( -P^o_{s,a} \mu - c_r\max_{s} (V_\mu(s)) + c_r\min_{s} (V_\mu(s)) \bigg) \\
&\sigma_{\widehat{\Pp}^{\mathrm{tv}}_{s,a}} (V) = \widehat{P}_{s,a} V + \max_{\mu: 0 \leq \mu \leq V} \bigg( -\widehat{P}_{s,a} \mu - c_r\max_{s} (V_\mu(s)) + c_r\min_{s} (V_\mu(s)) \bigg),
\end{align} 
where $0\leq\mu\leq V$ is an elementwise inequality and $V_\mu(s)=V(s)-\mu(s)$ for all $s\in\Ss$.
	
Using the fact that $|\max_{x} f(x)-\max_{x} g(x)| \leq \max_{x} |f(x) - g(x)|$, it directly follows that
\[ \begin{split} &| \sigma_{\widehat{\Pp}^{\mathrm{tv}}_{s,a}} (V) - \sigma_{{\Pp}^{\mathrm{tv}}_{s,a}} (V)  | \leq | \widehat{P}_{s,a} V - P^o_{s,a} V| + \max_{\mu: 0 \leq \mu \leq V} | \widehat{P}_{s,a} \mu - P^o_{s,a} \mu|. \end{split} \] 
Further simplifying we get \begin{align*}
| \sigma_{\widehat{\Pp}^{\mathrm{tv}}_{s,a}} (V) - \sigma_{{\Pp}^{\mathrm{tv}}_{s,a}} (V)  | &\leq | \widehat{P}_{s,a} V - P^o_{s,a} V| + \max_{\mu: 0 \leq \mu \leq V} | \widehat{P}_{s,a} \mu - P^o_{s,a} \mu| \\
&\leq \max_{\mu \in \V} | \widehat{P}_{s,a} \mu - P^o_{s,a} \mu| + \max_{\mu: 0 \leq \mu \leq V} | \widehat{P}_{s,a} \mu - P^o_{s,a} \mu| \leq  2  \max_{\mu \in \V} | \widehat{P}_{s,a} \mu - P^o_{s,a} \mu|.
\end{align*}
This completes the proof.
\end{proof}

We are now ready to prove Proposition \ref{lem:tv-sigma-diff-bound-uniform}.

\begin{proof}[\textbf{Proof of  Proposition \ref{lem:tv-sigma-diff-bound-uniform}}]
For any given $V \in \V$ and $(s,a)$, from Lemma \ref{lem:tv-sigma-diff},  we have
\begin{align*}
&| \sigma_{\widehat{\Pp}^{\mathrm{tv}}_{s,a}} (V) - \sigma_{{\Pp}^{\mathrm{tv}}_{s,a}} (V)  | \leq  2  \max_{\mu \in \V} | \widehat{P}_{s,a} \mu - P^o_{s,a} \mu| \leq  2  \max_{\mu \in \V} \max_{s,a} | \widehat{P}_{s,a} \mu - P^o_{s,a} \mu|.
\end{align*}
Taking the maximum over $V$ and  $(s, a)$ on both sides, we get
\begin{align}
\label{eq:prop3-pf-step1}
\max_{V \in \V} ~ \max_{s,a}  ~| \sigma_{\widehat{\Pp}^{\mathrm{tv}}_{s,a}} (V) - \sigma_{{\Pp}^{\mathrm{tv}}_{s,a}} (V) |  \leq 2  \max_{\mu \in \V} \max_{s,a} | \widehat{P}_{s,a} \mu - P^o_{s,a} \mu|.
\end{align}
Now, from the proof of Lemma \ref{lem:covering-hoeffdings},  for any $\eta, \delta \in (0, 1)$, we get
\begin{align}
\label{eq:prop3-pf-step2}
\max_{\mu\in \V} \max_{s,a} | \widehat{P}_{s,a} \mu - P^o_{s,a} \mu| &\leq \frac{1}{1-\gamma} \sqrt{\frac{|\Ss|\log(6|\Ss||\Aa|/(\delta\eta(1-\gamma)))}{2N}} + 2\eta,
\end{align} 
with probability greater than $1-\delta$. Using \eqref{eq:prop3-pf-step2} in \eqref{eq:prop3-pf-step1}, we get the desired result. 
\end{proof}

We also need the following result that specifies the amplification when replacing the algorithm iterate value function with the value function of the policy towards approximating the optimal value. 
\begin{lemma}
\label{lem:singh_yee_main_result}
Let $V_{k}$ and $Q_{k}$ be as given in the REVI algorithm for $k \geq 1$. Also, let $\pi_{k} = \argmax_{a} Q_{k}(s, a)$. Then, 
\begin{align*}
\| \widehat{V}^{*} - \widehat{V}^{\pi_k} \| \leq \frac{2\gamma}{1-\gamma} \| V_k  - \widehat{V}^{*} \|.
\end{align*}
Furthermore, \begin{align*}
    \| {V}^{*} - {V}^{\pi_k} \| \leq \frac{2}{1-\gamma} \| Q_k  - {Q}^{*} \|.
\end{align*}
\end{lemma}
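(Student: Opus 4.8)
The plan is to read both inequalities as instances of the classical greedy-policy (Singh--Yee type) perturbation bound, transported to the robust setting where the linear map $P_{s,a}V$ is replaced by the nonlinear support function $\sigma_{\Pp_{s,a}}(V)$. The only structural ingredients I would need are: (i) $\sigma_{\Pp_{s,a}}$ and $\sigma_{\widehat{\Pp}_{s,a}}$ are $1$-Lipschitz in the sup-norm (Lemma \ref{lem:sigma_v_diff}), which makes the policy-evaluation operators $T_\pi V(s)=r(s,\pi(s))+\gamma\sigma_{\Pp_{s,\pi(s)}}(V)$ and its empirical analogue $\widehat{T}_\pi$ into $\gamma$-contractions with fixed points $V^{\pi}$ and $\widehat{V}^{\pi}$; (ii) the pointwise domination $V^{*}\geq V^{\pi}$ and $\widehat{V}^{*}\geq \widehat{V}^{\pi}$, so that each sup-norm equals a pointwise maximum of a nonnegative quantity; and (iii) the defining greedy property of $\pi_k$.

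For the second inequality I would argue state by state. Fix $s$, set $a=\pi_k(s)$, and let $a^{*}$ attain $V^{*}(s)=Q^{*}(s,a^{*})$; writing $Q^{\pi_k}(s,a')=r(s,a')+\gamma\sigma_{\Pp_{s,a'}}(V^{\pi_k})$ so that $V^{\pi_k}(s)=Q^{\pi_k}(s,a)$, I telescope through $Q_k$:
\begin{align*}
V^{*}(s)-V^{\pi_k}(s) &= [Q^{*}(s,a^{*})-Q_k(s,a^{*})] + [Q_k(s,a^{*})-Q_k(s,a)] \\
&\quad + [Q_k(s,a)-Q^{*}(s,a)] + [Q^{*}(s,a)-Q^{\pi_k}(s,a)].
\end{align*}
The first and third brackets are each at most $\|Q_k-Q^{*}\|$; the second is nonpositive \emph{precisely} because $a=\argmax_{a'}Q_k(s,a')$, which is the only place the greedy choice enters and uses nothing about $Q_k$ beyond being the function $\pi_k$ maximizes (so an empirically computed $Q_k$ may be inserted into this true-MDP bound); the fourth equals $\gamma(\sigma_{\Pp_{s,a}}(V^{*})-\sigma_{\Pp_{s,a}}(V^{\pi_k}))\leq \gamma\|V^{*}-V^{\pi_k}\|$ by Lemma \ref{lem:sigma_v_diff}. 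Taking the maximum over $s$ (valid since $V^{*}\geq V^{\pi_k}$) gives $\|V^{*}-V^{\pi_k}\|\leq 2\|Q_k-Q^{*}\|+\gamma\|V^{*}-V^{\pi_k}\|$, and moving the self-referential term to the left yields the claimed $\tfrac{2}{1-\gamma}\|Q_k-Q^{*}\|$.

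For the first inequality I would run the identical telescoping inside the empirical MDP, but pivoting on value functions rather than $Q$-functions. The greedy construction makes $\widehat{T}_{\pi_k}V_k=\max_\pi \widehat{T}_\pi V_k = \widehat{V}$-update for the iterate $V_k$ that $\pi_k$ greedily optimizes, and comparing $\widehat{V}^{*}=\max_\pi \widehat{T}_\pi \widehat{V}^{*}$ against $\widehat{T}_{\pi_k}\widehat{V}^{*}$ and then against $\widehat{V}^{\pi_k}=\widehat{T}_{\pi_k}\widehat{V}^{\pi_k}$ produces the analogous four terms. The sole difference from the $Q$-version is that every comparison of a greedy value against the optimal value now passes through $\gamma(\sigma_{\widehat{\Pp}_{s,a}}(\cdot)-\sigma_{\widehat{\Pp}_{s,a}}(\cdot))$, so each error term picks up an extra factor $\gamma$ and is controlled by $\gamma\|V_k-\widehat{V}^{*}\|$ via the empirical half of Lemma \ref{lem:sigma_v_diff}; collecting the four terms and absorbing the self-referential $\gamma\|\widehat{V}^{*}-\widehat{V}^{\pi_k}\|$ gives $(1-\gamma)\|\widehat{V}^{*}-\widehat{V}^{\pi_k}\|\leq 2\gamma\|V_k-\widehat{V}^{*}\|$, i.e. $\tfrac{2\gamma}{1-\gamma}\|V_k-\widehat{V}^{*}\|$.

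The main obstacle, and the reason the robust case is not a verbatim copy of the non-robust Singh--Yee argument, is the self-referential contraction term: since $\sigma$ is nonlinear I cannot expand $\sigma_{\Pp_{s,a}}(V)$ as an inner product $P_{s,a}^{\top}V$ and must instead absorb $Q^{*}(s,a)-Q^{\pi_k}(s,a)$ (and its value-function analogue) using only the sup-norm Lipschitz estimate of Lemma \ref{lem:sigma_v_diff} before transposing the $\gamma\|\cdot\|$ term across the inequality. Correctly signing the greedy bracket, and tracking which iterate $\pi_k$ greedily optimizes so that the extra $\gamma$ lands in the value-function version but not the $Q$-version, are the bookkeeping details that require care; everything else is a routine telescoping.
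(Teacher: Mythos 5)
Your proposal is correct and takes essentially the same route as the paper: the paper's proof of this lemma is a one-line pointer to \citep[Main Theorem, Corollary 2]{singh1994upper} plus the remark that the $1$-Lipschitz property of $\sigma_{\Pp_{s,a}}$ and $\sigma_{\widehat{\Pp}_{s,a}}$ (Lemma \ref{lem:sigma_v_diff}) substitutes for the linear expansion $P_{s,a}^{\top}V$, and your four-term telescoping is precisely that Singh--Yee argument written out under this substitution (including the needed facts $V^{*}\geq V^{\pi_k}$ and the robust policy-evaluation fixed point). One bookkeeping caveat, which you inherit from the lemma statement itself rather than introduce: in REVI, $Q_k(s,a)=r(s,a)+\gamma\,\sigma_{\widehat{\Pp}_{s,a}}(V_{k-1})$, so $\pi_k$ is greedy with respect to $V_{k-1}$ rather than $V_k$, and your asserted identity $\widehat{T}_{\pi_k}V_k=\max_{\pi}\widehat{T}_{\pi}V_k$ fails under that indexing --- executed faithfully, the first bound comes out as $\frac{2\gamma}{1-\gamma}\|V_{k-1}-\widehat{V}^{*}\|$, a one-iteration shift that only changes $K_0$ by one and affects nothing downstream.
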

\begin{proof}
The proof is similar to the proof in \citep[Main Theorem, Corollary 2]{singh1994upper}. A straight forward modification to this proof, using the fact that ${\sigma}_{\widehat{\Pp}_{s,a}}$ and ${\sigma}_{{\Pp}_{s,a}}$  are  $1$-Lipschitz functions   as shown in Lemma \ref{lem:sigma_v_diff}, will give the desired result. 
\end{proof}

\begin{proof}[\textbf{Proof of Theorem \ref{thm:revi-TV-guarantee}}]

Recall  the empirical RMDP  $\widehat{M} = (\Ss, \Aa, r, \widehat{\Pp}^{\mathrm{tv}}, \gamma)$.  For any policy $\pi$, let $\widehat{V}^{\pi}$ be robust value function of policy $\pi$ with respect to the RMDP $\widehat{M}$. The optimal robust policy,  value function, and state-action value function of $\widehat{M}$ are denoted as $\widehat{\pi}^{\star}, \widehat{V}^{\star}$ and $ \widehat{Q}^{\star}$, respectively. Also, for any policy $\pi$, we have  $\widehat{Q}^{\pi}(s,a) = r(s,a) + \gamma \sigma_{\widehat{\Pp}^{\mathrm{tv}}_{s,a}}(\widehat{V}^{\pi})$ and ${Q}^{\pi}(s,a) = r(s,a) + \gamma \sigma_{{\Pp}^{\mathrm{tv}}_{s,a}}({V}^{\pi})$. 

Let $V_{k}$ and $Q_{k}$ be as given in the REVI algorithm for $k \geq 1$. Also, let $\pi_{k}(s) = \argmax_{a} Q_{k}(s, a)$.  Now, 
 \begin{align}
 \label{eq:traingle-split}
     \| V^* - V^{\pi_k} \| \leq \| V^* - \widehat{V}^{*} \| + \| \widehat{V}^{*} - \widehat{V}^{\pi_{k}} \| + \| \widehat{V}^{\pi_{k}} - V^{\pi_{k}} \|.
 \end{align}

 \textit{ 1) Bounding the first term in \eqref{eq:traingle-split}:} 
Let  $\mathcal{V} = \{V \in \mathbb{R}^{|\Ss|}: \norm{V} \leq 1/(1 - \gamma)\}$.  For any $s\in\Ss$, 
\begin{align*}
V^*(s) -  \widehat{V}^{*}(s) &= Q^*(s,\pi^*(s)) -  \widehat{Q}^*(s,\hat{\pi}^*(s)) 	\stackrel{(a)}{\leq}  Q^*(s,\pi^*(s)) -  \widehat{Q}^*(s,\pi^*(s)) \\	
&\stackrel{(b)}{=} \gamma \sigma_{{\Pp}^{\mathrm{tv}}_{s,\pi^*(s)}} (V^*) - \gamma \sigma_{\widehat{\Pp}^{\mathrm{tv}}_{s,\pi^*(s)}} (\widehat{V}^*)\\
&= \gamma  (\sigma_{{\Pp}^{\mathrm{tv}}_{s,\pi^*(s)}} (V^*) -  \sigma_{\widehat{\Pp}^{\mathrm{tv}}_{s,\pi^*(s)}} ({V}^*) ) + \gamma  (\sigma_{\widehat{\Pp}^{\mathrm{tv}}_{s,\pi^*(s)}} ({V}^*) - \sigma_{\widehat{\Pp}^{\mathrm{tv}}_{s,\pi^*(s)}} (\widehat{V}^*) )\\
&\stackrel{(c)}{\leq}  \gamma  (\sigma_{{\Pp}^{\mathrm{tv}}_{s,\pi^*(s)}} (V^*) -  \sigma_{\widehat{\Pp}^{\mathrm{tv}}_{s,\pi^*(s)}} ({V}^*) )  + \gamma \|V^*-\widehat{V}^{*}\| \\
&\leq \gamma  ~ \max_{V \in \V} ~ \max_{s,a}  ~| \sigma_{\widehat{\Pp}^{\mathrm{tv}}_{s,a}} (V) - \sigma_{{\Pp}^{\mathrm{tv}}_{s,a}} (V) | + \gamma \|V^*-\widehat{V}^{*}\| 
\end{align*} 
where $(a)$ follows since ${\hat{\pi}^*}$ is the robust optimal policy for $\widehat{M}$, $(b)$ follows from the definitions of $Q^*$ and $\widehat{Q}^*$,  $(c)$ follows from Lemma \ref{lem:sigma_v_diff}. Similarly analyzing for $  \widehat{V}^{*}(s) - V^*(s)$, we get
\begin{align}
\|V^*-\widehat{V}^{*}\| \leq \frac{\gamma}{(1-\gamma)} ~ \max_{V \in \V} ~ \max_{s,a}  ~| \sigma_{\widehat{\Pp}^{\mathrm{tv}}_{s,a}} (V) - \sigma_{{\Pp}^{\mathrm{tv}}_{s,a}} (V) |. \label{eq:tv-term1-incomplete-bound}
\end{align} 
Now,   using  Proposition \ref{lem:tv-sigma-diff-bound-uniform}, with  probability greater than $1 - \delta$, we get 
\begin{align}
\label{eq:tv-term1-bound}
\|V^*-\widehat{V}^{*}\| \leq \frac{\gamma}{(1-\gamma)} {C}^{\mathrm{tv}}_{u}(N,\eta, \delta),
\end{align}
where 	${C}^{\mathrm{tv}}_{u}(N,\eta, \delta)$ is given in  equation  \eqref{eq:c-tv-uniform} in the statement of Proposition \ref{lem:tv-sigma-diff-bound-uniform}.

\textit{2) Bounding the second term in \eqref{eq:traingle-split}:}  Let  $\widehat{T}$ be the robust Bellman operator corresponding to $\widehat{M}$. So,  $\widehat{T}$ is a $\gamma$-contraction mapping and $\widehat{V}^*$ is its unique fixed point \citep{iyengar2005robust}.  The REVI iterates $V_{k}, k \geq 0$, with $V_{0} = 0$, can now be expressed as $V_{k+1} = \widehat{T} V_{k}$. Using the  properties of  $\widehat{T}$, we get
\begin{align}
\|V_k -  \widehat{V}^*\| = \|\widehat{T} V_{k-1} -  \widehat{T} \widehat{V}^*\|	\leq \gamma \|V_{k-1} -  \widehat{V}^*\| \leq \cdots \leq \gamma^{k} \|V_{0} -  \widehat{V}^*\| \leq {\gamma^k}/{(1-\gamma)}.
\label{eq:vk-diff-v-star}
\end{align} 
Now, using Lemma \ref{lem:singh_yee_main_result},  we get
 \begin{align}
 \label{eq:tv-term2-bound}
 \|\widehat{V}^{\pi_{k}} -  \widehat{V}^*\|  \leq \frac{2\gamma^{k+1}}{(1-\gamma)^2}.
 \end{align}

\textit{3) Bounding the third term in \eqref{eq:traingle-split}:}  This is similar to bounding the first term. For any $s\in\Ss$, 
 \begin{align*}
 V^{\pi_k}(s) -  \widehat{V}^{\pi_k}(s) &=   Q^{\pi_k}(s,\pi_k(s)) -  \widehat{Q}^{\pi_k}(s,\pi_k(s)) =  \gamma \sigma_{{\Pp}_{s,\pi_k(s)}} (V^{\pi_k}) - \gamma \sigma_{\widehat{\Pp}_{s,\pi_k(s)}} (\widehat{V}^{\pi_k})\\
 &= \gamma  (\sigma_{{\Pp}_{s,\pi_k(s)}} (V^{\pi_k}) -  \sigma_{{\Pp}_{s,\pi_k(s)}} (\widehat{V}^{\pi_k}))  + \gamma ( \sigma_{{\Pp}_{s,\pi_k(s)}} (\widehat{V}^{\pi_k}) - \sigma_{\widehat{\Pp}_{s,\pi_k(s)}} (\widehat{V}^{\pi_k}) )\\
 &\stackrel{(d)}{\leq}  \gamma \|V^{\pi_k}-\widehat{V}^{\pi_k}\| + \gamma ( \sigma_{{\Pp}_{s,\pi_k(s)}} (\widehat{V}^{\pi_k}) - \sigma_{\widehat{\Pp}_{s,\pi_k(s)}} (\widehat{V}^{\pi_k}) )\\
  &\leq  \gamma \|V^{\pi_k}-\widehat{V}^{\pi_k}\| + \gamma  ~ \max_{V \in \V} ~ \max_{s,a}  ~| \sigma_{\widehat{\Pp}^{\mathrm{tv}}_{s,a}} (V) - \sigma_{{\Pp}^{\mathrm{tv}}_{s,a}} (V) | 
 \end{align*} 
 where $(d)$ follows  from Lemma \ref{lem:sigma_v_diff}. Similarly analyzing for $  \widehat{V}^{\pi_k}(s) - V^{\pi_k}(s)$, we get,
\begin{align}
\|V^{\pi_k}-\widehat{V}^{\pi_k}\| \leq \frac{\gamma}{(1-\gamma)} ~ \max_{V \in \V} ~ \max_{s,a}  ~| \sigma_{\widehat{\Pp}^{\mathrm{tv}}_{s,a}} (V) - \sigma_{{\Pp}^{\mathrm{tv}}_{s,a}} (V) | . \label{eq:tv-term3-incomplete-bound}
\end{align}
 Now,   using  Proposition \ref{lem:tv-sigma-diff-bound-uniform}, with  probability greater than $1 - \delta$, we get 
\begin{align}
\label{eq:tv-term3-bound}
\|V^{\pi_k}-\widehat{V}^{\pi_k}\| \leq \frac{\gamma}{(1-\gamma)} {C}^{\mathrm{tv}}_{u}(N,\eta, \delta).
\end{align}
Using \eqref{eq:tv-term1-bound} -  \eqref{eq:tv-term3-bound} in \eqref{eq:traingle-split}, we get, with probability at least $1-2\delta$,
 \begin{align}
 \label{eq:tv-thm-pf-st21}
 \| V^* - V^{\pi_k} \| \leq \frac{2\gamma^{k+1}}{(1-\gamma)^2} +  \frac{2 \gamma}{(1-\gamma)} {C}^{\mathrm{tv}}_{u}(N,\eta, \delta).
 \end{align}
Using the value of ${C}^{\mathrm{tv}}_{u}(N,\eta, \delta)$ as given in  Proposition \ref{lem:tv-sigma-diff-bound-uniform}, we get
 \begin{align}
 \label{eq:tv-thm-bound-1}
 \| V^* - V^{\pi_k} \| \leq &\frac{2\gamma^{k+1}}{(1-\gamma)^2} +   \frac{4\gamma}{(1-\gamma)^2}  \sqrt{\frac{|\Ss|\log(6|\Ss||\Aa|/(\delta\eta(1-\gamma)))}{2N}} + \frac{8 \gamma \eta}{(1-\gamma)}  
 \end{align}
 with probability at least $1-2\delta$.

Now, choose $\eta = \epsilon (1 - \gamma) /(24 \gamma)$. Since $\epsilon \in (0, 24 \gamma/(1-\gamma))$, this particular $\eta$ is in $(0, 1)$. Now,  choosing
\begin{align}
\label{eq:K0-modified}
k &\geq K_{0} = \frac{1}{ \log(1/\gamma)}  \log(\frac{6\gamma}{\epsilon (1-\gamma)^2}), \\
\label{eq:N-TV-modified}
N &\geq N^{\mathrm{tv}} = 
	\frac{ 72 \gamma^2}{(1-\gamma)^4}  \frac{|\Ss|\log(144\gamma|\Ss||\Aa|/(\delta\epsilon(1-\gamma)^2))}{\epsilon^2},
\end{align}
we get $\| V^* - V^{\pi_k} \| \leq \epsilon$ with probability at least $1-2\delta$. 
 \end{proof}

\subsection{Proof of Theorem \ref{thm:revi-Chi-guarantee} }


\begin{proof}[\textbf{Proof of  Lemma \ref{lem:chi-sigma-diff}}]
Fix an $(s, a)$ pair.  From \cite[Lemma 4.2]{iyengar2005robust}, we have 
\begin{align}
\label{eq:sigma-chi}
&\sigma_{{\Pp}^{\mathrm{c}}_{s,a}} (V) = \max_{\mu: 0 \leq \mu \leq V} \bigg( P^{o}_{s,a} (V-\mu)  - \sqrt{c_r \mathrm{Var}_{P^{o}_{s,a}}(V-\mu)} \bigg), 
\end{align}
where $\mathrm{Var}_{P^{o}_{s,a}}(V-\mu) = P^{o}_{s,a}(V-\mu)^2 - (P^{o}_{s,a}(V-\mu))^2$. We get a similar expression for $\sigma_{\widehat{{\Pp}}^{\mathrm{c}}_{s,a}} (V) $.  Using these expressions,   with the additional facts that $|\max_{x} f(x)-\max_{x} g(x)| \leq \max_{x} |f(x) - g(x)|$ and $\max_{x} (f(x)+g(x)) \leq \max_{x} f(x)+\max_{x} g(x)$, we get  the desired result. 
\end{proof}

We state the following concentration result that is useful for the proof of Proposition \ref{lem:chi-sigma-diff-bound-uniform}. 
\begin{lemma}
\label{lem:variance_concentration}
For any  $V\in \R^{|\Ss|}_{+}$ with $\|V\| \leq V_{\max}$,  with probability at least $1-\delta$,
\begin{align*}
\max_{(s,a)} |\sqrt{\text{Var}_{P^o_{s,a}} V } -  \sqrt{\text{Var}_{\widehat{P}_{s,a}} V } | \leq V_{\max} \sqrt{\frac{2\log(2|\Ss||\Aa|/\delta)}{N}}
\end{align*}
\end{lemma}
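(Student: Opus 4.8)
The plan is to reduce this to the self-bounding variance inequality of Lemma \ref{lem:variance_ineq} applied separately at each $(s,a)$, followed by a union bound over the $|\Ss||\Aa|$ state-action pairs. The crucial observation is that sampling from the nominal model realizes \emph{exactly} the empirical quantities appearing in that inequality, so the inconvenient square-root nonlinearity is handled for free by the particular form of the concentration bound.

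First I would fix an $(s,a)$ pair and introduce i.i.d.\ random variables $X_1,\dots,X_N$, where each $X_t$ takes the value $V(s')$ with probability $P^o_{s,a}(s')$; these are precisely the $N$ next-state samples drawn from the generative model to form $\widehat{P}_{s,a}$. Since $V \in \R^{|\Ss|}_{+}$ and $\norm{V} \leq V_{\max}$, we have $X_t \in [0, V_{\max}]$, so the boundedness hypothesis of Lemma \ref{lem:variance_ineq} holds with $M = V_{\max}$. The key identifications are then $\mathrm{Var}(X_1) = \mathrm{Var}_{P^o_{s,a}}(V)$, the population variance under the nominal model, and the Maurer--Pontil statistic $S_N^2 = \frac{1}{N}\sum_t X_t^2 - (\frac{1}{N}\sum_t X_t)^2 = \widehat{P}_{s,a} V^2 - (\widehat{P}_{s,a} V)^2 = \mathrm{Var}_{\widehat{P}_{s,a}}(V)$, which holds because the empirical average of any function of the samples equals its expectation under the empirical distribution $\widehat{P}_{s,a}(s') = N(s,a,s')/N$.

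With these identifications, Lemma \ref{lem:variance_ineq} directly yields, for the fixed $(s,a)$, the bound $\pr(|\sqrt{\mathrm{Var}_{\widehat{P}_{s,a}}(V)} - \sqrt{\mathrm{Var}_{P^o_{s,a}}(V)}| \geq \epsilon) \leq 2\exp(-N\epsilon^2/(2V_{\max}^2))$. I would then choose $\epsilon = V_{\max}\sqrt{2\log(2|\Ss||\Aa|/\delta)/N}$, which makes the right-hand side exactly $\delta/(|\Ss||\Aa|)$. A union bound over all $|\Ss||\Aa|$ state-action pairs then gives the claimed uniform deviation bound with probability at least $1-\delta$.

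There is essentially no obstacle here beyond recognizing why Lemma \ref{lem:variance_ineq} is the right tool: it already controls the deviation of $\sqrt{S_N^2}$ from $\sqrt{\mathrm{Var}(X_1)}$, rather than of $S_N^2$ from $\mathrm{Var}(X_1)$. A naive route that first bounds $|\mathrm{Var}_{\widehat{P}_{s,a}}(V) - \mathrm{Var}_{P^o_{s,a}}(V)|$ and then takes square roots would either lose a factor or require a lower bound on the variance to control the Lipschitz constant of $\sqrt{\cdot}$ near zero; the self-bounding form sidesteps this cleanly, which is precisely what makes it suitable for the chi-square analysis in Proposition \ref{lem:chi-sigma-diff-bound-uniform}.
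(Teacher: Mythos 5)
Your proof is correct and takes essentially the same route as the paper's: both fix an $(s,a)$ pair, view the $N$ next-state samples as i.i.d.\ draws of a random variable taking value $V(j)$ with probability $P^o_{s,a}(j)$ (bounded in $[0, V_{\max}]$), apply the self-bounding variance inequality of Lemma \ref{lem:variance_ineq} with $\epsilon = V_{\max}\sqrt{2\log(2|\Ss||\Aa|/\delta)/N}$, and finish with a union bound over the $|\Ss||\Aa|$ pairs. Your explicit identification of the Maurer--Pontil statistic $S_N^2$ with $\mathrm{Var}_{\widehat{P}_{s,a}}(V)$ is a detail the paper leaves implicit, but otherwise the arguments coincide.
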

\begin{proof}
Fix any $(s,a)$ pair. 	Consider a discrete random variable $X$ taking value $V(j)$ with probability $P^o_{s,a}(j)$ for all $j\in\{1,2,\cdots,|\Ss|\}$. From the Self-bounding variance inequality  (Lemma \ref{lem:variance_ineq}), we have \begin{align*} 
& \pr ( |\sqrt{\text{Var}_{P^o_{s,a}} V } -  \sqrt{\text{Var}_{\widehat{P}_{s,a}} V } | \geq \epsilon ) \leq 2 \exp(-  N \epsilon^2/(2 V^2_{\max})).
\end{align*}
Choosing $\epsilon = V_{\max} \sqrt{\frac{2\log(2|\Ss||\Aa|/\delta)}{N}},$ we get $ \pr (|P^o_{s,a} V -  \widehat{P}_{s,a} V | \geq V_{\max} \sqrt{\frac{2\log(2|\Ss||\Aa|/\delta)}{N}} ) \leq \frac{\delta}{|\Ss||\Aa|}$.
 Now, using union bound, we get 
 \begin{align*}
	&\pr ( \max_{(s,a)} |\sqrt{\text{Var}_{P^o_{s,a}} V } -  \sqrt{\text{Var}_{\widehat{P}_{s,a}} V } | \geq V_{\max} \sqrt{\frac{2\log(2|\Ss||\Aa|/\delta)}{N}} )
	\\&\hspace{1cm}\leq \sum_{s,a} \pr (|\sqrt{\text{Var}_{P^o_{s,a}} V } -  \sqrt{\text{Var}_{\widehat{P}_{s,a}} V } | \geq V_{\max} \sqrt{\frac{2\log(2|\Ss||\Aa|/\delta)}{N}} ) \hspace{-0.05cm}\leq\hspace{-0.05cm} \delta. 
	\end{align*} 
This completes the proof.
\end{proof}

We are now ready to prove Proposition \ref{lem:chi-sigma-diff-bound-uniform}.


\begin{proof}[\textbf{Proof of  Proposition \ref{lem:chi-sigma-diff-bound-uniform}}]
Fix an $(s,a)$ pair. From Lemma  \ref{lem:chi-sigma-diff}, for any given $V \in \V$, we have
\small
\begin{align*}
 &| \sigma_{\widehat{\Pp}^{\mathrm{c}}_{s,a}} (V) - \sigma_{{\Pp}^{\mathrm{c}}_{s,a}} (V)  | \leq  \max_{\mu: 0 \leq \mu \leq V} | \sqrt{c_r \text{Var}_{\widehat{P}_{s,a}}(V-\mu)} - \sqrt{c_r \text{Var}_{P^o_{s,a}}(V-\mu)}| + \max_{\mu: 0 \leq \mu \leq V} | \widehat{P}_{s,a} (V-\mu) - P^o_{s,a} (V-\mu)|. 
\end{align*}
\normalsize
By a simple variable substitution, we get
\begin{align*}
 &| \sigma_{\widehat{\Pp}^{\mathrm{c}}_{s,a}} (V) - \sigma_{{\Pp}^{\mathrm{c}}_{s,a}} (V)  | \leq  \max_{\mu \in \V_{+}} ~ \max_{s,a}~ | \sqrt{c_r \text{Var}_{\widehat{P}_{s,a}} \mu } - \sqrt{c_r \text{Var}_{P^o_{s,a}} \mu}|+ \max_{\mu \in \V} ~ \max_{s,a}~ | \widehat{P}_{s,a} \mu - P^o_{s,a}\mu|,
\end{align*}
which will give
\begin{align}
\label{eq:chi-pf-step-1}
\max_{V \in \V} ~ \max_{s,a} | \sigma_{\widehat{\Pp}^{\mathrm{c}}_{s,a}} (V) - \sigma_{{\Pp}^{\mathrm{c}}_{s,a}} (V) | \leq  \max_{\mu \in \V_{+}} ~ \max_{s,a}~ | \sqrt{c_r \text{Var}_{\widehat{P}_{s,a}} \mu } - \sqrt{c_r \text{Var}_{P^o_{s,a}} \mu}|+ \max_{\mu \in \V} ~ \max_{s,a}~ | \widehat{P}_{s,a} \mu - P^o_{s,a}\mu|,
\end{align}
where $\V_{+} =  \{V \in \R^{|\Ss|}_{+}: \norm{V} \leq 1/(1 - \gamma)\}$.

We will first bound the second term on the RHS of \eqref{eq:chi-pf-step-1}. From the proof of Lemma \ref{lem:covering-hoeffdings},  for any $\eta, \delta \in (0, 1)$, we get
 \begin{align}
\label{eq:chi-pf-step-2}
\max_{\mu\in \V} \max_{s,a} | \widehat{P}_{s,a} \mu - P^o_{s,a} \mu| &\leq \frac{1}{1-\gamma} \sqrt{\frac{|\Ss|\log(12|\Ss||\Aa|/(\delta\eta(1-\gamma)))}{2N}} + 2\eta,
\end{align} 
with probability greater than $1-\delta/2$. 

Now, we will focus  on the first term on the RHS of \eqref{eq:chi-pf-step-1}. Fix a $\mu \in \V_{+}$. Consider a minimal $\eta$-cover $\N_{\V_{+}}(\eta)$ of the set $\V_{+}$. By definition, there exists $\mu' \in \N_{\V_{+}}(\eta)$ such that $\norm{\mu - \mu'} \leq \eta$.  Now, following the same step as in the proof of Lemma \ref{lem:covering-hoeffdings}, we get
\begin{align*}
&| \sqrt{ \text{Var}_{\widehat{P}_{s,a}} \mu } - \sqrt{ \text{Var}_{P^o_{s,a}} \mu}| \leq    | \sqrt{ \text{Var}_{\widehat{P}_{s,a}} \mu } - \sqrt{ \text{Var}_{\widehat{P}_{s,a}} \mu'}|  +  | \sqrt{ \text{Var}_{\widehat{P}_{s,a}} \mu' } - \sqrt{ \text{Var}_{P^o_{s,a}} \mu'}|  + | \sqrt{ \text{Var}_{P^o_{s,a}} \mu } - \sqrt{ \text{Var}_{P^o_{s,a}} \mu'}| \\
&\stackrel{(a)}{\leq} | \sqrt{ \text{Var}_{\widehat{P}_{s,a}} \mu' } - \sqrt{ \text{Var}_{P^o_{s,a}} \mu'}|  +   \sqrt{| \text{Var}_{\widehat{P}_{s,a}} \mu - \text{Var}_{\widehat{P}_{s,a}} \mu' |}  +  \sqrt{| \text{Var}_{P^o_{s,a}} \mu - \text{Var}_{P^o_{s,a}} \mu'|} \\
&\stackrel{(b)}{\leq} | \sqrt{ \text{Var}_{\widehat{P}_{s,a}} \mu' } - \sqrt{ \text{Var}_{P^o_{s,a}} \mu'}|  +   \sqrt{| \widehat{P}_{s,a} (\mu^2 - \mu'^2) |} +   \sqrt{| (\widehat{P}_{s,a} \mu)^2 - (\widehat{P}_{s,a} \mu')^2) |}  + \\ &\hspace{7cm}   \sqrt{| P^o_{s,a} (\mu^2 - \mu'^2) |} +   \sqrt{| (P^o_{s,a} \mu)^2 - (P^o_{s,a} \mu')^2) |} \\
&\stackrel{(c)}{\leq} | \sqrt{ \text{Var}_{\widehat{P}_{s,a}} \mu' } - \sqrt{ \text{Var}_{P^o_{s,a}} \mu'}|  +   \sqrt{\frac{32\eta}{1-\gamma}} \\
&\leq \sup_{\mu'\in \N_{\V_{+}}(\eta)} \max_{s,a}| \sqrt{ \text{Var}_{\widehat{P}_{s,a}} \mu' } - \sqrt{ \text{Var}_{P^o_{s,a}} \mu'}|  +   \sqrt{\frac{32\eta}{1-\gamma}}
\end{align*} where $(a)$ follows from the fact $|\sqrt{x}-\sqrt{y}| \leq \sqrt{|x-y|}$ for all $x,y\in\R_+$, $(b)$ follows from the fact $|\sqrt{x+y}| \leq \sqrt{x}+\sqrt{y}$ for all $x,y\in\R_+$, and $(c)$ follows by using the fact $x^{2} - y^{2} =( x+y) (x-y)$, $\|\mu\| \leq 1/(1-\gamma)$, and  $\|\mu'\| \leq 1/(1-\gamma)$  with H$\ddot{\text{o}}$lder's inequality. Now, taking max on both sides with respect to $\mu$ and $(s,a)$ we get
\begin{align}
\sup_{\mu\in \V_{+}} \max_{s,a} |\sqrt{ \text{Var}_{\widehat{P}_{s,a}} \mu } - \sqrt{ \text{Var}_{P^o_{s,a}} \mu}| &\leq \sup_{\mu'\in \N_{\V_{+}}(\eta)} \max_{s,a}| \sqrt{ \text{Var}_{\widehat{P}_{s,a}} \mu' } - \sqrt{ \text{Var}_{P^o_{s,a}} \mu'}|  +   \sqrt{\frac{32\eta}{1-\gamma}} \nonumber \\
&\stackrel{(d)}{\leq} \frac{1}{1-\gamma} \sqrt{\frac{2\log(4|\Ss||\Aa| |\N_{\V_{+}}(\eta) |/\delta)}{N}} + \sqrt{\frac{32\eta}{1-\gamma}} \nonumber \\
&\stackrel{(e)}{\leq} \frac{1}{1-\gamma} \sqrt{\frac{2|\Ss|\log(12|\Ss||\Aa|/(\delta\eta(1-\gamma)))}{N}} + \sqrt{\frac{32\eta}{1-\gamma}}, \label{eq:chi-pf-step-3}
	\end{align} 
with probability at least $1-\delta/2$. Here,  $(d)$ follows from Lemma \ref{lem:variance_concentration} and the union bound and $(e)$ from Lemma \ref{por:covering_num}.

Applying \eqref{eq:chi-pf-step-2} and \eqref{eq:chi-pf-step-3} in \eqref{eq:chi-pf-step-1}, we get 
\begin{align*}
\max_{V \in \V} ~ \max_{s,a} ~| \sigma_{\widehat{\Pp}^{\mathrm{c}}_{s,a}} (V) - \sigma_{{\Pp}^{\mathrm{c}}_{s,a}} (V) | &\leq \frac{1}{1-\gamma} \sqrt{\frac{2 c_r |\Ss|\log(12|\Ss||\Aa|/(\delta\eta(1-\gamma)))}{N}} + \sqrt{\frac{32\eta c_r}{1-\gamma}}\\
&\hspace{1cm}  +  \frac{1}{1-\gamma} \sqrt{\frac{|\Ss|\log(12|\Ss||\Aa|/(\delta\eta(1-\gamma)))}{2N}} + 2\eta,
\end{align*}
with probability greater than $1 -\delta$. This completes the proof. 
\end{proof}

\begin{proof}[\textbf{Proof of Theorem \ref{thm:revi-Chi-guarantee}}] The basic steps of the proof is similar to that of Theorem \ref{thm:revi-TV-guarantee}. So, we present only the important steps.  

Following the same steps as given before \eqref{eq:tv-term1-bound} and using Proposition  \ref{lem:chi-sigma-diff-bound-uniform}, we get, with probability greater than $1-\delta$, 
\begin{align}
\label{eq:chi-term1-bound}
\|V^*-\widehat{V}^{*}\| &\leq \frac{\gamma}{(1-\gamma)} {C}^{\mathrm{c}}_{u}(N,\eta, \delta)
\end{align}
Similarly, following the steps as given before  \eqref{eq:tv-term2-bound}, we get 
\begin{align}
 \label{eq:chi-term2-bound}
 \|\widehat{V}^{\pi_{k}} -  \widehat{V}^*\|  \leq \frac{2\gamma^{k+1}}{(1-\gamma)^2}.
 \end{align}
In the same vein, following the steps as given before \eqref{eq:tv-term3-bound} and using Proposition  \ref{lem:chi-sigma-diff-bound-uniform}, we get, with probability greater than $1-\delta$, 
\begin{align}
\label{eq:chi-term3-bound}
\|V^{\pi_k}-\widehat{V}^{\pi_k}\| \leq \frac{\gamma}{(1-\gamma)} {C}^{\mathrm{c}}_{u}(N,\eta, \delta).
\end{align}
Using \eqref{eq:chi-term1-bound} -  \eqref{eq:chi-term3-bound}, similar to \eqref{eq:tv-thm-pf-st21}, we get, with probability greater than $1-2\delta$, 
\begin{align}
 \label{eq:chi-thm-pf-st21}
 \| V^* - V^{\pi_k} \| \leq \frac{2\gamma^{k+1}}{(1-\gamma)^2} +  \frac{2 \gamma}{(1-\gamma)} {C}^{\mathrm{c}}_{u}(N,\eta, \delta).
 \end{align}
 

 Using the value of ${C}^{\mathrm{c}}_{u}(N,\eta, \delta)$ as given in  Proposition \ref{lem:chi-sigma-diff-bound-uniform}, we get, with probability greater than $1-2\delta$,
\begin{align*}
\| V^* - V^{\pi_k} \| \leq &\frac{2\gamma^{k+1}}{(1-\gamma)^2} + \frac{8\gamma{\sqrt{2\eta c_r}}}{(1-\gamma)^{3/2}} + \frac{4\gamma\eta}{1-\gamma}
+ \frac{2\gamma}{(1-\gamma)^2} \sqrt{\frac{(2 c_r+1) |\Ss|\log(12|\Ss||\Aa|/(\delta\eta(1-\gamma)))}{N}} .
\end{align*}
We can now choose $k, \epsilon, \eta$ to make each of the term on the RHS of the above inequality small. In particular, we select $\epsilon\in(0,\min\{16\gamma /(1-\gamma),32\gamma \sqrt{2c_r} /(1-\gamma)^{3/2}  \})$ and $\eta = \min \{ \epsilon (1-\gamma)/(16\gamma) , \epsilon^2 (1-\gamma)^3/(2048 c_r \gamma^2)  \}$. Note that this choice also ensure $\eta \in (0, 1)$. Now, by choosing
\begin{align}
\label{eq:K0-chi-modified}
k &\geq K_{0} = \frac{1}{ \log(1/\gamma)} \cdot \log(\frac{8\gamma}{\epsilon (1-\gamma)^2}), \\
\label{eq:N-chi-modified}
N &\geq N^{\mathrm{c}} = 
	\frac{ 64 \gamma^2}{(1-\gamma)^4} \cdot \frac{(2c_r + 1)|\Ss|\log(12|\Ss||\Aa|/(\delta\eta(1-\gamma)))}{\epsilon^2},
\end{align}
we will get $\| V^* - V^{\pi_k} \| \leq \epsilon$ with probability at least $1-2\delta$.
\end{proof}

\subsection{Proof of Theorem \ref{thm:revi-KL-guarantee} }

We  state a  result  from \citep{zhou2021finite} that will be useful in the proof of Theorem \ref{thm:revi-KL-guarantee}.

\begin{lemma}[\text{\citep[Lemma 4]{zhou2021finite} }]
\label{lem:zhou-lambda-*-lemma-4}
    Fix any $\delta\in(0,1)$. Let $X\sim P$ be a bounded random variable with $X\in [0,M]$ and let $P_N$ denote the empirical distribution of $P$ with $N$ samples. For $t > 0$, for any \[ \lambda^* \in \argmax_{\lambda\geq 0} \left\{ -\lambda \log(\E_{P}[\exp(-X/\lambda)]) - \lambda t  \right\}, \] 
    (1) $\lambda^* = 0$. Furthermore, let the support of $X$ be finite. Then there exists a problem dependent constant \[ N'(\delta, t, P) := \max \{ \log(2/\delta) / \log(1/(1-\min_{x\in \mathrm{supp}(X)} P(X=x))) ~,~ 2M^2 \log(4/\delta) / (P(X= \mathrm{ess}\inf X) - \exp(-t))^2 \} ,\] such that for $N\geq N'(\delta, t, P)$ we have, with probability at least $1-\delta$, \[ 0 \in \argmax_{\lambda\geq 0} \left\{ -\lambda \log(\E_{P_N}[\exp(-X/\lambda)]) - \lambda t  \right\}. \]
    (2) $\lambda^* > 0$. Then there exists a problem dependent constant \[ N''(\delta, t, P) := \max_{\lambda \in \{ \underline{\lambda}, \lambda^*, M/t \} } \frac{8M^2 \exp(2M/\lambda)}{\tau^2} \log(6/\delta) ,\] 
    where $\underline{\lambda} = \lambda^*/2 >0$ (independent of $N$) and \begin{align*}
        \tau &= \min \{ \underline{\lambda} \log(\E_{P}[\exp(-X/\underline{\lambda})]) + \underline{\lambda} t , (M/t) \log(\E_{P}[\exp(-t X/M)]) + M \} \\
        &\hspace{7cm}- (\lambda^* \log(\E_{P}[\exp(-X/\lambda^*)]) + \lambda^* t ) > 0,
    \end{align*}
    such that for $N\geq N''(\delta, t, P)$, with probability at least $1-\delta$,  there exists a \[ \widehat{\lambda}^* \in \argmax_{\lambda\geq 0} \left\{ -\lambda \log(\E_{P_N}[\exp(-X/\lambda)]) - \lambda t  \right\}, \] such that $\lambda^*, \widehat{\lambda}^* \in [\underline{\lambda}, M/t]$.
\end{lemma}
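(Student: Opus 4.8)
The plan is to treat both parts through the geometry of the map $g_P(\lambda) := -\lambda \log(\E_P[\exp(-X/\lambda)]) - \lambda t$, whose maximizer over $\lambda \ge 0$ we wish to locate. First I would record the two structural facts that drive everything: by a standard computation $g_P$ is concave on $(0,\infty)$ (equivalently $h_P := -g_P$ is convex), and its boundary behavior is $\lim_{\lambda\downarrow 0} g_P(\lambda) = \mathrm{ess}\inf X = \min_{x\in\mathrm{supp}(X)} x$ while $g_P(\lambda) = \E_P[X] - \lambda t + o(1) \to -\infty$ as $\lambda\to\infty$ (since $t>0$). The same facts hold verbatim for the empirical objective $g_{P_N}$, since it has the identical functional form with $P$ replaced by $P_N$. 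Concavity together with the $-\infty$ behavior confines every maximizer to a bounded interval, and the whole proof reduces to controlling at which end of that interval the maximizer sits.

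For part (2), where $\lambda^*>0$ is an interior maximizer, I would first argue $\lambda^* \le M/t$: for $\lambda > M/t$ one has $g_P(\lambda) \le \E_P[X] - \lambda t \le M - \lambda t < 0 \le g_P(0^+)$ by Jensen's inequality and $X\le M$, so the maximizer cannot exceed $M/t$. With $\underline\lambda = \lambda^*/2$ fixed, the three points $\underline\lambda < \lambda^* \le M/t$ bracket the maximizer, and the definition of $\tau$ is exactly $\tau = \min\{h_P(\underline\lambda), h_P(M/t)\} - h_P(\lambda^*) > 0$, the suboptimality gap of the two bracket endpoints. The key step is a \emph{three-point} concentration: for each fixed $\lambda$ the quantity $\log \E[\exp(-X/\lambda)]$ is a smooth function of the bounded average $\E[\exp(-X/\lambda)]$ of $\exp(-X/\lambda) \in [\exp(-M/\lambda),1]$, so applying Hoeffding to this average and passing through the logarithm (whose local Lipschitz constant is $1/\E_P[\exp(-X/\lambda)] \le \exp(M/\lambda)$) yields a deviation of order $\lambda\exp(M/\lambda)/\sqrt N$ in $h_{P_N}(\lambda)$. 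Demanding this be below $\tau/3$ at the three points $\underline\lambda,\lambda^*,M/t$ and applying a union bound gives the stated $N''$, whose $\exp(2M/\lambda)$ factor is precisely the square of this amplification. Convexity then closes the argument: once $h_{P_N}(\lambda^*) < \min\{h_{P_N}(\underline\lambda), h_{P_N}(M/t)\}$ holds, a convex function lying below both bracket endpoints at the interior point $\lambda^*$ must attain its minimum strictly inside $(\underline\lambda, M/t)$, forcing $\widehat\lambda^* \in [\underline\lambda, M/t]$.

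For part (1), where the maximizer is at the boundary $\lambda^*=0$, optimality means $g_P(0^+) = \mathrm{ess}\inf X$ dominates, which by concavity is equivalent to the boundary slope condition $g_P'(0^+) \le 0$; this translates into an inequality comparing $P(X = \mathrm{ess}\inf X)$ with $\exp(-t)$. I would identify the two independent ways the empirical problem can fail to reproduce this: the sample may miss the minimizing atom entirely, so that $\mathrm{ess}\inf_{P_N} X > \mathrm{ess}\inf_P X$, or it may capture the atom but with an empirical mass $P_N(X=\mathrm{ess}\inf X)$ perturbed across the threshold $\exp(-t)$. The first event has probability at most $(1-\min_x P(X=x))^N$, which is $\le \delta/2$ once $N \ge \log(2/\delta)/\log(1/(1-\min_x P(X=x)))$; the second is controlled by Hoeffding on the indicator $\mathbbm{1}\{X = \mathrm{ess}\inf X\}$, whose mass must stay on the correct side of $\exp(-t)$, giving the term $2M^2\log(4/\delta)/(P(X=\mathrm{ess}\inf X)-\exp(-t))^2$. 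A union bound over the two events yields $N'$ and the conclusion $0 \in \argmax_{\lambda\ge 0} g_{P_N}(\lambda)$.

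I expect the main obstacle to be the small-$\lambda$ regime in part (2). There $\exp(-X/\lambda)$ is supported on $[\exp(-M/\lambda),1]$, so its mean is exponentially small and the logarithm magnifies empirical fluctuations by $\exp(M/\lambda)$; producing a clean, self-contained deviation bound with exactly the $\exp(2M/\lambda)$ dependence — while simultaneously verifying that $g_{P_N}$ remains concave so the bracketing argument is valid — is the delicate part. Restricting the concentration requirement to only the three points $\underline\lambda,\lambda^*,M/t$, rather than demanding a uniform-in-$\lambda$ bound, is what keeps this tractable, and is precisely why the convexity/bracketing structure is essential.
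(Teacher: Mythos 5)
Note first that the paper does not prove this lemma at all: it is imported verbatim as \citep[Lemma 4]{zhou2021finite}, so there is no internal proof to compare against, and your blind attempt should be judged as a self-contained reconstruction of the cited result. As such it is essentially correct and matches the structure of the source: the concavity of $g_P(\lambda)=-\lambda\log(\E_P[\exp(-X/\lambda)])-\lambda t$ (via the perspective of the log-MGF), the boundary limits $g_P(0^+)=\mathrm{ess}\inf X$ (which the paper itself re-derives inside the proof of its Lemma~8) and $g_P(\lambda)\to-\infty$, the Jensen argument confining maximizers to $[0,M/t]$, the identification of $\tau$ as the suboptimality gap of the two bracket points, the three-point Hoeffding-plus-log-Lipschitz concentration with a union bound (explaining both the $\log(6/\delta)$ and the $\exp(2M/\lambda)$ amplification), and the convex-bracketing closure are all the right ingredients, and your two-event decomposition in part (1) (miss the minimizing atom, probability at most $(1-\min_x P(X=x))^N$; or capture it but have the empirical mass cross the threshold $e^{-t}$, controlled by Hoeffding on the indicator) reproduces both terms of $N'$, using implicitly the correct observation that every sample is at least $\mathrm{ess}\inf X$, so hitting the atom pins the empirical essential infimum. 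Two small repairs are needed to recover the stated constants and hypotheses exactly. First, your claimed deviation $\lambda e^{M/\lambda}/\sqrt{N}$ is too crude at the endpoint $\lambda=M/t$ when $t<1$ (it would give $\lambda^2 e^{2M/\lambda}=(M/t)^2e^{2t}$ rather than $M^2e^{2t}$); use the sharper Hoeffding range $1-e^{-M/\lambda}\le\min(1,M/\lambda)$, which since $\lambda\min(1,M/\lambda)\le M$ yields a deviation of at most $Me^{M/\lambda}\sqrt{\log(6/\delta)/(2N)}$ uniformly over the three points and hence the stated $8M^2e^{2M/\lambda}\log(6/\delta)/\tau^2$. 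Second, be explicit about the strictness assumptions the lemma bakes in: $\tau>0$ presupposes $\lambda^*<M/t$ strictly, and in part (1) the denominator $(P(X=\mathrm{ess}\inf X)-e^{-t})^2$ presupposes the strict inequality $P(X=\mathrm{ess}\inf X)>e^{-t}$, which is exactly your boundary-slope criterion $g_P'(0^+)\le 0$ (equivalently $t+\log P(X=\mathrm{ess}\inf X)\ge 0$) holding with slack; also take deviations below $\tau/3$ (as you do) or strictly below $\tau/2$ so that $h_{P_N}(\lambda^*)<\min\{h_{P_N}(\underline{\lambda}),h_{P_N}(M/t)\}$ is strict, which is what the chord-extension argument needs to exclude minimizers outside $[\underline{\lambda},M/t]$.
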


We now prove the following result. 

\begin{lemma}
\label{lem:kl-characterization-sigma-diff}
For any $(s, a) \in \Ss \times \Aa$ and for any $V \in \mathbb{R}^{|\Ss|}$  with $\norm{V} \leq 1/(1-\gamma)$, 
\begin{align}
\label{eq:kl-sigma-diff}
| \sigma_{\widehat{\Pp}^{\mathrm{kl}}_{s,a}} (V) - \sigma_{{\Pp}^{\mathrm{kl}}_{s,a}} (V) | \leq \frac{\exp({1}/{\lambda_{\mathrm{kl}}(1-\gamma)})}{c_{r} (1-\gamma) } ~ \max_{\lambda\in[\lambda_{\mathrm{kl}},\frac{1}{c_{r}(1-\gamma)}]} ~ |(P^o_{s,a}- \widehat{P}_{s,a}) \exp(-V/\lambda)| 
\end{align}
holds with probability at least $1-\delta/(2|\Ss||\Aa|)$ for $N\geq \max \{ N'(\delta/(4|\Ss||\Aa|), c_r, P^o_{s,a}), N''(\delta/(4|\Ss||\Aa|), c_r, P^o_{s,a}) \},$ where both $N', N''$ are defined as in  Lemma \ref{lem:zhou-lambda-*-lemma-4}.
\end{lemma}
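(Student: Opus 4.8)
The plan is to start from the convex-duality representation of the KL support function. For the inner minimization defining $\sigma_{\Pp^{\mathrm{kl}}_{s,a}}$, strong duality (the Donsker--Varadhan / KL-distributionally-robust dual underlying Lemma \ref{lem:zhou-lambda-*-lemma-4}) yields
\[
\sigma_{\Pp^{\mathrm{kl}}_{s,a}}(V) = \max_{\lambda \geq 0} g_\lambda(P^o_{s,a}), \qquad g_\lambda(P) := -\lambda \log\bigl(\E_{P}[\exp(-V/\lambda)]\bigr) - \lambda c_r,
\]
and the identical identity with $\widehat{P}_{s,a}$ in place of $P^o_{s,a}$ for $\sigma_{\widehat{\Pp}^{\mathrm{kl}}_{s,a}}$. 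Since $g_\lambda$ depends on the measure only through $\E[\exp(-V/\lambda)]$, the first step is to use $|\max_\lambda f - \max_\lambda h| \leq \max_\lambda |f - h|$ to reduce the target quantity to $\sup_{\lambda \geq 0} |g_\lambda(\widehat{P}_{s,a}) - g_\lambda(P^o_{s,a})|$, which I would rewrite as a log-ratio $\lambda\,|\log \E_{P^o_{s,a}}[\exp(-V/\lambda)] - \log \E_{\widehat{P}_{s,a}}[\exp(-V/\lambda)]|$.

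The crux is that this supremum runs over all $\lambda \geq 0$, and the per-$\lambda$ prefactor I will produce blows up as $\lambda \downarrow 0$, so the bound must be localized to a bounded interval. Here I would invoke Lemma \ref{lem:zhou-lambda-*-lemma-4} with $X \leftrightarrow V$ (a random variable under $P^o_{s,a}$, bounded in $[0,M]$ with $M = 1/(1-\gamma)$ since $\|V\| \leq 1/(1-\gamma)$) and $t \leftrightarrow c_r$. In the non-degenerate regime $\lambda^* > 0$ (case (2) of that lemma), applying it with failure probability $\delta/(4|\Ss||\Aa|)$ and a union bound guarantees that, for $N \geq \max\{N', N''\}$, with probability at least $1 - \delta/(2|\Ss||\Aa|)$ both the population maximizer $\lambda^*$ and an empirical maximizer $\widehat{\lambda}^*$ lie in $[\underline{\lambda}, M/t] = [\lambda_{\mathrm{kl}}, 1/(c_r(1-\gamma))]$, with $\lambda_{\mathrm{kl}} := \underline{\lambda} = \lambda^*/2$. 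On this event both maxima are attained inside the interval, so I may replace $\sup_{\lambda \geq 0}$ by $\max_{\lambda \in [\lambda_{\mathrm{kl}}, 1/(c_r(1-\gamma))]}$ in both dual problems simultaneously, and hence in the difference. I expect this localization to be the main obstacle: it is the only probabilistic ingredient, it forces the case split ($\lambda^* = 0$ versus $\lambda^* > 0$), and it is what fixes the endpoints of the interval in the statement. The degenerate case $\lambda^* = 0$ would be dispatched separately by case (1) of the lemma, where both support functions collapse to the essential infimum of $V$ over a common support (for $N \geq N'$), making the difference vanish and the claimed nonnegative bound trivially true.

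The final step is to bound the difference pointwise in $\lambda$ and extract the worst-case prefactor. Writing $A = \E_{P^o_{s,a}}[e^{-V/\lambda}]$ and $B = \E_{\widehat{P}_{s,a}}[e^{-V/\lambda}]$, the mean value theorem gives $|\log A - \log B| \leq |A-B|/\min(A,B)$. Because $V \leq \|V\| \leq 1/(1-\gamma)$ entrywise, we have $e^{-V/\lambda} \geq e^{-1/(\lambda(1-\gamma))}$, so $\min(A,B) \geq e^{-1/(\lambda(1-\gamma))}$, whence
\[
|g_\lambda(\widehat{P}_{s,a}) - g_\lambda(P^o_{s,a})| = \lambda\,|\log A - \log B| \leq \lambda\, e^{1/(\lambda(1-\gamma))}\,\bigl|(P^o_{s,a} - \widehat{P}_{s,a})\exp(-V/\lambda)\bigr|.
\]
On $\lambda \in [\lambda_{\mathrm{kl}}, 1/(c_r(1-\gamma))]$ the prefactor is controlled by $\lambda \leq 1/(c_r(1-\gamma))$ and, since $\lambda \mapsto e^{1/(\lambda(1-\gamma))}$ is decreasing, $e^{1/(\lambda(1-\gamma))} \leq e^{1/(\lambda_{\mathrm{kl}}(1-\gamma))}$, so $\lambda\,e^{1/(\lambda(1-\gamma))} \leq e^{1/(\lambda_{\mathrm{kl}}(1-\gamma))}/(c_r(1-\gamma))$. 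Taking the maximum over $\lambda$ in the interval then yields exactly \eqref{eq:kl-sigma-diff}, with the stated probability and sample-size requirement inherited from the localization step.
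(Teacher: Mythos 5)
Your proposal is correct and follows essentially the same route as the paper's proof: the same dual representation of $\sigma_{\Pp^{\mathrm{kl}}_{s,a}}$, the same invocation of Lemma \ref{lem:zhou-lambda-*-lemma-4} with failure probability $\delta/(4|\Ss||\Aa|)$ per case plus a union bound to localize both maximizers to $[\lambda_{\mathrm{kl}}, 1/(c_r(1-\gamma))]$ (with the degenerate case $\lambda^*=0$ dispatched by $\sigma = V_{\min}$), and the same pointwise log-ratio bound extracting the prefactor $\exp(1/(\lambda_{\mathrm{kl}}(1-\gamma)))/(c_r(1-\gamma))$. Your mean-value-theorem step $|\log A - \log B| \leq |A-B|/\min(A,B)$ is in fact a slightly more careful rendering of the paper's $\log(1+x)$ estimate, but otherwise the arguments coincide.
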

\begin{proof}
Fix any $(s, a)$ pair.  From \cite[Lemma 4.1]{iyengar2005robust}, we have 
\begin{align} 
\label{eq:sigma-KL}
&\sigma_{{\Pp}^{\mathrm{kl}}_{s,a}} (V) = \max_{\lambda \geq 0}  ~( -c_r \lambda  - \lambda \log (P^{o}_{s,a} \exp(-V/\lambda)) ),   \quad \sigma_{\widehat{\Pp}^{\mathrm{kl}}_{s,a}} (V) = \max_{\lambda \geq 0}  ~( -c_r \lambda  - \lambda \log (\widehat{P}_{s,a} \exp(-V/\lambda)) ),
\end{align} 
where $\exp(-V/\lambda)$ is an element-wise exponential function. It is straight forward to show that $( -c_r \lambda  - \lambda \log (P^{o}_{s,a} \exp(-V/\lambda)) )$ is a concave function in $\lambda$. So, there exists an optimal solution $\lambda^{*}$. Similarly, let  $\widehat{\lambda}^{*}$ be the optimal solution of the second problem above. 

We can now give an upperbound for $\lambda^{*}, \widehat{\lambda}^{*}$ as follows:  Since $\sigma_{{\Pp}^{\mathrm{kl}}_{s,a}} (V)  \geq 0$,  we have
\begin{align*}
0 &\leq -c_r \lambda^{*}  - \lambda^{*} \log (P^{o}_{s,a}  \exp(-V/\lambda^{*} )) \stackrel{(a)}{\leq} -c_r \lambda^{*}   - \lambda^{*}  \log ( \exp(-1/(\lambda^{*} (1-\gamma)))) \leq -c_r \lambda^{*}  +{1}/{(1-\gamma)},
\end{align*}
from which we can conclude that $\lambda^{*}  \leq 1/(c_{r} (1-\gamma))$.  Same argument applies for the case of $\widehat{\lambda}^{*}$. 


From \citep[Appendix C]{nilim2005robust} it follows that whenever the maximizer $\lambda^*$ is $0$ ( $\widehat{\lambda}^*$ is $0$), we have $\sigma_{{\Pp}^{\mathrm{kl}}_{s,a}} (V) = V_{\min}$ ( $\sigma_{\widehat{\Pp}^{\mathrm{kl}}_{s,a}} (V) = V_{\min}$) where $V_{\min}=\min_{j\in\Ss} V(j)$. We include this part in detail  for completeness. 
\begin{align*}
    \lim_{\lambda\downarrow 0}-c_r \lambda & - \lambda \log (P^{o}_{s,a} \exp(-V/\lambda))   = \lim_{\lambda\downarrow 0} -c_r \lambda  - \lambda \log (\exp(-V_{\min}/\lambda) \sum_{s'} P^{o}_{s,a}(s') \exp((V_{\min}-V(s'))/\lambda)) \\
    &= \lim_{\lambda\downarrow 0} V_{\min} -c_r \lambda  - \lambda \log ( \sum_{s'} P^{o}_{s,a}(s') \exp((V_{\min}-V(s'))/\lambda)) \\
    &= \lim_{\lambda\downarrow 0} V_{\min} -c_r \lambda  - \lambda \log ( \sum_{s':V(s')=V_{\min}} P^{o}_{s,a}(s') + \sum_{s':V(s')>V_{\min}} P^{o}_{s,a}(s') \exp((V_{\min}-V(s'))/\lambda)) \\
    &\stackrel{(a)}{=} \lim_{\lambda\downarrow 0} V_{\min} -c_r \lambda  - \lambda \log ( \sum_{s':V(s')=V_{\min}} P^{o}_{s,a}(s') + \cO(\exp(-t/\lambda))) \\ 
    &\stackrel{(b)}{=}  \lim_{\lambda\downarrow 0} V_{\min} -c_r \lambda  - \lambda \log ( \sum_{s':V(s')=V_{\min}} P^{o}_{s,a}(s'))  - \lambda \log(1+ \cO(\exp(-t/\lambda))) \\
    &\stackrel{(c)}{=}  \lim_{\lambda\downarrow 0} V_{\min} -\lambda (c_r +  \log ( \sum_{s':V(s')=V_{\min}} P^{o}_{s,a}(s')) ) - \cO(\lambda \exp(-t/\lambda)) = V_{\min},
    \end{align*} where $(a)$ follows by taking $t=\min_{s':V(s')>V_{\min}}V(s') - V_{\min} > 0$, and $(b)$ and $(c)$ follows from the Taylor series expansion. Thus when $\lambda^*$ is $0$, we have $\sigma_{{\Pp}^{\mathrm{kl}}_{s,a}} (V) = V_{\min}$. A similar argument applies for $\sigma_{\widehat{\Pp}^{\mathrm{kl}}_{s,a}} (V)$.
    
Now consider the case when $\lambda^*=0$. From Lemma \ref{lem:zhou-lambda-*-lemma-4}, it follows that, with probability at least $1-\delta/(4|\Ss||\Aa|)$,  $\widehat{\lambda}^*=0$ for $N\geq N'(\delta/(4|\Ss||\Aa|), c_r, P^o_{s,a}),$ where $N'$ is defined in  Lemma \ref{lem:zhou-lambda-*-lemma-4}.    Thus, whenever $\lambda^*=0$, we have $| \sigma_{\widehat{\Pp}_{s,a}} (V) - \sigma_{{\Pp}_{s,a}} (V)  | = |V_{\min} - V_{\min}| = 0$, with probability at least $1-\delta/(4|\Ss||\Aa|)$. Thus having resolving this trivial case, we now focus on the case  when $\lambda^*>0$.

Consider the case when $\lambda^*>0$. Let $\lambda_{\mathrm{kl}} := \lambda^*/2 >0$ (dependent on $P^o_{s,a}, V,$ and $c_r$ but independent of $N$). Again from Lemma \ref{lem:zhou-lambda-*-lemma-4}, if $\lambda^* \in [\lambda_{\mathrm{kl}},{1}/{(c_r(1-\gamma))}]$, then with probability at least $1-\delta/(4|\Ss||\Aa|)$ we have $\widehat{\lambda}^*\in [\lambda_{\mathrm{kl}},{1}/{(c_r(1-\gamma))}]$ for $N\geq N''(\delta/(4|\Ss||\Aa|), c_r, P^o_{s,a})$, where $N''$ is defined in  Lemma \ref{lem:zhou-lambda-*-lemma-4}.

{ From these arguments, it is clear that we can restrict the optimization problem \eqref{eq:sigma-KL} to the set $\lambda \in [\lambda_{\mathrm{kl}},  1/(c_{r} (1-\gamma))$}.  Using this, with the additional fact that $|\max_{x} f(x)-\max_{x} g(x)| \leq \max_{x} |f(x) - g(x)|$, we get
\begin{align}
\label{eq:sigma-Kl-pf-1}
&| \sigma_{\widehat{\Pp}^{\mathrm{kl}}_{s,a}} (V) - \sigma_{{\Pp}^{\mathrm{kl}}_{s,a}} (V)  |  \leq  \max_{\lambda\in[\lambda_{\mathrm{kl}},\frac{1}{c_{r}(1-\gamma)}]} | \lambda \log (\frac{ \widehat{P}_{s,a} \exp(-V/\lambda) }{P^o_{s,a} \exp(-V/\lambda)} )|. 
\end{align}
Now, 
\begin{align}
\abs{ \log (\frac{ \widehat{P}_{s,a} \exp(-V/\lambda) }{P^o_{s,a} \exp(-V/\lambda)} )} =  \abs{\log (1 + \frac{(\widehat{P}_{s,a} - P^o_{s,a}) \exp(-V/\lambda) }{P^o_{s,a}\exp(-V/\lambda) } )} &\leq    \frac{|(P^o_{s,a}- \widehat{P}_{s,a}) \exp(-V/\lambda)| }{| P^o_{s,a} \exp(-V/\lambda) |}  \nonumber \\
\label{eq:sigma-Kl-pf-2}
&\stackrel{(d)}{\leq} \frac{|(P^o_{s,a}- \widehat{P}_{s,a}) \exp(-V/\lambda)| }{\exp(\frac{-1}{\lambda_{\mathrm{kl}}(1-\gamma)})},
\end{align}
where $(d)$ follows since $\lambda\geq  \lambda_{\mathrm{kl}}$ and $\norm{V} \leq 1/(1-\gamma)$. Using \eqref{eq:sigma-Kl-pf-2} in \eqref{eq:sigma-Kl-pf-1} along with the fact that $\lambda \leq 1/(c_{r} (1-\gamma))$, we get the desired result. 
\end{proof}

\begin{proof}[\textbf{Proof of Theorem \ref{thm:revi-KL-guarantee}}] The basic steps of the proof is similar to that of Theorem \ref{thm:revi-TV-guarantee}. So, we present only the important steps.  

Following the same steps as given before \eqref{eq:tv-term1-incomplete-bound} and \eqref{eq:tv-term3-incomplete-bound} , we get
\begin{align}
\|V^*-\widehat{V}^{*}\| + \|V^{\pi_k}-\widehat{V}^{\pi_k}\| &\leq \frac{2\gamma}{(1-\gamma)} ~ \max_{V \in \V} ~ \max_{s,a}  ~| \sigma_{\widehat{\Pp}^{\mathrm{kl}}_{s,a}} (V) - \sigma_{{\Pp}^{\mathrm{kl}}_{s,a}} (V) | . \label{eq:kl-term1-term3-incomplete-bound}
\end{align}
Similarly, following the steps as given before  \eqref{eq:tv-term2-bound}, we get 
\begin{align}
 \label{eq:kl-term2-bound}
 \|\widehat{V}^{\pi_{k}} -  \widehat{V}^*\|  \leq \frac{2\gamma^{k+1}}{(1-\gamma)^2}.
 \end{align}
 
 Using Lemma  \ref{lem:kl-characterization-sigma-diff} in \eqref{eq:kl-term1-term3-incomplete-bound}, we get
\begin{align}
\label{eq:kl-term1-term3-incomplete-bound-2}
\|V^*-\widehat{V}^{*}\| + \|V^{\pi_k}-\widehat{V}^{\pi_k}\| \leq \frac{2\gamma}{(1-\gamma)} \frac{\exp({1}/{(\lambda_{\mathrm{kl}}(1-\gamma))})}{c_{r} (1-\gamma) } ~ \max_{s,a} ~ \max_{V \in \V}  ~\max_{\lambda\in[\lambda_{\mathrm{kl}},\frac{1}{c_{r}(1-\gamma)}]} ~ |(P^o_{s,a}- \widehat{P}_{s,a}) \exp(-V/\lambda)| .
\end{align}
We now bound the max term in \eqref{eq:kl-term1-term3-incomplete-bound-2}. We reparameterize $1/\lambda$ as $\theta$ and consider the set   $\Theta = [c_{r}(1-\gamma),\frac{1}{\lambda_{\mathrm{kl}}}]$. Also, consider the minimal $\eta$-cover  $\N_\Theta(\eta)$ of $\Theta$ and fix a $V\in\V$. Then, for any given $\theta \in \Theta$, there exits a $\theta' \in \N_\Theta(\eta)$ such that $|\theta - \theta'| \leq \eta$. Now, for this particular $\theta, \theta'$,
\begin{align*}
&| ( P^o_{s,a} - \widehat{P}_{s,a}) \exp(-V\theta)| =   | (\widehat{P}_{s,a}  - P^o_{s,a}) ( \exp(-V\theta') \circ \exp(-V(\theta-\theta'))  |  \\
&\stackrel{(c)}{\leq}  | (\widehat{P}_{s,a}  - P^o_{s,a}) \exp(-V\theta')| \exp(\eta/(1-\gamma)) \leq \max_{s,a} \max_{\theta'\in \N_\Theta(\eta)} | (\widehat{P}_{s,a}  - P^o_{s,a}) \exp(-V\theta')| \exp(\eta/(1-\gamma)), 
\end{align*} 
where $(c)$ follows because $V$ is non-negative and  $\norm{V}\leq1/(1-\gamma)$. Now consider a minimal $\eta$-cover $\N_{\V}(\eta)$ of the set $\V$. By definition, there exists $V' \in \N_{\V}(\eta)$ such that $\norm{V - V'} \leq \eta$. So, we get
\begin{align*}
| ( P^o_{s,a} - \widehat{P}_{s,a}) \exp(-V\theta)| &\leq  | (\widehat{P}_{s,a}  - P^o_{s,a}) \exp(-V\theta')| \exp(\eta/(1-\gamma)) \\&= | (\widehat{P}_{s,a}  - P^o_{s,a}) (\exp(-V'\theta') \circ \exp(\theta'(V'-V)))| \exp(\eta/(1-\gamma)) \\
&\stackrel{(d)}{\leq} | (\widehat{P}_{s,a}  - P^o_{s,a}) (\exp(-V'\theta'))| \exp(\eta/(1-\gamma)) \exp(\eta/\lambda_{\mathrm{kl}}) \\
&{\leq} \max_{s,a} \max_{V'\in\V}  \max_{\theta'\in \N_\Theta(\eta)}  | (\widehat{P}_{s,a}  - P^o_{s,a}) (\exp(-V'\theta'))| \exp(\eta/(1-\gamma)) \exp(\eta/\lambda_{\mathrm{kl}})
\end{align*} where $(d)$ follows because $\theta'\in\N_\Theta(\eta)\subseteq \Theta$.
Now, taking maximum on both sides with respect to $(s,a)$, $\theta$, and $V$, we get
\begin{align}
\max_{s,a} \max_{\theta\in \Theta} \max_{V\in\V}  | (\widehat{P}_{s,a}  - P^o_{s,a}) \exp(-V\theta)| &\leq \exp(\eta/(1-\gamma)) \exp(\eta/\lambda_{\mathrm{kl}}) \max_{s,a} \max_{V'\in\V} \max_{\theta'\in \N_\Theta(\eta)}  | (\widehat{P}_{s,a}  - P^o_{s,a}) \exp(-V'\theta')|  \nonumber\\
\nonumber
&\stackrel{(e)}{\leq} \exp(\eta/(1-\gamma))
\exp(\eta/\lambda_{\mathrm{kl}})\sqrt{\frac{\log(2|\Ss||\Aa||\N_\Theta(\eta)||\N_\V(\eta)|/\delta)}{2N}}\\
\label{eq:Kl-pf-step3}&\stackrel{(f)}{\leq} \exp(\eta/(1-\gamma))\exp(\eta/\lambda_{\mathrm{kl}}) \sqrt{\frac{|\Ss|\log(18|\Ss||\Aa|/(\delta\eta^2(1-\gamma)\lambda_{\mathrm{kl}}))}{2N}}
\end{align}
with probability greater than $1-\delta$. Here, $(e)$ follows from    Lemma \ref{lem:hoeffding_concentration} with a union bound accounting for $|\N_{\Theta}(\eta)|$, $|\N_{\V}(\eta)|$ and the fact that $\|\exp(-V'\theta')\| \leq 1$, and $(f)$ follows from Lemmas \ref{por:covering_num} and \ref{por:covering_num_real_line}.

Using \eqref{eq:kl-term1-term3-incomplete-bound} -  \eqref{eq:Kl-pf-step3}, we get, with probability greater than $1-\delta$,
\begin{align*}
\| V^* - V^{\pi_k} \| \leq &\frac{2\gamma^{k+1}}{(1-\gamma)^2} + \\& \frac{2\gamma}{(1-\gamma)} \frac{\exp({1}/{(\lambda_{\mathrm{kl}}(1-\gamma))})}{c_{r} (1-\gamma) } \exp(\eta/(1-\gamma))\exp(\eta/\lambda_{\mathrm{kl}}) \sqrt{\frac{|\Ss|\log(18|\Ss||\Aa|/(\delta\eta^2(1-\gamma)\lambda_{\mathrm{kl}}))}{2N}} .
\end{align*}
We can now choose $k, \epsilon, \eta$ to make each of the term on the RHS of the above inequality small. In particular, choosing $\eta= 1$,  $\epsilon\in(0,1/(1-\gamma))$, and $k, N$ satisfying the conditions
\begin{align*}
	    &k \geq K_{0}  = \frac{1}{ \log(1/\gamma)} \cdot \log(\frac{4}{\epsilon (1-\gamma)^2}) \qquad\textnormal{and} \\
	    &N \geq N^{\mathrm{kl}} = \max \bigg\{ \max_{s,a} N'(\delta/(4|\Ss||\Aa|), c_r, P^o_{s,a}), ~ \max_{s,a} N''(\delta/(4|\Ss||\Aa|), c_r, P^o_{s,a}), \\
	    &\hspace{4cm}\frac{8\gamma^2|\Ss|}{c_r^2(1-\gamma)^4 \epsilon^{2}}  \exp(\frac{4+2\lambda_{\mathrm{kl}}}{\lambda_{\mathrm{kl}}(1-\gamma)}) \log(\frac{18 |\Ss|  |\Aa|}{\delta\lambda_{\mathrm{kl}}(1-\gamma)}) \bigg\},
\end{align*} 
we get $\| V^* - V^{\pi_k} \|\leq \epsilon$ with probability greater than $1-\delta$. 
\end{proof}

\subsection{Proof of Theorem \ref{thm:non-robust-policy-suboptimality}}

\begin{proof}
    We consider the deterministic  MDP $(\Ss, \Aa, r, P^o, \gamma)$ shown in Fig.\ref{fig:chain-mdp-nominal} to be the nominal model. We fix $\gamma\in(0.01,1]$ and $s_1=0$. The state space is $\Ss = \{0,1\}$ and action space is $\Aa = \{a_{l}, a_{r}\},$ where $a_{l}$ denotes `move left' and $a_{r}$ denotes `move right' action. Reward for state $1$ and action $a_r$ pair is $r(1, a_{r}) = 1$, for state $0$ and action $a_r$ pair is $r(0, a_{r}) = -100\gamma/99$, and the reward is $0$ for all other $(s,a)$. Transition function $P^{o}$ is deterministic, as indicated by the arrows. 
\begin{figure*}[ht]
    \begin{minipage}{.47\textwidth}
	\centering
	\includegraphics[width=0.5\linewidth]{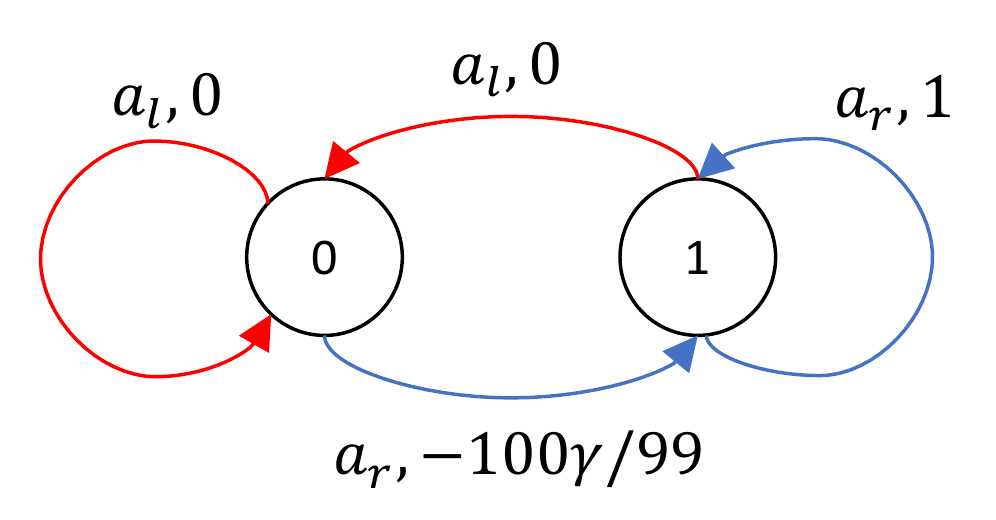}
	\captionof{figure}{Transitions and rewards corresponding to the nominal model $P^o$. The states $\{0,1\}$ are given inside the circles, and the actions $\{a_l,a_r\}$ and  associated  rewards are given on the corresponding  transitions.}
	\label{fig:chain-mdp-nominal}
	\end{minipage}
	\hspace{0.5cm}
	\begin{minipage}{.47\textwidth}
	\centering
	\includegraphics[width=0.5\linewidth]{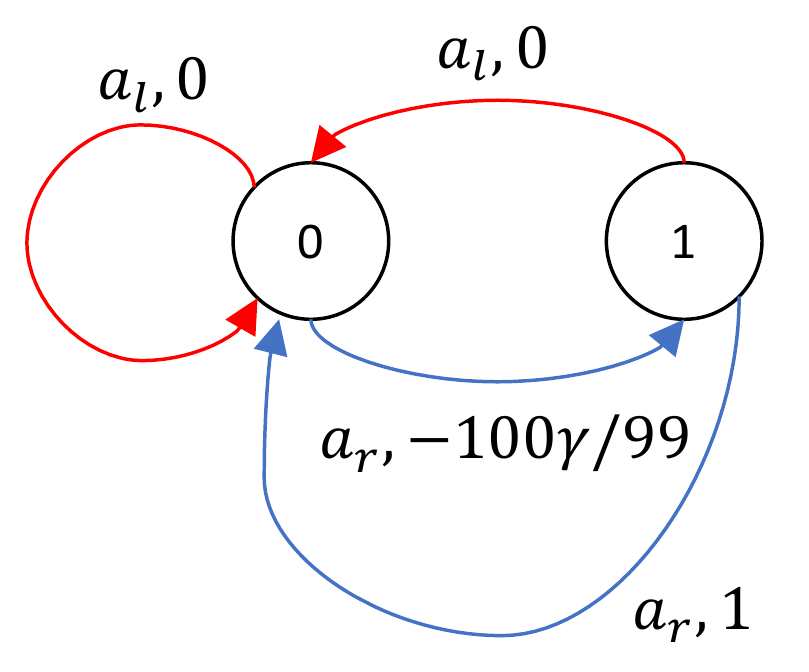}
	\captionof{figure}{Transitions and rewards corresponding to the model $P'$.}
	\label{fig:chain-mdp-p}
	\end{minipage}
\end{figure*}

Similarly, we consider another deterministic model $P'$, as shown in Fig.\ref{fig:chain-mdp-p}. We consider the set $\Pp=\{P^o,P'\}$. 

    It is straight forward to show that taking action $a_r$ in any state  is the optimal non-robust policy $\pi^{o}$ corresponding to the nominal model $P^{o}$. This is obvious if for state $s=1$. For $s = 0$, notice that taking action $a_{l}$ will give a value zero and taking action $a_{r}$ will give a value  $\frac{\gamma}{1-\gamma} - \frac{100\gamma}{99}$. Since $\gamma > 0.01$, taking action $a_{r}$ will give a positive value and hence  is optimal. So, we get 
    \[V_{{\pi}^{o},P^o}(0) = \frac{\gamma}{1-\gamma} - \frac{100\gamma}{99}.\]
    

    We can now compute $V_{{\pi}^{o},P'}(0)$ using the recursive equation
    \[ V_{{\pi}^{o},P'}(0) = - \frac{100\gamma}{99} + \gamma + \gamma^2 V_{{\pi}^{o},P'}(0). \] 
    Solving this, we get $V_{{\pi}^{o},P'}(0) = -\gamma/(99(1-\gamma^2))$.

    Now the robust value of $\pi^{o}$ is given by
    \begin{align*}
        V^{{\pi}^{o}}(0) = \min\{V_{{\pi}^{o},P^o}(0),V_{{\pi}^{o},P'}(0)\} = -\gamma/(99(1-\gamma^2)). 
    \end{align*}
    
    We will now compute the optimal non-robust value from state $0$ of model $P'$. \begin{align*}
        \max_\pi V_{\pi,P'}(0) &= \max \{ V_{(\pi(0)=a_r,\pi(1)=a_r),P'}(0),~ V_{(\pi(0)=a_l,\pi(1)=a_l),P'}(0),~ \\
        &\hspace{2cm}V_{(\pi(0)=a_r,\pi(1)=a_l),P'}(0),~ V_{(\pi(0)=a_l,\pi(1)=a_r),P'}(0)\} \\
        &= \max \{ ~-\frac{\gamma}{99(1-\gamma^2)} ,~ 0,~ - \frac{100\gamma}{99(1+\gamma^2)} ,~0 \} = 0.
    \end{align*}

    Now, we find the optimal robust value $V^{*}(0)$. From the perfect duality result of robust MDP \cite[Theorem 1]{nilim2005robust}, we have
    \begin{align*}
        V^*(0) &= \min \{\max_\pi V_{\pi,P^o}(0), \max_\pi V_{\pi,P'}(0)\} = \min \{V_{{\pi}^{o},P^o}(0),  \max_\pi V_{\pi,P'}(0)\} = 0. 
    \end{align*}
We finally have \begin{align*}
        V^*(0) - V^{{\pi}^{o}}(0) = \frac{\gamma}{99(1-\gamma^2)} \geq \frac{\gamma}{198(1-\gamma)},
    \end{align*} where the inequality follows since $1+\gamma\leq2$. Thus, setting $c=\gamma/198$ and $\gamma_o=0.01$, completes the proof of this theorem.
\end{proof}

\end{document}